\DeclareMathAlphabet{\mathcal}{OMS}{cmsy}{m}{n}
\newtheorem{theorem}{Theorem}[section]
\theoremstyle{definition}
\title{Senior Thesis CS}
\author{yunjin.tong.22 }
\date{January 2024}
\begin{document}

\begin{titlepage}
    \centering
    \vspace*{\stretch{1}}
    {\LARGE Data-Driven Computing Methods for Nonlinear Physics Systems with Geometric Constraints\par}
    \vspace*{\stretch{1}}
    {\Large Yunjin Tong\par}
    \vspace{\stretch{0.5}}
    {\Large Undergraduate Computer Science Thesis\par}
    \vspace{\stretch{0.5}}
    {\large Advised by\par}
    \vspace{\stretch{0.1}}
    {\Large Professor Bo Zhu\par}
    \vspace*{\stretch{1}}
    \includegraphics[width=0.2\linewidth]{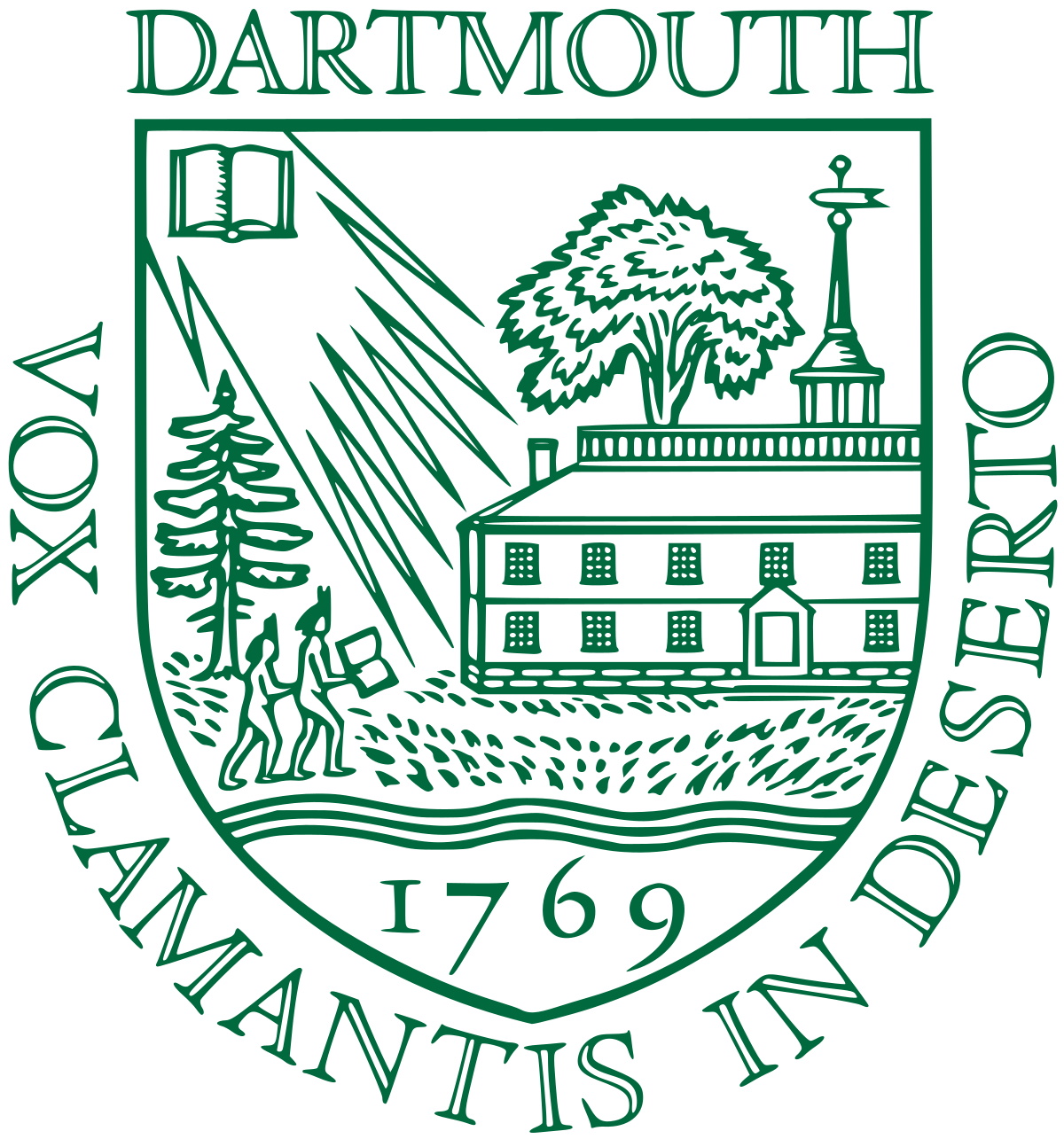}\par
    \vspace*{\stretch{0.1}}
    {\Large Dartmouth College\par}
    {\Large Hanover, New Hampshire\par}
    {\Large June, 2024\par}

\end{titlepage}

\section*{Abstract}
In a landscape where scientific discovery is increasingly driven by data, the integration of machine learning (ML) with traditional scientific methodologies has emerged as a transformative approach. This paper introduces a novel, data-driven framework that synergizes physics-based priors with advanced ML techniques to address the computational and practical limitations inherent in first-principle-based methods and brute-force machine learning methods. Our framework showcases four algorithms, each embedding a specific physics-based prior tailored to a particular class of nonlinear systems, including separable and nonseparable Hamiltonian systems, hyperbolic partial differential equations, and incompressible fluid dynamics. The intrinsic incorporation of physical laws preserves the system's intrinsic symmetries and conservation laws, ensuring solutions are physically plausible and computationally efficient. The integration of these priors also enhances the expressive power of neural networks, enabling them to capture complex patterns typical in physical phenomena that conventional methods often miss. As a result, our models outperform existing data-driven techniques in terms of prediction accuracy, robustness, and predictive capability, particularly in recognizing features absent from the training set, despite relying on small datasets, short training periods, and small sample sizes.
\newpage

\section*{Acknowledgements}

I am deeply grateful for the generous financial support from Dartmouth Undergraduate Advising \& Research, which provided me with the Presidential Scholarship, Sophomore and Junior Research Scholarships, the Leave Term Research Grant, and support through the Women in Science Project. Additionally, I wish to acknowledge the Neukom Scholarship program from Neukom Institute for Computational Science.

I extend my heartfelt thanks to my supervisor, Professor Bo Zhu, for his unwavering support and profound inspiration. I am especially grateful for the opportunities he offered me as a first-year student, which opened my career as a researcher. I wish him a prolific and successful career at Georgia Tech.

I also recognize the invaluable assistance of the team at the Dartmouth Visual Computing Lab. Special thanks to Dr. Shiying Xiong, now an Assistant Professor at Zhejiang University, for his extensive help  in various aspects of my research. He is not only a super talented researchers in computational physics but also a remarkable coworker. Additional thanks go to Xingzhe He for his assistance with deep learning algorithms, among many others in the lab. Without their collective effort and collaboration, these works would not have been possible.

I also want to thank Professor Deeparnab Chakrabarty for his support and inspiration, as well as Professor Soroush Vosoughi and Professor Yaoqing Yang for being on my thesis committee. Additionally, I am grateful to the other professors and students who have taught and helped me at Dartmouth.

\newpage
\section*{List of Publications}

The following papers were published during the completion of my undergraduate studies and will be introduced in this thesis (listed in chronological order):
\begin{enumerate}
    \item Tong, Y., Xiong, S., He, X., Pan, G., \& Zhu, B. (2021). Symplectic Neural Networks in Taylor Series Form for Hamiltonian Systems. \textit{Journal of Computational Physics, 437}, 110325.
    \item Xiong, S., Tong, Y., He, X., Yang, S., Yang, C., \& Zhu, B. (2021). Nonseparable Symplectic Neural Networks. In \textit{Proceedings of the International Conference on Learning Representations}.
    \item Xiong, S., He, X., Tong, Y., Deng, Y., \& Zhu, B. (2023). Neural Vortex Method: from Finite Lagrangian Particles to Infinite Dimensional Eulerian Dynamics. \textit{Computers and Fluids, 258}, 105811.
    \item Tong, Y., Xiong, S., He, X., Yang, S., Wang, Z., Tao, R., Liu, R., \& Zhu, B. (2024). RoeNet: Predicting Discontinuity of Hyperbolic Systems from Continuous Data. \textit{International Journal for Numerical Methods in Engineering, 125}, e7406.
\end{enumerate}

\section*{Author's Contribution}

The work presented in this thesis is the product of scientific collaboration. Here I detail my specific contributions to each project.
For the project listed first \citep{tong2021symplectic}, my responsibilities 
include the initial generation of research ideas, implementation of the methodologies, conducting experiments, and writing the research paper. For the remaining three projects \citep{xiong2020nonseparable,xiong2023neural,tong2024roenet}, my primary roles involved conducting experiments and writing the respective papers. Additionally, for the fourth project \citep{tong2024roenet}, I was involved in idea generation and was responsible of the paper writing and revision. 
\newpage

\tableofcontents

\newpage
\section{Introduction}
From the time of Newton, two principal paradigms have shaped the methodologies of scientific research: the Keplerian paradigm, or the data-driven approach, and the Newtonian paradigm, or the first-principle-based approach \citep{weinan2021dawning}. The first-principle-based approach is fundamental and elegant, but the dilemma we often face is its practicality. There are many time-dependent problems in science, where the equations of motion are too complex for full solution, either because the equations are not certain or because the computational cost is too high. Additionally, for a dynamic system governed by some unknown mechanics, it is challenging to identify governing equations by directly observing the system's state, especially when such observation is partial and the sample data is sparse.

Now, the data-driven approach has become a very powerful tool with the advancement of statistical methods and machine learning (ML). This approach enables us to handle physical systems by statistically exploring their underlying structures. Data-driven approaches have proven their efficacy in uncovering the underlying governing equations of a variety of physical systems, ranging from fluid mechanics \citep{Brunton2020} and wave physics \citep{Hughes2019} to quantum physics \citep{Sellier2019}, thermodynamics \citep{hernandez2020}, and materials science \citep{Teicherta2019}. Moreover, various ML methods have significantly advanced the numerical simulation of complex and high-dimensional dynamical systems. These methods integrate learning paradigms with simulation infrastructures, enhancing the modeling of ordinary differential equations \citep{regazzoni2019}, linear and nonlinear partial differential equations \citep{tong2024roenet,Raissi2018}, high-dimensional partial differential equations \citep{Sirignano2018}, and inverse problems \citep{Raissi2019}, among others.

Despite these advancements, data-driven methods like neural networks, which exhibit remarkable generalization abilities across various fields, face significant challenges. These methods require large, clean datasets and depend heavily on complex, black-box network structures that are highly sensitive to input variations. Additionally, brute-force machine learning with conventional toolkits such as deep neural networks often struggles with the high dimensionality of input-output spaces, the cost of data acquisition, the production of physically implausible results, and the inability to handle extrapolation robustly. These factors make it difficult to predict long-term dynamical behaviors accurately.

To address these challenges, we introduce a novel, data-driven framework designed to make accurate, long-term predictions in a computationally efficient manner. The key innovation lies in incorporating physics-based priors into the learning algorithms so that the physics structure of the underlying system is intrinsically preserved. As a result, our models outperform other state-of-the-art data-driven methods in terms of prediction accuracy, robustness, and predictive capability, particularly in recognizing features absent from the training set. This superior performance is achieved despite relying on smaller datasets, shorter training periods, and limited sample sizes. At the same time, our models are significantly more computationally efficient than traditional first-principles-based methods, while achieving a similar level of accuracy.

This thesis details four algorithms we have developed over time, each incorporating a distinct physics-based prior relevant to a specific type of nonlinear system. The algorithm names, the associated physics priors, and the systems they address are as follows:
\begin{enumerate}
    \item Symplectic Taylor Neural Networks (Taylor-nets): The symplectic structure in separable Hamiltonian systems \citep{tong2021symplectic},
    \item Nonseparable Symplectic Neural Networks (NSSNNs): The symplectic structure in nonseparable Hamiltonian systems \citep{xiong2020nonseparable},
    \item Roe Neural Networks (RoeNet): Hyperbolic Conservation Law in hyperbolic partial differential equations (PDEs) \citep{tong2024roenet},
    \item Neural Vortex Method (NVM): Helmholtz’s Theorems in incompressible fluid dynamics \citep{xiong2023neural}.
\end{enumerate}

Overall, the key advantages and contributions of our methodologies are as follows:
\begin{itemize}
    \item \textbf{Preservation of Intrinsic Symmetries and Conservation Laws:} Our methodologies integrate physics-based priors within the learning algorithms, which significantly narrows the solution space. This reduction not only streamlines the computational demands but also preserves the mathematical symmetries and physical conservation laws inherent in the systems being modeled. Such an approach ensures that the generated solutions are not only efficient but also robust and aligned with physical reality, enhancing both the reliability and validity of the predictions.
    \item \textbf{Enhanced Expressive Power of Neural Networks:} By embedding physics-based structures into our models, we expand the network’s capacity to capture and reproduce complex patterns that are typical in solutions to physical phenomena. Conventional deep neural networks often struggle to identify such patterns when they are not represented within the training dataset. Our approach supports generalized solutions to PDEs and expands the solution space, allowing for a more comprehensive encapsulation of the potential physical behaviors, significantly improving the model's applicability and predictive accuracy.

\end{itemize}

The thesis will be organized into several key sections: an introductory section and a related work section that outline the research background; a methodology section that elaborates on the mathematical foundations, including an introduction to the supervised learning and numerical methods we used to develop our methodologies; an implementation section that details the algorithm design and proofs of four methodologies respectively; and a results section that summarizes the key implementation and experimental findings. The paper will conclude with a discussion on the implications of the results and potential avenues for future research.

\begin{table}[h]
  \caption{Overview of the key concepts related to four methodologies.}
  \centering
  \renewcommand{\arraystretch}{1.4} 
  \begin{tabular}{p{1.9cm} p{2.5cm} p{2.5cm} p{2.8cm} p{2.8cm}}

  \hline
   & \textbf{Taylor-nets} & \textbf{NSSNNs} & \textbf{RoeNet} & \textbf{NVM} \\
  \hline
  Physics System & Separable Hamiltonians & Nonseparable Hamiltonians & Hyperbolic PDEs & Incompressible Fluid Dynamics \\
    \hline
  \parbox[t]{2cm}{Prior\\ Embedded} & Symplectic Structure & Symplectic Structure & Hyperbolic Conservation Law & Helmholtz's Theorems \\
    \hline
  Solver & Separable Symplectic Integrator & Nonseparable Symplectic Integrator & Roe Solver & Lagrangian Vortex Method \\
  \hline
\parbox[t]{2cm}{Key\\ Advantages} & \multicolumn{2}{p{5.2cm}}{Accurately approximate the continuous-time evolution over a long term} & Predict future discontinuities with short-term continuous data & Reconstruct continuous vortex dynamics with a small number of vortex particles \\
  \hline
  \end{tabular}
  \label{tab:concepts}
\end{table}

Table \ref{tab:concepts} summarizes the key concepts related to the four methods, including the specific physics systems they model, the type of physical principles or priors they embed, the integrative techniques they employ, and the primary advantages each method offers. These comparative insights provide an at-a-glance understanding of the distinct capabilities and applications of each method. The details will be addressed comprehensively in Section \ref{num} and Section \ref{imp}.

\section{Related Work}

\paragraph{Neural Networks for Hamiltonian Systems.}
Greydanus et al. introduce Hamiltonian Neural Networks (HNNs) to preserve the Hamiltonian energy of systems by reformulating the loss function \citep{Greydanus2019}. Inspired by HNNs, a series of methods that intrinsically embed a symplectic integrator into the recurrent neural network architecture were proposed, including SRNN \citep{Chen2020}, and SSINN \citep{dipietro2020sparse}. Methods like HNN face two primary challenges: they require the temporal derivatives of system momentum and position to compute the loss function, which are hard to obtain from real-world systems, and they do not strictly preserve the symplectic structure as their symplectomorphism is governed by the loss function. Our model, Taylor-net \citep{tong2021symplectic}, addresses these limitations by integrating a solver into the network architecture to avoid the need for time derivatives and by embedding a symmetrical structure directly within the neural networks, rather than adjusting the loss function. Moreover, these methods have been extended, via combination with graph networks \citep{sanchez2019hamiltonian,battaglia2016interaction}, to address large-scale N-body problems where interactions are driven by forces between particle pairs.

While the above methods are all designed to solve separable Hamiltonian systems, Jin et al. proposed SympNet, which constructs symplectic mappings of system variables across neighboring time steps to handle both separable and nonseparable Hamiltonian systems \citep{Jin2020}. However, the parameter scalability of SympNet, growing quadratically with the system size $O(N^2)$, poses challenges for application to high-dimensional N-body problems. Our model, NSSNN, addresses these issues with a novel network architecture tailored for nonseparable systems, which significantly reduces the complexity of parameter scaling \citep{xiong2020nonseparable}. Additionally, Hamiltonian-based neural networks have been adapted for broader applications. Toth et al. developed the Hamiltonian Generative Network (HGN) to infer Hamiltonian dynamics from high-dimensional observations, such as image data \citep{Toth2020}. Furthermore, Zhong et al. introduced Symplectic ODE-Net (SymODEN), which incorporates an external control term into the standard Hamiltonian framework, enhancing the model's applicability to controlled dynamical systems \citep{Zhong2020Symplectic}.

\paragraph{Neural Networks for Discontinuous Functions.}

The use of deep learning networks to approximate discontinuous functions is well-supported theoretically, as highlighted in various studies on H\"{o}lder spaces \cite{Yarosky2016}, piecewise smooth functions \cite{Petersen2017}, linear estimators \cite{Imaizumi2019}, and highly adaptive, spatially anisotropic target functions \cite{Suzuki2018}. Building on these foundations, Physics-Informed Neural Networks (PINNs) were introduced by Raissi et al. as a data-driven approach to solving nonlinear problems \cite{Raissi2017}, leveraging the well-kown capability of deep neural networks to act as universal function approximators \cite{Hornik1989}. Among their key attributes, PINNs ensure the preservation of symmetry, invariance, and conservation principles that are inherent in the physical laws governing the observed data \cite{Zhang2019}. Michoski et al. demonstrated that PINNs could capture irregular solutions to PDEs without the need for any regularization \cite{Michoski2019}. Additionally, Mao et al. utilized PINNs to approximate solutions for high-speed flows by integrating the Euler equations with initial and boundary conditions into the loss function \cite{Mao2020}. However, while these studies demonstrate the robust capabilities of PINNs, they often do not address extrapolation beyond the training set, a critical aspect for ensuring the generalizability of the models to a wider range of scenarios.

\paragraph{Neural Networks for Fluid Dynamics.}
Recent advancements in fluid dynamics analysis have increasingly leveraged data-driven approaches powered by machine learning \citep{duraisamy2019turbulence,xie2018tempogan,chu2017data}. Recognizing the limitations in traditional brute-force machine learning methods, current research efforts are increasingly focused on integrating physical priors into learning algorithms, aiming to equip neural networks with a foundational understanding of physical laws, rather than approaching the data naively \citep{anderson1996comparison,crutchfield1987equations, daniels2015automated,wang2017physics,hammond2022machine,xu2022towards}. Significant efforts have been made to encode these physical constraints efficiently, such as incorporating the Navier-Stokes (NS) equations \citep{Raissi2018}, modeling incompressibility constraints \citep{mohan2020}, and mapping dynamics of wave phenomena onto recurrent neural network computations \citep{Hughes2019}. Moreover, understanding complex fluid dynamics through machine learning involves embedding the structure of partial differential equations (PDEs) within neural network architectures \citep{yang2019predictive,raissi2020hidden,belbute2020combining,lye2020deep,white2019neural,mohan2020embedding}. Ideally, these machine learning models designed to solve PDEs should be able to evolve the flow fields independently, obttaining initial-condition invariance without the need for a specific solver. However, the high dimensionality of the problems and insufficient supervisory data continue to pose significant challenges.

\section{Methodology}
\subsection{Supervised learning}
We used supervised learning for all of our models. Supervised learning is a subset of machine learning where an algorithm learns a function that maps an input to an output based on example input-output pairs. It infers a function from labeled training data consisting of a set of training examples. Each example is a pair consisting of an input object and a desired output value. The supervised learning algorithm analyzes the training data and produces an inferred function, which can be used for mapping new examples. Sequential steps involved in developing a supervised learning model, from determining the type of training dataset to evaluating the model's accuracy are:

\begin{enumerate}
    \item \textbf{Determine the Type of Training Dataset:} Identify whether the problem is a classification or regression to select the appropriate type of training dataset.
    
    \item \textbf{Collect/Gather the Labelled Training Data:} Assemble a dataset where each instance is tagged with the correct answer or outcome.
    
    \item \textbf{Split the Training Dataset:} Divide the dataset into three parts:
    \begin{itemize}
        \item Training dataset: used to train the model.
        \item Test dataset: used to test the model's predictions.
        \item Validation dataset: used to tune the model's hyperparameters.
    \end{itemize}
    
    \item \textbf{Determine the Input Features:} Select the features of the training dataset that contain sufficient information for the model to accurately predict the output.
    
    \item \textbf{Determine the Suitable Algorithm:} Choose an appropriate algorithm for the model based on the problem type.
    
    \item \textbf{Execute the Algorithm on the Training Dataset:} Train the model using the selected algorithm on the training dataset. Utilize the validation set to adjust control parameters as needed.
    
    \item \textbf{Evaluate the Model's Accuracy:} Test the model using the test dataset to assess its accuracy. A model that correctly predicts the output indicates high accuracy.
\end{enumerate}

\subsubsection{Optimizer}

In the context of neural networks, optimizers are crucial for minimizing the loss function, i.e., the difference between the actual and predicted outputs. One of the popular optimizers is the Adam optimizer \citep{kingma2014adam}, which combines the advantages of two other extensions of stochastic gradient descent, namely Adaptive Gradient Algorithm and Root Mean Square Propagation. The Adam optimizer's update equations are given by:

\begin{align*}
m_t &= \beta_1 m_{t-1} + (1 - \beta_1) g_t \\
v_t &= \beta_2 v_{t-1} + (1 - \beta_2) g_t^2 \\
\hat{m}_t &= \frac{m_t}{1 - \beta_1^t} \\
\hat{v}_t &= \frac{v_t}{1 - \beta_2^t} \\
\theta_{t+1} &= \theta_t - \frac{\alpha \hat{m}_t}{\sqrt{\hat{v}_t} + \epsilon}
\end{align*}

where $\theta$ represents the parameters of the model, $g_t$ is the gradient of the loss function with respect to the parameters at timestep $t$, $m_t$ and $v_t$ are estimates of the first and the second moments of the gradients, respectively. $\alpha$ is the learning rate, $\beta_1, \beta_2$, and $\epsilon$ are hyperparameters. 

\subsubsection{Loss Functions}

The choice of loss function is pivotal in guiding the training of the model towards its objective. In our methods, we use several common loss functions in supervised learning, including:

\paragraph{L1 Loss (Absolute Loss)} Defined as $L(y, \hat{y}) = \sum |y - \hat{y}|$, where $y$ is the true value and $\hat{y}$ is the predicted value. 

\paragraph{L2 Loss (Squared Loss)} Given by $L(y, \hat{y}) = \sum (y - \hat{y})^2$. This loss function is sensitive to outliers as it squares the differences, hence penalizing larger errors more.
\paragraph{Cross-Entropy Loss}

The Cross-Entropy Loss is widely used in classification tasks to measure the performance of a classification model whose output is a probability value between 0 and 1.
The Cross-Entropy Loss formula is given by:
\[
L(y, \hat{y}) = -\frac{1}{N} \sum_{i=1}^{N} \left[ y_i \log(\hat{y}_i) + (1 - y_i) \log(1 - \hat{y}_i) \right]
\]
where $L(y, \hat{y})$ is the loss function, $N$ is the number of samples, $y_i$ is the actual label of the $i$-th sample, and $\hat{y}_i$ is the predicted probability for the $i$-th sample.

For multi-class classification, the generalized formula is:
\[
L(y, \hat{y}) = -\frac{1}{N} \sum_{i=1}^{N} \sum_{c=1}^{M} y_{ic} \log(\hat{y}_{ic})
\]
where $M$ is the number of classes, $y_{ic}$ indicates whether class $c$ is the correct classification for observation $i$, and $\hat{y}_{ic}$ is the predicted probability that observation $i$ is of class $c$.

\paragraph{Focal Loss}
Focal Loss is an adapted version of Cross-Entropy Loss, which addresses the problem of class imbalance by focusing more on hard-to-classify examples. It is particularly useful in scenarios where there is a large class imbalance. The formula for Focal Loss is given by:
\[
L(y, \hat{y}) = -\alpha_t (1 - \hat{y}_t)^{\gamma} \log(\hat{y}_t)
\]
where $\alpha_t$ is a weighting factor for the class $t$ to counteract class imbalance, $\gamma$ is a focusing parameter that adjusts the rate at which easy examples are down-weighted, $\hat{y}_t$ is the predicted probability of the class with label $t$, and $(1 - \hat{y}_t)^{\gamma}$ reduces the loss for well-classified examples, putting more focus on hard, misclassified examples.
Focal Loss is particularly useful for training on datasets where some classes are much more frequent than others, helping to improve the robustness and performance of classification models in imbalanced datasets.

\subsubsection{Activation Functions}

Activation functions are non-linear functions applied to the output of a neuron in a neural network. They decide whether a neuron should be activated or not, helping the network learn complex patterns in the data.

\paragraph{ReLU}
One of the most popular activation functions is the Rectified Linear Unit (ReLU). It is defined as:
\[
f(x) = \max(0, x)
\]
where $x$ is the input to the neuron. ReLU is favored for its simplicity and efficiency, promoting faster convergence in training due to its linear, non-saturating form.

Next, we will outline the various models that were employed in the development of our model.

\subsubsection{Residual Networks (ResNets) and Residual Blocks (ResBlocks)}
ResNets are designed to enable training of very deep neural networks through the introduction of Residual Blocks (ResBlocks), which use skip connections or shortcuts to jump over some layers \citep{he2016resnet}. ResNets have been proven in numerous research studies to be a neural network architecture highly suitable for deep learning and computer vision. It offers distinctive advantages in mitigating problems like gradient vanishing during network training.

\paragraph{ResBlocks}
A Residual Block allows the gradient to flow through the network directly, without passing through non-linear activations, by using skip connections. This is mathematically represented as:
\[
\mathbf{h}_{\text{out}} = \mathcal{F}(\mathbf{h}_{\text{in}}, \{\theta_i\}) + \mathbf{h}_{\text{in}}
\]
where $\mathbf{h}_{\text{in}}$ is the input to the ResBlock, $\mathcal{F}(\mathbf{h}_{\text{in}}, \{\theta_i\})$ represents the residual mapping to be learned by layers of the ResBlock, and $\mathbf{h}_{\text{out}}$ is the output of the ResBlock. The addition operation is element-wise, allowing the network to learn identity mappings efficiently, which is crucial for training deep networks.

\subsubsection{Neural Ordinary Differential Equations (Neural ODEs) and the Adjoint Method} \label{NeuralODE}

Neural ODEs are a class of models that represent the continuous dynamics of hidden states using differential equations \citep{chen2018neural}. Unlike traditional neural networks that apply a discrete sequence of transformations, Neural ODEs model the derivative of the hidden state as a continuous transformation:
\begin{equation}
\frac{d\mathbf{h}(t)}{dt} = f(\mathbf{h}(t), t, \theta)\label{ODE}
\end{equation}
where $\mathbf{h}(t)$ is the hidden state at time $t$, $f$ is a neural network parameterized by $\theta$ defining the time derivative of the hidden state, making the model capable of learning continuous-time dynamics.

At the heart of the model is that
under the perspective of viewing a neural network as a dynamic system, we can treat the chain of residual blocks in a neural network as the solution of an ordinary differential equation (ODE) with the Euler method. Given a residual network that consists of sequence of transformations
\begin{equation}
  \bm{h}_{t+1}=\bm{h}_{t}+f(\bm{h}_{t}, \theta_t),
  \label{eq:neural1}
\end{equation}
the idea is to parameterize the continuous dynamics using an ODE specified by the neural network specified in~\eqref{ODE}.

In a Neural ODE framework, the evolution of the hidden state \( z \) is governed by an ODE parameterized by a neural network:
\begin{equation}
    \frac{d z(t)}{dt} = f(z(t), t, \theta),
\end{equation}
where \( t \) is time, \( \theta \) represents the parameters of the neural network, and \( f \) is a function approximated by the neural network defining the dynamics of \( z \).

To optimize Neural ODEs, the adjoint method is utilized, providing an efficient means for calculating gradients with respect to the parameters \( \theta \) during backpropagation \citep{pontryagin2018mathematical}. Rather than differentiating through the ODE solver, we solve the adjoint ODE defined as:
\begin{equation}
\frac{d \mathbf{a}(t)}{dt} = -\mathbf{a}(t)^\top \frac{\partial f}{\partial \mathbf{h}(t)},
\end{equation}
where \( \mathbf{a}(t) = \frac{dL}{d\mathbf{h}(t)} \) is the gradient of the loss function \( L \) with respect to the hidden state.

The gradient of the loss with respect to the parameters is then obtained by integrating:
\begin{equation}
\frac{dL}{d\theta} = \int_{t_1}^{t_0} \mathbf{a}(t)^\top \frac{\partial f}{\partial \theta} \, dt,
\end{equation}
over the interval from \( t_0 \) to \( t_1 \), the duration of the forward pass. The adjoint state \( \mathbf{a}(t) \) is initialized at the end of the forward pass and integrated backward in time to obtain the necessary gradients for parameter updates.

\subsection{Numerical Methods}\label{num}

First, we present four methods for solving ordinary differential equations (ODEs), which include the Euler method, Runge-Kutta method, Symplectic Integrator, and Non-separable Symplectic Integrator.

\subsubsection{Euler Method}
The Euler method represents one of the most straightforward numerical strategies for approximating solutions to ODEs. As a first-order numerical method, it provides an initial approach for solving initial value problems defined by \( \frac{d\bm{y}}{dt} = \bm{f}(t, \bm{y}) \) with the initial condition \( \bm{y}(t_0) = \bm{y}_0 \). Despite its simplicity, the Euler method is fundamental in the introduction to more sophisticated numerical methods for differential equations.

This method calculates the next state vector \( \bm{y} \) by proceeding in the direction of the derivative \( \bm{f}(t, \bm{y}) \), scaled by the timestep \( dt \). The updated state \( \bm{y} \) at time \( t + dt \) is given by:

\[
\bm{y}(t + dt) = \bm{y}(t) + \bm{f}(t, \bm{y}(t)) \cdot dt
\]

As a consequence of its first-order accuracy, the local truncation error for the Euler method is of the order \(O(dt^2)\), while the global error is of the order \(O(dt)\). This relatively large error suggests that while the Euler method can be beneficial for straightforward problems and educational purposes, it may not be the best choice for scenarios that demand high precision over extended durations.

\subsubsection{Runge-Kutta Method}
The Runge-Kutta methods are a prominent family of iterative techniques for the numerical resolution of ODEs. The fourth-order Runge-Kutta method, commonly referred to as RK4, is particularly renowned for its balance between computational efficiency and accuracy. This method is applied to approximate the solution of an initial value problem defined by the ODE \( \frac{d\bm{y}}{dt} = \bm{f}(t, \bm{y}) \) with the initial condition \( \bm{y}(t_0) = \bm{y}_0 \).

RK4 progresses the solution by computing a weighted average of four increments, where each increment evaluates the derivative \( \bm{f}(t, \bm{y}) \)at various points within the timestep \( dt \). The solution \( \bm{y} \) at a subsequent time \( t + dt \) is determined using the formula:

\[
\bm{y}(t + dt) = \bm{y}(t) + \frac{1}{6}(\bm{k}_1 + 2\bm{k}_2 + 2\bm{k}_3 + \bm{k}_4)
\]

with the increments given by:

\[
\begin{aligned}
\bm{k}_1 &= \bm{f}(t, \bm{y}) \cdot dt, \\
\bm{k}_2 &= \bm{f}\left(t + \frac{dt}{2}, \bm{y} + \frac{\bm{k}_1}{2}\right) \cdot dt, \\
\bm{k}_3 &= \bm{f}\left(t + \frac{dt}{2}, \bm{y} + \frac{\bm{k}_2}{2}\right) \cdot dt, \\
\bm{k}_4 &= \bm{f}(t + dt, \bm{y} + \bm{k}_3) \cdot dt.
\end{aligned}
\]

As a fourth-order method, the RK4 achieves a local truncation error of the order \(O(dt^5)\) and a global error of the order \(O(dt^4)\). This substantial accuracy renders the RK4 method highly effective for a broad spectrum of applications, offering an excellent trade-off between the computational demands and the precision of the solution.

\subsubsection{Separable Symplectic Integrator} \label{syminte}

Symplectic integrators are a class of numerical integration schemes specifically designed for simulating Hamiltonian systems.

A Hamiltonian system is characterized by $N$ pairs of canonical coordinates, denoted by generalized positions $\bm{q}=(q_1,q_2,\cdots,q_N)$ and generalized momenta $\bm{p}=(p_1,p_2,...p_N)$. The evolution of these coordinates over time is governed by Hamilton's equations, expressed as

\begin{equation}
\begin{dcases}
\frac{\textrm{d} \bm{q}}{\textrm{d} t} = \frac{\partial \mathcal {H}}{\partial \bm{p}} ,\\
\frac{\textrm{d} \bm{p}}{\textrm{d} t} =-\frac{\partial \mathcal {H}}{\partial \bm{q}},
\end{dcases}
\label{eq:Hamilton}
\end{equation}

with the initial condition

\begin{equation}
(\bm{q}(t_0),\bm{p}(t_0)) = (\bm q_0,\bm p_0).
\label{eq:intH}
\end{equation}

In a general setting, $\bm{q}=(q_1,q_2,\cdots,q_N)$ represents the positions and $\bm{p}=(p_1,p_2,...p_N)$ denotes their momentum. Function $\mathcal H = \mathcal H(\bm q, \bm p)$ is the Hamiltonian, which corresponds to the total energy of the system.

In a seperable Hamiltonian system, the Hamiltonian \(\mathcal {H}\) can be split into a kinetic energy part \(T(\bm{p})\) and a potential energy part \(V(\bm{q})\).
Consequently, the Hamiltonian  of a separable Hamiltonian system can e expressed in this form:

\begin{equation}
   \mathcal {H}(\bm{q}, \bm{p})= T(\bm{p}) + V(\bm{q}).
   \label{eq:Hpq}
 \end{equation}

The Symplectic integrators are distinguished by their ability to preserve the symplectic structure of phase space, an essential property for ensuring the long-term stability and accuracy of the simulation. By conserving quantities analogous to energy, these methods avoid the numerical dissipation typical of other numerical schemes, making them particularly well-suited for simulating dynamical systems over extended periods.

The specific Symplectic integrators we use is the fourth-order symplectic integrator, as described in the context of Hamiltonian systems and notably referenced in works by Forest and Ruth \citep{Forest1990} and Yoshida \cite{Yoshida1990}. It operates by applying a sequence of operations that integrate the system's equations of motion over a timestep \(dt\) while preserving the symplectic geometry of phase space. This preservation is crucial for accurately simulating the long-term behavior of Hamiltonian systems. The integrator is specifically designed for separable Hamiltonian systems shown in eqaution \eqref{eq:Hpq}. The fourth-order symplectic integrator updates the system's state over a time step \(dt\) by applying a sequence of operations that preserve the symplectic structure. The procedure is as follows:

1. Initialize with \((\bm{q}_0, \bm{p}_0)\) at \(t = t_0\).

2. For each time step \(dt\), update \((\bm{q}, \bm{p})\) through the following sequence of operations:
\begin{enumerate}
\item For each step \(j\) from 1 to 4, execute the following updates:
  \begin{itemize}
    \item Update momentum \(\bm{p}\) by a fraction of the time step: 
    \begin{equation}
       \bm{p} = \bm{p} - d_j \nabla V(\bm{q}) \cdot dt. 
    \end{equation}
    \item Update position \(\bm{q}\) by a fraction of the time step:
     \begin{equation}
       \bm{q} = \bm{q} + c_j \nabla T(\bm{p}) \cdot dt.
    \end{equation}

  \end{itemize}
\end{enumerate}

The coefficients \(c_j\) and \(d_j\) are chosen to eliminate lower-order error terms, ensuring fourth-order accuracy. These coefficients are typically defined as \citep{Forest1990,Yoshida1990,Candy1991}:

\begin{equation}
\begin{aligned}
c_{1}&=c_{4}={\frac {1}{2(2-2^{1/3})}},&c_{2}&=c_{3}={\frac {1-2^{1/3}}{2(2-2^{1/3})}},&\\
d_{1}&=d_{3}={\frac {1}{2-2^{1/3}}},&d_{2}&=-{\frac {2^{1/3}}{2-2^{1/3}}},& d_{4}=0.
\end{aligned}
\label{coeff}
\end{equation}

Repeat these steps for each time step \(dt\), iteratively advancing the system from \((\bm{q}_0, \bm{p}_0)\) at \(t_0\) to \((\bm{q}_n, \bm{p}_n)\) at \(t_0 + n \cdot dt\), where \(n\) is the number of time steps.

The fourth-order symplectic integrator is characterized by its fourth-order accuracy in the numerical simulation of Hamiltonian systems. This indicates that the local truncation error of the method is of the order \(O(dt^5)\), implying that the error introduced in a single timestep decreases as the fifth power of the timestep size. Consequently, the global error, or the cumulative error over a fixed interval of time, is of the order \(O(dt^4)\). Such high-order accuracy is especially beneficial for simulations requiring long-term stability and precision, as it permits the use of relatively large timestep sizes while maintaining a low overall numerical error.

\subsubsection{Nonseparable Symplectic Integrator}

Given a Hamiltonian system described in \eqref{eq:Hamilton} with initial condition \eqref{eq:intH}, we now consider a more genral case, an arbitrary separable and nonseparable Hamiltonian system.
In the original research of \cite{Tao2016} in computational physics, a generic, high-order, explicit and symplectic time integrator was proposed to solve (\ref{eq:Hamilton}) of an arbitrary separable and nonseparable Hamiltonian $\mathcal{H}$. This is implemented by considering an augmented Hamiltonian
\begin{equation}
    \overline{\mathcal{H}}(\bm q, \bm p, \bm x, \bm y) :=\mathcal{H}_A + \mathcal{H}_B + \omega \mathcal{H}_C
    \label{eq:overlineH}
\end{equation}
with
\begin{equation}
    \mathcal{H}_A = \mathcal{H} (\bm q, \bm y),~~\mathcal{H}_B = \mathcal{H} (\bm x, \bm p),~~\mathcal{H}_C = \frac{1}{2} \left(\|\bm q - \bm x\|_2^2+\|\bm p - \bm y\|_2^2\right)
\end{equation}
in an extended phase space with symplectic two form $\textrm{d} \bm q \wedge
\textrm{d} \bm p +\textrm{d} \bm x \wedge
\textrm{d} \bm y$, where $\omega$ is a constant that controls the
binding of the original system and the artificial restraint.

Notice that the Hamilton's equations for $\overline{\mathcal H}$
\begin{equation}
\begin{dcases}
\frac{\textrm{d} \bm{q}}{\textrm{d} t} = \frac{\partial \overline{\mathcal {H}}}{\partial \bm{p}} =\frac{\partial \mathcal {H}(\bm x, \bm p)}{\partial \bm{p}}+\omega (\bm p - \bm y),\\
\frac{\textrm{d} \bm{p}}{\textrm{d} t} =-\frac{\partial \overline{\mathcal {H}}}{\partial \bm{q}}=-\frac{\partial \mathcal {H}(\bm q, \bm y)}{\partial \bm{q}}-\omega (\bm q - \bm x),\\
\frac{\textrm{d} \bm{x}}{\textrm{d} t} = \frac{\partial \overline{\mathcal {H}}}{\partial \bm{y}} =\frac{\partial \mathcal {H}(\bm q, \bm y)}{\partial \bm{y}}-\omega (\bm p - \bm y),\\
\frac{\textrm{d} \bm{y}}{\textrm{d} t} =-\frac{\partial \overline{\mathcal {H}}}{\partial \bm{x}}=-\frac{\partial \mathcal {H}(\bm x, \bm p)}{\partial \bm{x}}+\omega (\bm q - \bm x),\\
\end{dcases}
\label{eq:overlineHamilton}
\end{equation}
with the initial condition $(\bm{q},\bm{p},\bm{x},\bm{y})|_{t=t_0} = (\bm q_0,\bm p_0,\bm q_0,\bm p_0)$ have the same exact solution as (\ref{eq:Hamilton}) in the sense that $(\bm{q},\bm{p},\bm{x},\bm{y}) = (\bm q,\bm p,\bm q,\bm p)$. Hence, we can get the solution of (\ref{eq:Hamilton}) by solving (\ref{eq:overlineHamilton}). The coefficient $\omega$ acts as a regularizer, which stabilizes the numerical results. 

It is possible to construct high-order symplectic integrators for $\overline{\mathcal {H}}$ with explicit updates. Denote respectively by
$\bm \phi_{1}^{\delta}(\bm q,\bm p,\bm x,\bm y)$, $\bm \phi_{2}^{\delta}(\bm q,\bm p,\bm x,\bm y)$, and $\bm \phi_{3}^{\delta}(\bm q,\bm p,\bm x,\bm y)$, which are the time-$\delta$ flow of $\mathcal {H_A}$, $\mathcal {H_B}$, $\omega \mathcal {H_C}$.
$\bm \phi_{1}^{\delta}$, $\bm \phi_{2}^{\delta}$, and $\bm \phi_{3}^{\delta}$ are given by

\begin{equation}
    \begin{bmatrix}
    \bm q\\
    \bm p-\delta [\partial \mathcal{H}_{\theta}(\bm q,\bm y)/\partial \bm q ]\\
    \bm x+\delta [\partial \mathcal{H}_{\theta}(\bm q,\bm y)/\partial \bm p ]\\
    \bm y
    \end{bmatrix},~
    \begin{bmatrix}
    \bm q+\delta [\partial \mathcal{H}_{\theta}(\bm x,\bm p)/\partial \bm p ]\\
    \bm p\\
    \bm x\\
    \bm y-\delta [\partial \mathcal{H}_{\theta}(\bm x,\bm p)/\partial \bm q ]
    \end{bmatrix},~\textrm{and}~\frac12 \begin{bmatrix}
    \begin{pmatrix}
    \bm q+\bm x\\
    \bm p+\bm y\\
    \end{pmatrix}+\bm R^\delta \begin{pmatrix}
    \bm q-\bm x\\
    \bm p-\bm y\\
    \end{pmatrix}\\
    \begin{pmatrix}
   \bm q+\bm x\\
    \bm p+\bm y\\
    \end{pmatrix}-\bm R^\delta \begin{pmatrix}
    \bm q-\bm x\\
    \bm p-\bm y\\
    \end{pmatrix}\\
    \end{bmatrix},
    \label{eq:phi}
\end{equation}
respectively. Here
\begin{equation}
    \bm R^\delta := \begin{bmatrix}
    \cos (2 \omega \delta) \bm I & \sin (2 \omega \delta) \bm I\\
    -\sin (2 \omega \delta) \bm I&\cos (2 \omega \delta) \bm I
    \end{bmatrix},~~\textrm{where} ~\bm I~ \textrm{is a identity matrix}.
\end{equation}

We remark that $\bm x$ and $\bm y$ are just auxiliary variables, which are theoretically equal to $\bm q$ and $\bm p$.  

Then we construct a numerical integrator that approximates $\overline{\mathcal {H}}$  by composing these
maps: it is well known that
\begin{equation}
    (\bm q_i,\bm p_i,\bm x_i,\bm y_i) = \bm \phi_1^{\textrm{d} t/2}\circ \bm \phi_2^{\textrm{d} t/2}\circ  \bm \phi_3^{\textrm{d} t}\circ \bm \phi_2^{\textrm{d} t/2}\circ \bm \phi_1^{\textrm{d} t/2}\circ (\bm q_{i-1},\bm p_{i-1},\bm x_{i-1},\bm y_{i-1})
\end{equation}

commonly named as Strang splitting, has a 3rd-order local error (thus a 2nd-order
method), and is a symmetric method.

Next, we introduce two methods for solving partial differential equations (PDEs), which are the Roe solver and Lagrangian Vortex Method.
\subsubsection{Roe Solver}

In continuum mechanics, a one-dimensional hyperbolic conservation law is a first-order quasilinear hyperbolic PDE
\begin{equation}
\frac{\partial \bm u}{\partial t} + \frac{\partial \bm F(\bm u)}{\partial x} = 0,
\label{eq:conserv_u}
\end{equation}
with an initial condition
\begin{equation}
    \bm u(t=t_0,x) = \bm u_0(x),
    \label{eq:initial_u}
\end{equation}
and a proper boundary condition.
Here the $N_c$-component vector $\bm u = [u^{(1)},u^{(2)},\cdots,u^{(N_c)}]^T$ is the conserved quantity, $t\in T =[t_0,t_1]$ denotes the time variable, $x$ denotes the spatial coordinate in a computational domain $\Omega$, and $\bm F=[F^{(1)},F^{(2)},\cdots,F^{(N_c)}]^T$ is a $N_c$-component flux function. The conservation laws described by \eqref{eq:conserv_u} are fundamental in continuum mechanics, such as mass conservation, momentum conservation, and energy conservation in fluid mechanics \cite{WuMaZhou2015}.

Equation \eqref{eq:conserv_u} can also be expressed in a weak form, which extends the class of admissible solutions to include discontinuous solutions. Specifically, by defining an arbitrary test function $\phi(t,x)$ that is continuously differentiable both in time and space with compact support,
and integrating \eqref{eq:conserv_u}$\times\phi$ in the space-time domain $T\times\Omega$, the weak form of \eqref{eq:conserv_u} is derived as
\begin{equation}
    \int_{T\times\Omega}\left(\bm u\frac{\partial \phi}{\partial t}+ \bm F \frac{\partial \phi}{\partial x}\right)  \textrm{d}t \textrm{d}x = 0.
    \label{eq:phiint}
\end{equation}
We remark that, with generalized Stokes theorem, all the partial derivatives of $\bm u$ and $\bm F$ in \eqref{eq:conserv_u} have been passed on to the test function $\phi$ in \eqref{eq:phiint}, which with the former hypothesis is sufficiently smooth to admit these derivatives \cite{Evans2010}.
In the absence of ambiguity, we refer to the solution of \eqref{eq:conserv_u} below as a weak solution that satisfies \eqref{eq:phiint}.

In addition, \eqref{eq:conserv_u} can be written in a high dimensional form
\begin{equation}
\frac{\partial \bm u}{\partial t} + \sum_{i=1}^{N_d}\frac{\partial \bm F_i(\bm u)}{\partial x_i}=\bm 0,
\label{eq:conserv_u_highdim}
\end{equation}
where $x_1,x_2,\cdots,x_{N_d}$ denote the $N_d$-dimensional spatial coordinates. Since every dimension in the second term of \eqref{eq:conserv_u_highdim}, namely $\partial \bm F_i(\bm u)/\partial x_i$, has the same form $\partial \bm F(\bm u)/\partial x$ as the second term of \eqref{eq:conserv_u}, \eqref{eq:conserv_u_highdim} can be easily solved if given the solution of \eqref{eq:conserv_u}. Thus, we will only discuss the numerical method to solve \eqref{eq:conserv_u}.

Philip L. Roe proposed an approximated Riemann solver based on the Godunov scheme \cite{Roe1981} that constructs an estimation for the intercell numerical flux of $\bm F$ in \eqref{eq:conserv_u} on the interface of two neighboring computational cells in a discretized space-time computational domain \cite{Roe1981}.
In particular, the Roe solver discretizes \eqref{eq:conserv_u} as
\begin{equation}
\bm{u}_j^{n+1}=\bm{u}_j^{n} - \lambda_r\left(\hat{\bm F}_{j+\frac12}^n - \hat{\bm F}_{j-\frac12}^n\right),
\label{eq:un1}
\end{equation}
where $\lambda_r = \Delta t/\Delta x$ is the ratio of the temporal step size $\Delta t$ to the spatial step size $\Delta x$, $j=1,...,N_g$ is the grid node index, and
\begin{equation}
    \hat{\bm F}_{j+\frac12}^n = \hat{\bm F}(\bm u_j^n,\bm u_{j+1}^n)
    \label{eq:Fj}
\end{equation}
with
\begin{equation}
    \hat{\bm F}(\bm u, \bm v) = \frac{1}{2}\left[\bm F(\bm u)+ \bm F(\bm v)-|\tilde{\bm A}(\bm u, \bm v)|(\bm v - \bm u)\right].
    \label{eq:Fuv}
\end{equation}
Here, Roe matrix $\tilde{\bm A}$ that is assumed constant between two cells and must obey the following Roe conditions:
\begin{enumerate}
    \item Matrix $\tilde{\bm A}$ is a diagonalizable matrix with real eigenvalues, i.e.,
    matrix $\tilde{\bm A}(\bm u ,\bm v)$ can be diagonalized as
    \begin{equation}
        \tilde{\bm A} = \bm {L}^{-1}\bm {\Lambda} \bm L
    \label{eq:diag}
    \end{equation}
    with an invertible matrix $\bm {L}$ and a diagonal matrix $\bm \Lambda = \textrm{diag}(\Lambda_1,\cdots,\Lambda_{N_c})$.
    \item Matrix $\tilde{\bm A}$ is consistent with an exact Jacobian, that is
    \begin{equation}
        \lim_{\bm u_j,\bm u_{j+1} \rightarrow \bm u} \tilde{\bm A}(\bm u_j, \bm u_{j+1}) = \frac{\partial \bm F (\bm u)}{ \partial \bm u}.
    \end{equation}
    \item Physical quantity $\bm u$ is conserved on the interface between two computational cells as
    \begin{equation}
        \bm F_{j+1}-\bm F_{j}=\tilde{\bm  A}(\bm u_{j+1}-\bm u_{j}).
        \label{eq:Roe3}
    \end{equation}
\end{enumerate}

We denote the absolute value of $\tilde{\bm A}(\bm u, \bm v)$ as
\begin{equation}
    |\tilde{\bm A}| = \bm {L}^{-1}|\bm {\Lambda}| \bm L, \label{eq:absA}
\end{equation}
where $|\bm \Lambda|=\textrm{diag}(|\Lambda_1|,\cdots,|\Lambda_{N_c}|)$ is the absolute value of $\bm \Lambda$. Substituting \eqref{eq:Fj}, \eqref{eq:Fuv} and \eqref{eq:absA} into \eqref{eq:un1} along with the third Roe condition \eqref{eq:Roe3} yields
\begin{equation}
\begin{aligned}
    \bm{u}_j^{n+1}=&\bm{u}_j^{n} - \frac{1}{2}\lambda_r
    [(\bm L^{n}_{j+\frac{1}{2}})^{-1}(\bm \Lambda_{j+\frac{1}{2}}^n-|\bm \Lambda_{j+\frac{1}{2}}^n|)\bm L_{j+\frac{1}{2}}^n(\bm u_{j+1}^n-\bm u_{j}^n)\\
    &+(\bm L^{n}_{j-\frac{1}{2}})^{-1}(\bm \Lambda_{j-\frac{1}{2}}^n+|\bm \Lambda_{j-\frac{1}{2}}^n|)\bm L_{j-\frac{1}{2}}^n(\bm u_{j}^n-\bm u_{j-1}^n)],
\end{aligned}
 \label{eq:roeeq}
\end{equation}
with
\begin{equation}
    \bm L_{j+\frac{1}{2}}^n = \bm L(\bm u_{j}^n,\bm u_{j+1}^n),~~~~ \bm \Lambda_{j+\frac{1}{2}}^n = \bm \Lambda(\bm u_{j}^n,\bm u_{j+1}^n).
    \label{eq:LLambda}
\end{equation}
Equation \eqref{eq:roeeq} serves as a template of evolution from $\bm{u}_j^{n}$ to $\bm{u}_j^{n+1}$ in Roe solver.

The key to design an effective Roe solver is to find the Roe matrix $\tilde{\bm A}$ that satisfies the three Roe conditions.
In order to construct a Roe matrix $\tilde{\bm A}$ in \eqref{eq:diag}, Roe solver utilizes an analytical approach to solve $\bm L$ and $\bm \Lambda$ based on $\bm F(\bm u)$. The Roe matrix is then plugged into \eqref{eq:roeeq} to ultimately solve for $\bm u$ in \eqref{eq:conserv_u}.
The Roe solver linearizes Riemann problems, and such linearization recognizes the problem's nonlinear jumps, while remaining computationally efficient. 

\subsubsection{Lagrangian Vortex Method (LVM)}\label{LVM}

Given a fluid velocity field $\bm u (\bm x,t)$ with an incompressible constraint, its underlying dynamics can be described by the NS equations
\begin{equation}
  \begin{dcases}
  \frac{D \bm{u}}{D t}=-\frac{1}{\rho}\bm{\nabla} p+\nu\nabla^2 \bm u +\bm f,\\
  \bm{\nabla}\cdot \bm{u}=0,
  \end{dcases}
  \label{eq:uNS}
\end{equation}
where $t$ denotes the time, $D/D t = \partial / \partial t + \bm u \cdot \bm \nabla $ is the material derivative, $p$ is the pressure, $\nu$ is the kinematic viscosity, $\rho$ is the density, and $\bm f$ is the body accelerations (per unit mass) acting on the continuum, for example, gravity, inertial accelerations, electric field acceleration, and so on.

The alternative form of the NS equations could be obtained by defining the vorticity field $\bm{\omega} = \bm{\nabla \times u}$, which leads to the following vorticity dynamical equation
\begin{equation}
\begin{dcases}
  \frac{D \bm{\omega}}{D t}=(\bm{\omega}\cdot \bm{\nabla})\bm{u} + \nu \bm{\nabla}^{2}\bm{\omega} + \bm{\nabla} \times \bm f,\\
  \nabla^2 \bm \Psi = -\bm \omega,~~\bm u = \bm \nabla \times \bm \Psi,
\end{dcases}
  \label{eq:wNS}
\end{equation}
where $\bm \Psi$ is a vector potential whose curl is the velocity field.
Although this form does not seem to bring any simplification, the key illumination of doing this transformation stems the Helmholtz's theorems \citep{Helmholtz1858}, which states that the dynamics of the vorticity field can be described by vortex surfaces/lines, which are Lagrangian surfaces/lines flowing with the velocity field in inviscid flows \citep{Yang2010b,Xiong2017}.

The LVM discretizes the vorticity dynamical equation \eqref{eq:wNS} with $N$ particles resulting in a set of ODEs for the particle strengths $\bm \Gamma = \{\bm \Gamma_i|i=1,\cdots,N\}$ and the particle positions $\bm X =\{\bm X_i|i=1,\cdots,N\}$ as
\begin{equation}
  \begin{dcases}
  \frac{\textrm{d} \bm \Gamma_i}{\textrm{d} t}= \bm \gamma_i,\\
  \frac{\textrm{d} \bm X_i}{\textrm{d} t}= \bm u_i + \bm v_i.
  \end{dcases}
  \label{eq:vortex}
\end{equation}
Here, the particle strength $\bm \Gamma_i$ is the integral of $\bm \omega$ over the $i^{\textrm{th}}$ computational element,
$\bm u_i$ is the induced velocity calculated by BS law
\begin{equation}
    \bm u_i = \frac{1}{2(n_d-1)\pi}\sum_{j\neq i}^{N} \frac{\bm \Gamma_j\times (\bm X_i - \bm X_j)}{|\bm X_i - \bm X_j|^{n_d}+ \mathcal{R}^{n_d}},
    \label{eq:BS}
\end{equation}
where $n_d$ is the dimension of the flow field.
In addition, $\bm \gamma_i$ and $\bm v_i$ are the change rate of the particle strength and the drift velocity \citep{Hao2019}, respectively. 
To avoid singularities in the BS law, we introduce the numerical regularization parameter $\mathcal{R}$ in the LVM as $\mathcal{R}=0.1$. The effect of the regularization parameter on the dynamics of the flow evolution of the simulated vortex particles is rather small because of the large spacing between the vortex particles.

In a two-dimensional ideal fluid flow, i.e., a strictly inviscid barotropic flow with conservative body forces, the movements of Lagrangian particles with conserved vorticity strength are determined by the velocity field they create, thus allowing us to advance the simulation temporally \citep{Cottet2000}. However, in the real three-dimensional flow, under the action of vortex stretching, vortex distortion, viscous dissipation, external forces, etc., the Lagrangian advection of vortex particles and their strength need to be corrected by $\gamma_i$ and $\bm v_i$ in \eqref{eq:vortex}.

We remark that the NS equations can be accurately modeled by the LVM with a large number of computational elements and a reasonable discrete distribution. However, the implementation of the LVM faces a major challenge which is to model the right-hand sides (r.h.s.) of the set of ordinary differential equations based on the NS equations. Firstly, the assumption
that the vortices are point-like largely limits the use of the continuous BS law. Second, the drift velocity due to the external force cannot be obtained using the LVM without knowing the function of the external force. Even given the function, the LVM still fails to capture the drift velocity accurately in most cases \citep{Hao2019}. Finally, when two particles are close enough, the singularity of the discrete BS law leads to a significant numerical error. The above problems make the LVM inaccurate and inapplicable in solving the underlying fluid dynamics under many situations \citep{Cottet2000}.

\section{Implementation}\label{imp}
\subsection{Symplectic Taylor Neural Networks (Taylor-nets)}

\subsubsection{Symplectomorphism in Hamiltonian Mechanics}

Given a separable Hamiltonian system described by \eqref{eq:Hamilton}, \eqref{eq:intH}, and \eqref{eq:Hpq}.
Substituting \eqref{eq:Hpq} into \eqref{eq:Hamilton} yields

\begin{equation}
\begin{dcases}
\frac{\textrm{d} \bm{q}}{\textrm{d} t} = \frac{\partial T(\bm p)}{\partial \bm{p}},\\
\frac{\textrm{d} \bm{p}}{\textrm{d} t} =-\frac{\partial V(\bm q)}{\partial \bm{q}}.
\end{dcases}
\label{eq:HpqVT}
\end{equation}
This set of equations is fundamental in designing our neural networks. Our model will learn the r.h.s. of \eqref{eq:HpqVT} under the framework of ODE-net.

One of the important features of the time evolution of Hamilton's equations is symplectomorphism, which represents a transformation of phase space that is volume-preserving. In the setting of canonical coordinates, symplectomorphism means the transformation of the phase flow of a Hamiltonian system conserves the symplectic two-form

\begin{equation}
    \textrm{d} \bm p\wedge \textrm{d} \bm q \equiv \sum_{j=1}^{N}\left(\textrm{d}p_j\wedge \textrm{d}q_j\right),
    \label{eq:dpq}
\end{equation}
where $\wedge$ denotes the wedge product of two differential forms. Inspired by the symplectomorphism feature, we aim to construct a neural network architecture that intrinsically preserves Hamiltonian structure.

\subsubsection{A symmetric network in Taylor expansion form}
\label{subsec:taylor}

In order to learn the gradients of the Hamiltonian with respect to the generalized coordinates, we propose the following underpinning mechanism, which is a set of symmetric networks that learn the gradients of the Hamiltonian with respect to the generalized coordinates.

\begin{equation}
\begin{dcases}
\bm T_p(\bm p,\bm \theta_p) \rightarrow \frac{\partial T(\bm p)}{\partial \bm p},\\
\bm V_q(\bm q,\bm \theta_q) \rightarrow \frac{\partial V(\bm q)}{\partial \bm q},
\end{dcases}
\label{eq:TpVq}
\end{equation}
with parameters $(\bm\theta_p,\bm\theta_q)$ that are designed to learn the r.h.s. of \eqref{eq:HpqVT}, respectively. Here, the ``$\rightarrow$" represents our attempt to use the left-hand side (l.h.s) to learn the r.h.s.
Substituting \eqref{eq:TpVq} into \eqref{eq:HpqVT} yields

\begin{equation}
\begin{dcases}
\frac{\textrm{d} \bm{q}}{\textrm{d} t} = \bm T_p(\bm p,\bm \theta_p),\\
\frac{\textrm{d} \bm{p}}{\textrm{d} t} = -\bm V_q(\bm q,\bm \theta_q).
\end{dcases}
\label{eq:HpqVT1}
\end{equation}
Therefore, under the initial condition \eqref{eq:intH}, the trajectories of the canonical coordinates can be integrated as

\begin{equation}
\begin{dcases}
\bm q(t) = \bm q_0 + \int_{t_0}^{t} \bm T_p(\bm p,\bm \theta_p) \textrm{d}t,\\
\bm p(t) = \bm p_0 - \int_{t_0}^{t} \bm V_q(\bm q,\bm \theta_q) \textrm{d}t.
\end{dcases}
\label{eq:TVint}
\end{equation}

From \eqref{eq:TpVq}, we obtain

\begin{equation}
\begin{dcases}
\frac{\partial \bm T_p(\bm p,\bm \theta_p)}{\partial \bm p} \rightarrow \frac{\partial^2 T(\bm p)}{\partial \bm p^2},\\
\frac{\partial \bm V_q(\bm q,\bm \theta_q)}{\partial \bm q} \rightarrow \frac{\partial^2 V(\bm q)}{\partial \bm q^2}.
\end{dcases}
\label{eq:dTpVq}
\end{equation}
The r.h.s. of \eqref{eq:dTpVq} are the Hessian matrix of $T$ and $V$ respectively, so we can design $\bm T_p(\bm p,\bm \theta_p)$ and $\bm V_q(\bm q,\bm \theta_q)$ as symmetric mappings, that are

\begin{equation}
\frac{\partial \bm T_p(\bm p,\bm \theta_p)}{\partial \bm p} = \left[\frac{\partial \bm T_p(\bm p,\bm \theta_p)}{\partial \bm p}\right]^T,
\label{eq:partial_T}
\end{equation}
and

\begin{equation}
\frac{\partial \bm V_q(\bm q,\bm \theta_q)}{\partial \bm q} = \left[\frac{\partial \bm V_q(\bm q,\bm \theta_q)}{\partial \bm q}\right]^T.
\label{eq:partial_V}
\end{equation}

Due to the multiple nonlinear layers in the construction of traditional deep neural networks, it is impossible for these deep neural networks to fulfill \eqref{eq:partial_T} and \eqref{eq:partial_V}. Therefore, we can only use a three-layer network with the form of \emph{linear-activation-linear}, where the weights of the two linear layers are the transpose of each other, and in order to still maintain the expressive power of the networks, we construct symmetric nonlinear terms, as same as the terms of a Taylor polynomial, and combine them linearly. Specifically, we construct a symmetric network $\bm T_p(\bm p,\bm \theta_p)$ as

\begin{equation}
   \bm T_p(\bm p,\bm \theta_p) =\left( \sum_{i = 1}^{M}\bm A_i^T \circ f_i \circ \bm A_i - \bm B_i^T \circ f_i \circ \bm B_i\right) \circ \bm p + \bm b,
   \label{eq:Tp_Taylor}
\end{equation}
where `$\circ$' denotes the function composition, $\bm A_i$ and $\bm B_i$ are fully connected layers with size $N_h\times N$, $\bm b$ is a $N$ dimensional bias, $M$ is the number of terms in the Taylor series expansion, and $f_i$ is an element-wise function, representing the $i^{\textrm{th}}$ order term in the Taylor polynomial

\begin{equation}
f_i(x) = \frac{1}{i!}x^i.
\label{eq:Taylor_ex}
\end{equation}
Figure \ref{fig:Taylor_net} plots a schematic diagram of $\bm T_p(\bm p,\bm \theta_p)$ in Taylor-net. The input of $\bm T_p(\bm p,\bm \theta_p)$ is $\bm p$, and $\bm \theta_p = (\bm A_i$, $\bm B_i, \bm b )$. We construct a negative term $\bm B_i^T \circ f_i \circ \bm B_i$ following a positive term $\bm A_i^T \circ f_i \circ \bm A_i$, since two positive semidefinite matrices with opposite signs can represent any symmetric matrix.

\begin{figure}
  \centering
  \includegraphics[width=.7\linewidth]{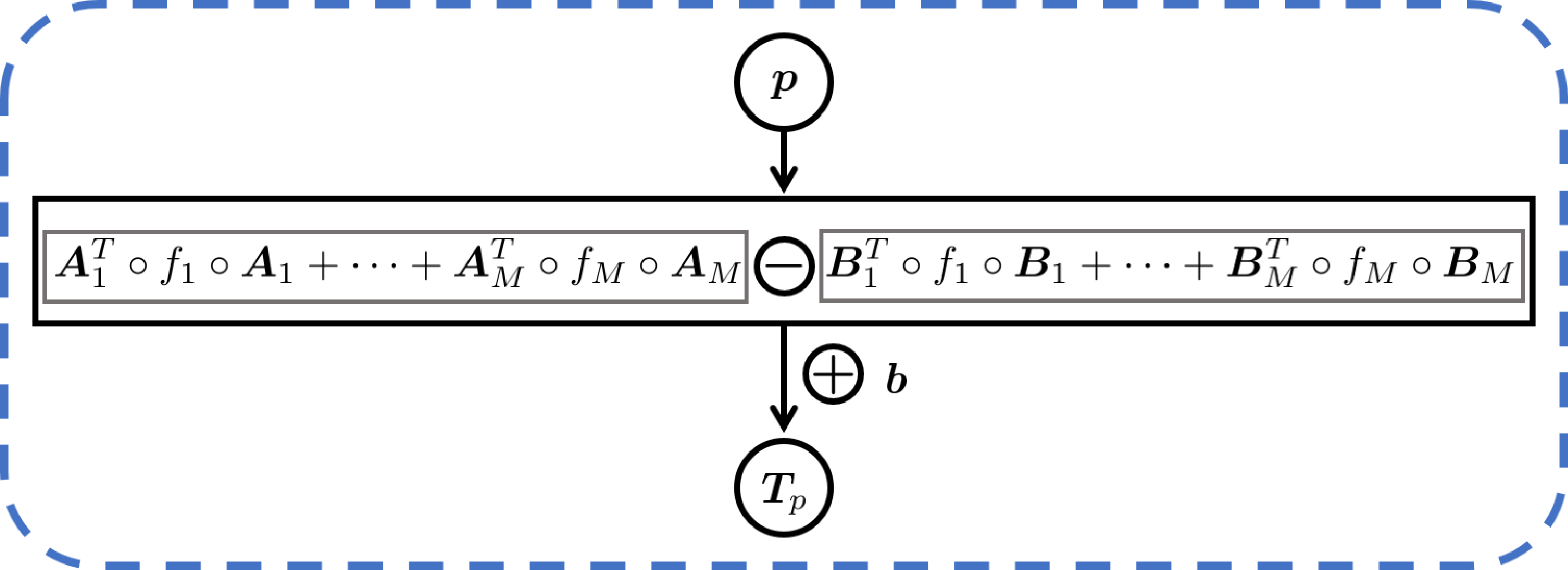}\\[0.5mm]
  \caption{The schematic diagram of $ \bm T_p(\bm p,\bm \theta_p)$ in Taylor-net. Source: \citep{tong2021symplectic}.}
  \label{fig:Taylor_net}
\end{figure}

To prove \eqref{eq:Tp_Taylor} is symmetric, that is it fulfills \eqref{eq:partial_T}, we introduce Theorem \ref{thm:Tp_sym}.
\begin{theorem}
The network \eqref{eq:Tp_Taylor} satisfies \eqref{eq:partial_T}.
\label{thm:Tp_sym}
\end{theorem}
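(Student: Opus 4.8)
The plan is to compute the Jacobian $\partial \bm T_p / \partial \bm p$ directly, term by term, and show that every summand contributes a symmetric matrix; since a sum of symmetric matrices is symmetric, the full Jacobian will then satisfy \eqref{eq:partial_T}. The structure of \eqref{eq:Tp_Taylor} makes this a matter of applying the chain rule carefully and exploiting the fact that $f_i$ is element-wise.

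First I would isolate a single positive summand, $\bm A_i^T \circ f_i \circ \bm A_i \circ \bm p = \bm A_i^T f_i(\bm A_i \bm p)$, and differentiate it with respect to $\bm p$. Writing $\bm z = \bm A_i \bm p$, the crucial observation is that because $f_i$ in \eqref{eq:Taylor_ex} acts component-wise, its Jacobian $\partial f_i(\bm z)/\partial \bm z$ is a \emph{diagonal} matrix $\bm D_i^A = \mathrm{diag}\bigl(f_i'(z_1),\ldots,f_i'(z_{N_h})\bigr)$. The chain rule (together with the fact that the outer factor $\bm A_i^T$ is a constant linear map) then yields
\[
\frac{\partial}{\partial \bm p}\bigl[\bm A_i^T f_i(\bm A_i \bm p)\bigr] = \bm A_i^T \bm D_i^A \bm A_i.
\]

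Next I would verify symmetry of this sandwiched product. Since $\bm D_i^A$ is diagonal it equals its own transpose, so $\bigl(\bm A_i^T \bm D_i^A \bm A_i\bigr)^T = \bm A_i^T (\bm D_i^A)^T \bm A_i = \bm A_i^T \bm D_i^A \bm A_i$. The identical argument applies to each negative summand $-\bm B_i^T \bm D_i^B \bm B_i$, and the constant bias $\bm b$ contributes nothing to the Jacobian. Assembling the pieces gives
\[
\frac{\partial \bm T_p(\bm p,\bm \theta_p)}{\partial \bm p} = \sum_{i=1}^{M}\bigl(\bm A_i^T \bm D_i^A \bm A_i - \bm B_i^T \bm D_i^B \bm B_i\bigr),
\]
a sum of symmetric matrices, hence symmetric, which is precisely \eqref{eq:partial_T}.

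I do not anticipate a genuine obstacle: the computation is short and routine. The one point demanding care is the chain-rule step, where the element-wise nature of $f_i$ must be used to conclude that its local derivative is diagonal, and therefore symmetric. This is the linchpin of the argument — were $f_i$ a general (non-diagonal) nonlinearity, the product $\bm A_i^T \bm D \bm A_i$ would no longer be forced to be symmetric, so the \emph{linear-activation-linear} design with weight-transpose sharing is exactly what makes the theorem go through.
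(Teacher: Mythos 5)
Your proposal is correct and follows essentially the same route as the paper's proof: both compute the Jacobian term by term via the chain rule, obtaining $\sum_i \bm A_i^T \bm \Lambda_i^A \bm A_i - \bm B_i^T \bm \Lambda_i^B \bm B_i$ with diagonal middle factors, and conclude symmetry from the sandwiched form. Your write-up merely makes explicit the transpose check that the paper leaves as ``easy to see.''
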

\begin{proof}
From \eqref{eq:Tp_Taylor}, we have

\begin{equation}
    \frac{\partial \bm T_p(\bm p,\bm \theta_p)}{\partial \bm p} = \sum_{i = 1}^{M}\bm A_i^T \bm \Lambda_i^A \bm A_i - \bm B_i^T \bm \Lambda_i^B\bm B_i,
    \label{eq:Tp_d}
\end{equation}

with

\begin{equation}
    \Lambda_i^A = \textrm{diag}\left(\frac{\textrm{d}f}{\textrm{d}x}\Bigg|_{x= \bm A_i\circ \bm p}\right),
\end{equation}

and

\begin{equation}
    \Lambda_i^B = \textrm{diag}\left(\frac{\textrm{d}f}{\textrm{d}x}\Bigg|_{x= \bm B_i\circ \bm p}\right).
\end{equation}

It's easy to see that \eqref{eq:Tp_d} is a symmetric matrix that satisfies \eqref{eq:partial_T}.
\end{proof}
In fact, $\bm T_p(\bm p,\bm \theta_p)$ in \eqref{eq:partial_T} and $\bm V_q(\bm q,\bm \theta_q)$ in \eqref{eq:partial_V} satisfy the same property, so we construct $V_q$ with the similar form as

\begin{equation}
   \bm V_q(\bm q,\bm \theta_q) =\left( \sum_{i = 1}^{M}\bm C_i^T \circ f_i \circ \bm C_i - \bm D_i^T \circ f_i \circ \bm D_i\right) \circ \bm q + \bm d.
   \label{eq:Vq_Taylor}
\end{equation}
Here, $\bm C_i$, $\bm D_i$, and $\bm d$ have the same structure as \eqref{eq:Tp_Taylor}, and $(\bm C_i$, $\bm D_i, \bm d )= \bm \theta_q$.

\subsubsection{Symplectic Taylor neural networks}\label{subsec:Sym_ode}
Next, we substitute the constructed network \eqref{eq:Tp_Taylor} and \eqref{eq:Vq_Taylor} into \eqref{eq:TVint} to learn the Hamiltonian system \eqref{eq:HpqVT}.
We employ ODE-net \cite{chen2018neural} introduced in \ref{NeuralODE} as our computational infrastructure. 
Inspired by the idea of ODE-net, we design neural networks that can learn continuous time evolution. In Hamiltonian system \eqref{eq:HpqVT}, where the coordinates are integrated as \eqref{eq:TVint}, we can implement a time integrator to solve for $\bm p$ and $\bm q$. While ODE-net uses fourth-order Runge--Kutta method to make the neural networks structure-preserving, we need to implement an integrator that is symplectic. Therefore, we introduce Taylor-net, in which we design the symmetric Taylor series expansion and utilize the fourth-order symplectic integrator to construct neural networks that are symplectic to learn the gradients of the Hamiltonian with respect to the generalized coordinates and ultimately the temporal integral of a Hamiltonian system.

\begin{algorithm}
  \caption{Integrate \eqref{eq:TVint} by using the fourth-order symplectic integrator. Source: \citep{tong2021symplectic}.}
  \label{alg:int_net}
  \begin{algorithmic}
  \Require
  $\bm q_0,\bm p_0,t_0,t,\Delta t$,\\
  $\bm F_t^j$ in \eqref{eq:Ft} and $\bm F_k^j$ in \eqref{eq:Fk} with $j=1,2,3,4$;
  \Ensure
  $\bm q(t),\bm p(t)$
  \State
  $n = \textrm{floor}[(t-t_0)/\Delta t]$;
  \State
  for $i = 1,n$
  \State
  ~~~~$(\bm k_p^0,\bm k_q^0) = (\bm p_{i-1},\bm q_{i-1})$;
  \State
  ~~~~for $j = 1, 4$
  \State
  ~~~~~~~~$(\bm t_p^{j-1}, \bm t_q^{j-1}) =\bm F_t^j(\bm k_p^{j-1},\bm k_q^{j-1},\Delta t)$,
  \State
  ~~~~~~~~$(\bm k_p^j, \bm k_q^j) =\bm F_k^j(\bm t_p^{j-1},\bm t_q^{j-1},\Delta t)$,
  \State
  ~~~~end
  \State
  ~~~~$(\bm p_{i},\bm q_{i}) = (\bm k_p^4,\bm k_q^4)$;
  \State
  end
  \State
  $\bm q(t)=\bm q_{n},\bm p(t) = \bm p_{n}$.
  \end{algorithmic}
\end{algorithm}

For the constructed networks \eqref{eq:Tp_Taylor} and \eqref{eq:Vq_Taylor}, we integrate \eqref{eq:TVint} by using the fourth-order symplectic integrator introduced in \ref{syminte}. Specifically, we will have an input layer $(\bm q_0,\bm p_0)$ at $t = t_0$ and an output layer $(\bm q_n,\bm p_n)$ at $t = t_0 + n \textrm{d} t$. The recursive relations of $(\bm q_i,\bm p_i), i = 1,2,\cdots,n$, can be expressed by Algorithm \ref{alg:int_net}. The input function in Algorithm \ref{alg:int_net} are

\begin{equation}
\bm F_t^j(\bm p,\bm q,\textrm{d}t) =  \left(\bm p,\bm q+ c_j\bm T_p(\bm p,\bm \theta_p)\textrm{d}t\right),
\label{eq:Ft}
\end{equation}
and
\begin{equation}
\bm F_k^j(\bm p,\bm q,\textrm{d}t) =  \left(\bm p -  d_j \bm V_q(\bm q,\bm \theta_q)\textrm{d}t,\bm q\right),
\label{eq:Fk}
\end{equation}
with coefficients \eqref{coeff}.

Relationships \eqref{eq:Ft} and \eqref{eq:Fk} are obtained by replacing $\partial T(\bm p)/\partial \bm p$ and $ \partial V(\bm q)/\partial \bm q$ in the fourth-order symplectic integrator with deliberately designed neural networks $\bm T_p(\bm p,\bm \theta_p)$ and $\bm V_q(\bm q,\bm \theta_q)$, respectively. Figure \ref{fig:Sym_Taylor_net} plots a schematic diagram of Taylor-net which is described by Algorithm \ref{alg:int_net}. The input of Taylor-net is $(\bm q_0,\bm p_0)$, and the output is $(\bm q_n,\bm p_n)$. Taylor-net consists of $n$ iterations of fourth-order symplectic integrator. The input of the integrator is $(\bm q_{i-1},\bm p_{i-1})$, and the output is $(\bm q_{i},\bm p_{i})$. Within the integrator, the output of $\bm T_p$ is used to calculate $\bm q$, while the output of $\bm V_q$ is used to calculate $\bm p$, which is signified by the shoelace-like pattern in the diagram. The four intermediate variables $\bm t_p^0\cdots \bm t_p^4$ and $\bm k_q^0\cdots \bm k_q^4$ indicate that the scheme is fourth-order.

\begin{figure}
  \centering
  \includegraphics[width=0.93\linewidth]{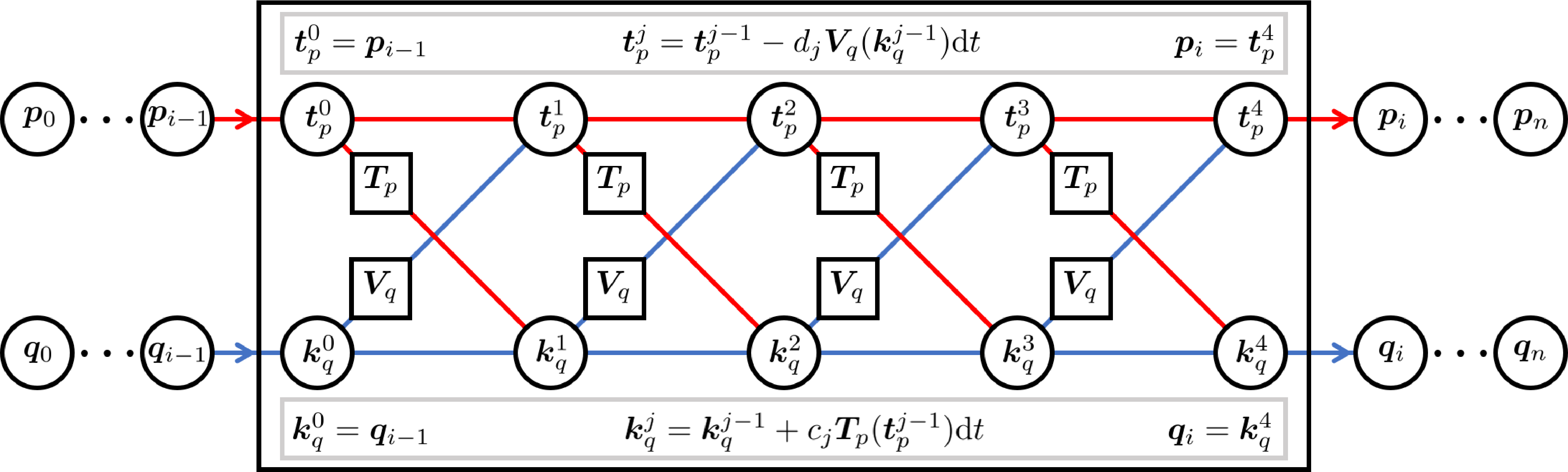}\\[0.5mm]
  \caption{The schematic diagram of Taylor-net. The input of Taylor-net is $(\bm q_0,\bm p_0)$, and the output is $(\bm q_n,\bm p_n)$. Taylor-net consists of $n$ iterations of fourth-order symplectic integrator. The input of the integrator is $(\bm q_{i-1},\bm p_{i-1})$, and the output is $(\bm q_{i},\bm p_{i})$. The four intermediate variables $\bm t_p^0\cdots \bm t_p^4$ and $\bm k_q^0\cdots \bm k_q^4$ show that the scheme is fourth-order. Source: \citep{tong2021symplectic}.}
  \label{fig:Sym_Taylor_net}
\end{figure}

By constructing the network $\bm T_p(\bm p,\bm \theta_p)$ in \eqref{eq:Tp_Taylor} that satisfies \eqref{eq:partial_T}, we show that Theorem \ref{thm:sym_Ft} holds, so the network \eqref{eq:Ft} preserves the symplectic structure of the system.

\begin{theorem}
For a given $\textrm{d}t$, the mapping $\bm F_t^j(:,:,\textrm{d}t):\mathbb{R}^{2N}\rightarrow \mathbb{R}^{2N}$ in \eqref{eq:Ft} is a symplectomorphism if and only if the Jacobian of $\bm T_p$ is a symmetric matrix, that is, it satisifies \eqref{eq:partial_T}.
 \label{thm:sym_Ft}
\end{theorem}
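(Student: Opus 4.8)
The plan is to work directly from the paper's definition of a symplectomorphism as a diffeomorphism preserving the two-form $\textrm{d}\bm p \wedge \textrm{d}\bm q$ of \eqref{eq:dpq}, and to compute the pullback of this form under $\bm F_t^j$. Since $\bm F_t^j$ from \eqref{eq:Ft} is manifestly a smooth bijection with smooth inverse $(\bm P,\bm Q)\mapsto(\bm P,\bm Q - c_j\bm T_p(\bm P,\bm\theta_p)\,\textrm{d}t)$, the only content is form-preservation. Writing the image coordinates as $\bm P = \bm p$ and $\bm Q = \bm q + c_j\bm T_p(\bm p,\bm\theta_p)\,\textrm{d}t$, I would first take exterior derivatives: $\textrm{d}\bm P = \textrm{d}\bm p$ is immediate, while $\textrm{d}\bm Q$ acquires the extra term $c_j\,\textrm{d}t\,(\partial\bm T_p/\partial\bm p)\,\textrm{d}\bm p$ by the chain rule, since $\bm T_p$ depends only on $\bm p$. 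The argument then reduces to substituting these into $\sum_l \textrm{d}P_l\wedge\textrm{d}Q_l$ and comparing with the original form.

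Let $M_{lk}$ denote the $(l,k)$ entry of the Jacobian $\partial\bm T_p/\partial\bm p$ that appears in \eqref{eq:partial_T}. Substituting the differentials gives $\sum_l \textrm{d}P_l\wedge\textrm{d}Q_l = \sum_l \textrm{d}p_l\wedge\textrm{d}q_l + c_j\,\textrm{d}t\sum_{l,k} M_{lk}\,\textrm{d}p_l\wedge\textrm{d}p_k$, where the first sum is the original symplectic form. For the second sum, the key algebraic fact is that wedging against $\textrm{d}p_l\wedge\textrm{d}p_k$ annihilates the symmetric part of $M$ and retains only its antisymmetric part, so that $\sum_{l,k} M_{lk}\,\textrm{d}p_l\wedge\textrm{d}p_k = \sum_{l<k}(M_{lk}-M_{kl})\,\textrm{d}p_l\wedge\textrm{d}p_k$ (using $\textrm{d}p_l\wedge\textrm{d}p_l=0$ and $\textrm{d}p_l\wedge\textrm{d}p_k=-\textrm{d}p_k\wedge\textrm{d}p_l$). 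Hence the pullback equals the original form precisely when this antisymmetric remainder vanishes.

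The equivalence then follows in both directions once I note that $c_j\,\textrm{d}t\neq 0$ for the coefficients in \eqref{coeff}. If the Jacobian of $\bm T_p$ is symmetric, i.e. \eqref{eq:partial_T} holds, then $M_{lk}=M_{kl}$, the correction term disappears, and $\bm F_t^j$ preserves the two-form. Conversely, because the identity must hold at every point $(\bm p,\bm q)$ and the two-forms $\textrm{d}p_l\wedge\textrm{d}p_k$ for $l<k$ are linearly independent, vanishing of the correction forces $M_{lk}=M_{kl}$ for all $l,k$, which is \eqref{eq:partial_T}. The one point requiring care is this ``only if'' direction: one must use that preservation holds identically in the coordinates (not merely at a single point) and that the scalar $c_j\,\textrm{d}t$ is nonzero, so it cannot cancel the antisymmetric part; everything else is a routine exterior-algebra computation. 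As an alternative I could instead form the $2N\times 2N$ Jacobian of $\bm F_t^j$, which is block lower-triangular with identity diagonal blocks and off-diagonal block $c_j\,\textrm{d}t\,\partial\bm T_p/\partial\bm p$, and check that the matrix condition $\bm J^T\bm\Omega\bm J=\bm\Omega$, with $\bm\Omega$ the standard symplectic matrix, reduces to the same symmetry requirement.
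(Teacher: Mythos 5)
Your proposal is correct and follows essentially the same route as the paper's proof: both compute the pullback of $\textrm{d}\bm p\wedge\textrm{d}\bm q$ under $\bm F_t^j$ and observe that the correction term isolates the antisymmetric part of $\partial\bm T_p/\partial\bm p$, so the form is preserved iff that Jacobian is symmetric. Your write-up is in fact slightly more careful than the paper's on the ``only if'' direction (noting $c_j\,\textrm{d}t\neq 0$ and the linear independence of the $\textrm{d}p_l\wedge\textrm{d}p_k$), and it correctly identifies that the residual term involves $\textrm{d}p_l\wedge\textrm{d}p_k$ rather than the $\textrm{d}\bm k_p|_l\wedge\textrm{d}\bm k_q|_m$ appearing in the paper's display, which reads as a typo there.
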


\begin{proof}
Let

\begin{equation}
(\bm t_p, \bm t_q) = \bm F_t^j(\bm k_p, \bm k_q,\textrm{d}t).
\end{equation}

From \eqref{eq:Ft}, we have

\begin{equation}
\begin{aligned}
&\textrm{d}\bm t_p \wedge \textrm{d}\bm t_q = \textrm{d}\bm k_p \wedge \textrm{d}\bm k_q + \\
&\frac{1}{2}\sum_{l,m=1}^N c_j\textrm{d}t  \left[\frac{\partial \bm T_p(\bm k_p,\bm \theta_p)}{\partial \bm k_p}\Bigg|_{l,m} - \frac{\partial \bm T_p(\bm k_p,\bm \theta_p)}{\partial \bm k_p}\Bigg|_{m,l}\right]\textrm{d}\bm k_p|_l \wedge \textrm{d}\bm k_q|_m.
\label{eq:dtk}
\end{aligned}
\end{equation}

Here $\bm A|_{l,m}$ refers to the entry in the $l$-th row and $m$-th column of a matrix $\bm A$, $\bm x|_l$ refers to the $l$-th component of vector $\bm x$. From \eqref{eq:dtk}, we know that $\textrm{d}\bm t_p \wedge \textrm{d}\bm t_q = \textrm{d}\bm k_p \wedge \textrm{d}\bm k_q$ is equivalent to

\begin{equation}
    \frac{\partial \bm T_p(\bm k_p,\bm \theta_p)}{\partial \bm k_p}\Bigg|_{l,m} - \frac{\partial \bm T_p(\bm k_p,\bm \theta_p)}{\partial \bm k_p}\Bigg|_{m,l}=0,\quad \forall l,m= 1,2,\cdots,N,
\end{equation}
which is \eqref{eq:partial_T}.
\end{proof}
Similar to Theorem \ref{thm:sym_Ft}, we can find the relationship between $\bm F_k^j$ and the Jacobian of $\bm V_q$. The proof of \ref{thm:sym_Fk} is omitted as it is similar to the proof of Theorem \ref{thm:sym_Ft}.
\begin{theorem}
For a given \textrm{d}t, the mapping $\bm F_k^j(:,:,\textrm{d}t):\mathbb{R}^{2N}\rightarrow \mathbb{R}^{2N}$ in \eqref{eq:Fk} is a symplectomorphism if and only if the Jacobian of $\bm V_q$ is a symmetric matrix, that is, it satisifies \eqref{eq:partial_V}.
\label{thm:sym_Fk}
\end{theorem}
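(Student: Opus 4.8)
The plan is to mirror the proof of Theorem~\ref{thm:sym_Ft}, using the fact that $\bm F_k^j$ is the momentum-update counterpart of the position-update $\bm F_t^j$: here the $\bm p$-slot is advanced by a function of the $\bm q$-slot rather than the reverse. First I would name the image coordinates by setting $(\bm k_p,\bm k_q) = \bm F_k^j(\bm t_p,\bm t_q,\textrm{d}t)$, so that \eqref{eq:Fk} gives the componentwise relations $\bm k_p = \bm t_p - d_j \bm V_q(\bm t_q,\bm \theta_q)\textrm{d}t$ and $\bm k_q = \bm t_q$. Taking exterior derivatives then yields $\textrm{d}\bm k_q = \textrm{d}\bm t_q$ and $\textrm{d}\bm k_p = \textrm{d}\bm t_p - d_j\textrm{d}t\,[\partial \bm V_q/\partial \bm t_q]\,\textrm{d}\bm t_q$, where the Jacobian is evaluated at $\bm t_q$.

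The core step is to expand the symplectic two-form \eqref{eq:dpq} on the image, $\textrm{d}\bm k_p \wedge \textrm{d}\bm k_q$, in terms of the input forms. Substituting the two differentials and using bilinearity of the wedge product reproduces $\textrm{d}\bm t_p \wedge \textrm{d}\bm t_q$ exactly and leaves a single residual proportional to $\sum_{l,m}[\partial\bm V_q/\partial\bm t_q]|_{l,m}\,\textrm{d}\bm t_q|_m \wedge \textrm{d}\bm t_q|_l$. Since $\textrm{d}\bm t_q|_m \wedge \textrm{d}\bm t_q|_l$ is antisymmetric under $l\leftrightarrow m$, the symmetric part of the Jacobian drops out and only its antisymmetric part survives, producing an identity in direct analogy with \eqref{eq:dtk}:
\begin{equation}
\textrm{d}\bm k_p \wedge \textrm{d}\bm k_q = \textrm{d}\bm t_p \wedge \textrm{d}\bm t_q - \frac{1}{2}\sum_{l,m=1}^N d_j\,\textrm{d}t\left[\frac{\partial \bm V_q(\bm t_q,\bm \theta_q)}{\partial \bm t_q}\Bigg|_{l,m} - \frac{\partial \bm V_q(\bm t_q,\bm \theta_q)}{\partial \bm t_q}\Bigg|_{m,l}\right]\textrm{d}\bm t_q|_m \wedge \textrm{d}\bm t_q|_l.
\end{equation}

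The biconditional then follows immediately. The map is a symplectomorphism, i.e.\ $\textrm{d}\bm k_p \wedge \textrm{d}\bm k_q = \textrm{d}\bm t_p \wedge \textrm{d}\bm t_q$ for every input, precisely when the bracketed difference vanishes for all $l,m$; since the two-forms $\textrm{d}\bm t_q|_m \wedge \textrm{d}\bm t_q|_l$ with $m<l$ are linearly independent, this is equivalent to $[\partial\bm V_q/\partial\bm t_q]|_{l,m} = [\partial\bm V_q/\partial\bm t_q]|_{m,l}$ for all $l,m$, which is exactly \eqref{eq:partial_V}. For the converse it suffices to observe that a single nonzero antisymmetric entry at any point leaves a nonvanishing residual two-form there, breaking the symplectic condition.

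I expect the only genuine difficulty to be bookkeeping rather than mathematics: one must keep the $\bm p$/$\bm q$ roles swapped relative to Theorem~\ref{thm:sym_Ft} (the differential $\textrm{d}\bm V_q$ now contracts against $\textrm{d}\bm t_q$, so both surviving basis one-forms carry the $q$-index), and carry the sign introduced by the $-d_j$ coefficient through the antisymmetrization. Once the residual term is reduced against the antisymmetric basis $\textrm{d}\bm t_q|_m \wedge \textrm{d}\bm t_q|_l$ as in \eqref{eq:dtk}, no further analytic input is required and the equivalence is purely algebraic.
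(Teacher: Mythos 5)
Your proposal is correct and follows exactly the route the paper intends: the paper omits this proof, stating it is analogous to that of Theorem \ref{thm:sym_Ft}, and your computation is precisely that analogue, with the residual two-form now contracting the Jacobian of $\bm V_q$ against $\textrm{d}\bm t_q|_m \wedge \textrm{d}\bm t_q|_l$ so that only the antisymmetric part survives. Nothing further is needed.
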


Suppose that $\Phi_1$ and $\Phi_2$ are two symplectomorphisms. Then, it is easy to show that their composite map $\Phi_2\circ \Phi_1$ is also symplectomorphism due to the chain rule. Thus, the symplectomorphism of Algorithm \ref{alg:int_net} can be guaranteed by the Theorems \ref{thm:sym_Ft} and \ref{thm:sym_Fk}.

\subsection{Nonseparable Symplectic Neural Networks (NSSNNs)}
Our model aims to learn the dynamical evolution of $(\bm q, \bm p)$ in (\ref{eq:Hamilton}) by embedding (\ref{eq:overlineHamilton}) into the framework of NeuralODE \citep{chen2018neural}. We learn the nonseparable Hamiltonian dynamics (\ref{eq:Hamilton}) by constructing an augmented system (\ref{eq:overlineHamilton}), from which we can obtain the energy function $\mathcal {H}(\bm q,\bm p)$ by training the neural network $\mathcal {H}_{\theta} (\bm q, \bm p)$ with parameter $\bm \theta$ and calculate the gradient $ \bm \nabla \mathcal {H}_{\theta}(\bm q, \bm p)$ by taking the in-graph gradient.
For the constructed network $\mathcal{H}_{\theta}(\bm q, \bm p)$, we integrate (\ref{eq:overlineHamilton}) by using the second-order symplectic integrator \citep{Tao2016}. Specifically, we will have an input layer $(\bm q,\bm p, \bm x, \bm y) = (\bm q_0,\bm p_0,\bm q_0,\bm p_0)$ at $t = t_0$ and an output layer $(\bm q,\bm p, \bm x, \bm y) = (\bm q_n,\bm p_n,\bm x_n,\bm y_n)$ at $t = t_0 + n \textrm{d} t$.

\begin{algorithm}[t]
  \caption{Integrate (\ref{eq:overlineHamilton}) by using the second-order symplectic integrator. Source: \citep{xiong2020nonseparable}.}
  \label{alg:int_net2}
  \begin{algorithmic}
  \Require
  $\bm q_0,\bm p_0,t_0,t,\textrm{d}t$; $\;$ 
  $\bm \phi_1^{\delta}$, $\bm \phi_2^{\delta}$, and $\bm \phi_3^{\delta}$ in (\ref{eq:phi});
  \Ensure
  $(\hat q, \hat p, \hat x,\hat y)=(\bm q_{n}, \bm p_{n},\bm x_{n}, \bm y_{n})$
\State
  $(\bm q_0,\bm p_0,\bm x_0, \bm y_0)=(\bm q_0,\bm p_0,\bm q_0,\bm p_0)$ ; 
\State
  $n = \textrm{floor}[(t-t_0)/\textrm{d}t]$ ;
\State
~~~~for $i = 1\to n$
\State
 ~~~~~~~~$(\bm q_i,\bm p_i,\bm x_i,\bm y_i) = \bm \phi_1^{\textrm{d} t/2}\circ \bm \phi_2^{\textrm{d} t/2}\circ  \bm \phi_3^{\textrm{d} t}\circ \bm \phi_2^{\textrm{d} t/2}\circ \bm \phi_1^{\textrm{d} t/2}\circ (\bm q_{i-1},\bm p_{i-1},\bm x_{i-1},\bm y_{i-1})$;
\State
~~~~end
\end{algorithmic}
\end{algorithm}

The recursive relations of $(\bm q_i,\bm p_i,\bm x_i,\bm y_i), i = 1,2,\cdots,n$, can be expressed by Algorithm \ref{alg:int_net2}.
Figure \ref{fig:NSSNNnet}(a) shows the forward pass of NSSNN is composed of a forward pass through a
differentiable symplectic integrator as well as a backpropagation step through the model. Figure \ref{fig:NSSNNnet}(b) plots the schematic diagram of NSSNN.
For the constructed network $\mathcal{H}_{\theta}(\bm q, \bm p)$, we integrate (\ref{eq:overlineHamilton}) by using the second-order symplectic integrator \citep{Tao2016}. Specifically, The input layer of the integrator is $(\bm q,\bm p, \bm x, \bm y) = (\bm q_0,\bm p_0,\bm q_0,\bm p_0)$ at $t = t_0$ and the output layer is $(\bm q,\bm p, \bm x, \bm y) = (\bm q_n,\bm p_n,\bm x_n,\bm y_n)$ at $t = t_0 + n \textrm{d} t$. The recursive relations of $(\bm q_i,\bm p_i,\bm x_i,\bm y_i), i = 1,2,\cdots,n$, are expressed by Algorithm \ref{alg:int_net2}. Moreover, given~\eqref{eq:phi}, since $\bm x$ and $\bm y$ are theoretically equal to $\bm q$ and $\bm p$,  we can use the data set of $(\bm q, \bm p)$ to construct the data set containing variables $(\bm q, \bm p, \bm x, \bm y)$.

\begin{figure}
  \centering
  \includegraphics[width=0.95\linewidth]{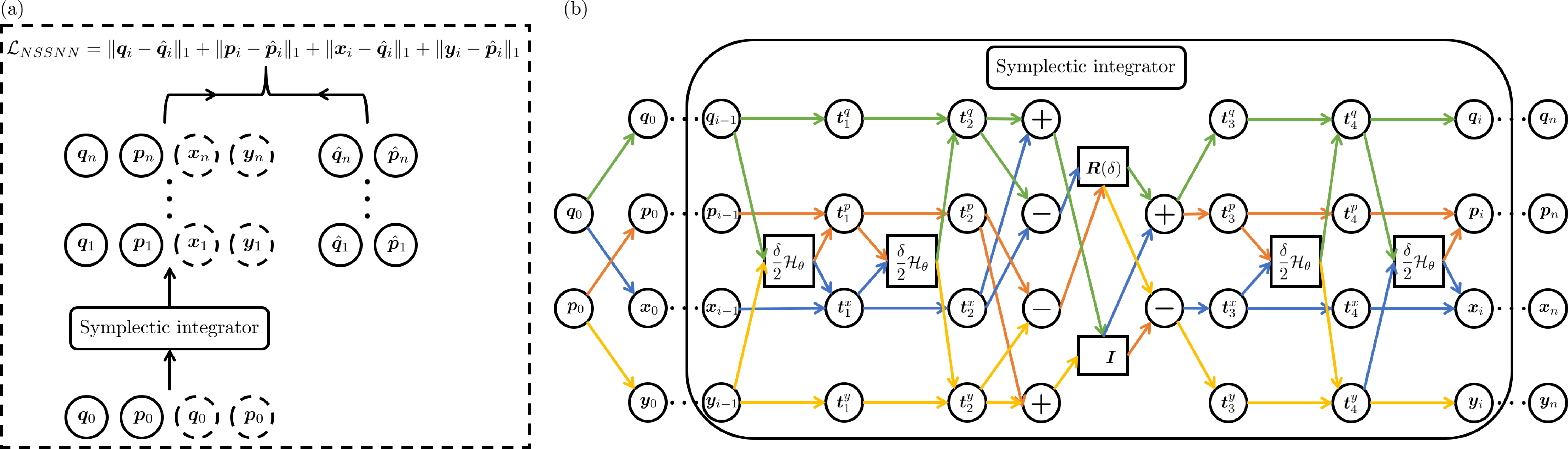}
  \caption{(a) The forward pass of an NSSNN is composed of a forward pass through a
differentiable symplectic integrator as well as a backpropagation step through the model. (b) The schematic diagram of NSSNN. Source: \citep{xiong2020nonseparable}.}
  \label{fig:NSSNNnet}
\end{figure}

In addition, by constructing the network $\mathcal{H}_{\theta}$, we show that Theorem \ref{thm:sym_phi} holds, so the networks $\bm \phi_1^{\delta},\bm \phi_2^{\delta}$, and $\bm \phi_3^{\delta}$ in (\ref{eq:phi}) preserve the symplectic structure of the system.
Suppose that $\Phi_1$ and $\Phi_2$ are two symplectomorphisms. Then, it is easy to show that their composite map $\Phi_2\circ \Phi_1$ is also symplectomorphism due to the chain rule. Thus, the symplectomorphism of Algorithm \ref{alg:int_net2} can be guaranteed by Theorem \ref{thm:sym_phi}.

\begin{theorem}
For a given $\delta$, the mapping $\bm \phi_1^{\delta}$, $\bm \phi_2^{\delta}$, and $\bm \phi_3^{\delta}$ in (\ref{eq:phi}) are symplectomorphisms.
\label{thm:sym_phi}
\end{theorem}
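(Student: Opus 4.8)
The plan is to recognize that the three maps in \eqref{eq:phi} are nothing but the exact time-$\delta$ flows of the Hamiltonians $\mathcal{H}_A$, $\mathcal{H}_B$, and $\omega\mathcal{H}_C$ on the extended phase space equipped with the two-form $\textrm{d}\bm q\wedge\textrm{d}\bm p+\textrm{d}\bm x\wedge\textrm{d}\bm y$. The quickest route invokes the classical fact that the flow of \emph{any} smooth Hamiltonian preserves the symplectic form (the Lie derivative of the form along a Hamiltonian vector field vanishes by Cartan's formula), which immediately gives the claim. However, to match the explicit, computational style used in the proof of Theorem \ref{thm:sym_Ft}, I would instead verify each map directly by pulling back the extended two-form and checking that it is unchanged.

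For $\bm \phi_1^{\delta}$, I would write out the output coordinates $(\bm q',\bm p',\bm x',\bm y')$ from \eqref{eq:phi}, take their differentials, and assemble $\textrm{d}\bm q'\wedge\textrm{d}\bm p'+\textrm{d}\bm x'\wedge\textrm{d}\bm y'$. Two kinds of unwanted terms appear: ``diagonal'' Hessian terms such as $\sum_{i,j}\partial^2\mathcal{H}/\partial q_i\partial q_j\,\textrm{d}q_i\wedge\textrm{d}q_j$, which vanish because a symmetric coefficient is contracted against an antisymmetric wedge; and ``mixed'' terms proportional to $\textrm{d}\bm q\wedge\textrm{d}\bm y$ arising separately from the $\bm p'$ and the $\bm x'$ updates. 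The heart of the argument is that these two mixed contributions cancel precisely because the mixed partials of $\mathcal{H}$ commute, i.e. $\partial^2\mathcal{H}/\partial q_i\partial y_j=\partial^2\mathcal{H}/\partial y_j\partial q_i$; this symmetry plays exactly the role that the symmetric-Jacobian condition \eqref{eq:partial_T} played in Theorem \ref{thm:sym_Ft}. The computation for $\bm \phi_2^{\delta}$ is identical after swapping the roles of $(\bm q,\bm p)$ and $(\bm x,\bm y)$ and using $\mathcal{H}_B=\mathcal{H}(\bm x,\bm p)$.

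For $\bm \phi_3^{\delta}$ the map is \emph{linear}, built from the rotation block $\bm R^{\delta}$. I would write it as a $4N\times 4N$ matrix $\bm M$ acting on $(\bm q,\bm p,\bm x,\bm y)^\top$ and verify the defining identity $\bm M^\top \bm J\bm M=\bm J$, where $\bm J$ is the structure matrix of the extended form. The verification reduces to the single trigonometric identity $\cos^2(2\omega\delta)+\sin^2(2\omega\delta)=1$ together with the orthogonality of $\bm R^{\delta}$; equivalently one may simply note that $\bm \phi_3^\delta$ is the exact flow of $\omega\mathcal{H}_C$ and reuse the abstract fact. Finally, since \eqref{eq:overlineHamilton} exhibits each map as a genuine Hamiltonian flow and the composition of symplectomorphisms is again a symplectomorphism (as already noted for Algorithm \ref{alg:int_net2}), the Strang-splitting integrator inherits the property.

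I expect the main obstacle to be purely bookkeeping in the $\bm \phi_3^{\delta}$ case: because the rotation couples all four blocks, one must track the cross terms $\textrm{d}\bm q\wedge\textrm{d}\bm y$ and $\textrm{d}\bm x\wedge\textrm{d}\bm p$ carefully and confirm that they reorganize into the clean form $\textrm{d}\bm q\wedge\textrm{d}\bm p+\textrm{d}\bm x\wedge\textrm{d}\bm y$. For $\bm \phi_1^\delta$ and $\bm \phi_2^\delta$ the only genuine subtlety is ensuring that $\mathcal{H}_\theta$ is $C^2$ so that its Hessian is symmetric, which holds for a network built from smooth activations.
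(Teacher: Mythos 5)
Your proposal is correct and follows essentially the same route as the paper: a direct pullback of the extended two-form $\textrm{d}\bm q\wedge\textrm{d}\bm p+\textrm{d}\bm x\wedge\textrm{d}\bm y$ under each map, with the mixed $\textrm{d}\bm q\wedge\textrm{d}\bm y$ contributions from the $\bm p$- and $\bm x$-updates cancelling by equality of mixed partials of $\mathcal{H}_\theta$, the case of $\bm\phi_2^{\delta}$ handled by symmetry, and $\bm\phi_3^{\delta}$ verified by a direct computation with the rotation block $\bm R^{\delta}$. Your additional remarks (the vanishing of the symmetric-Hessian-against-antisymmetric-wedge terms, the $\bm M^\top\bm J\bm M=\bm J$ formulation, and the smoothness requirement on $\mathcal{H}_\theta$) are consistent refinements of the paper's argument rather than a different method.
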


\begin{proof}
Let
\begin{equation}
(\bm t_j^q, \bm t_j^p, \bm t_j^x, \bm t_j^y) = \bm \phi_j^{\delta}(\bm q, \bm p , \bm x, \bm y),~~j= 1,2,3.
\end{equation}

From the first equation of (\ref{eq:phi}), we have
\begin{equation}
\begin{aligned}
&\textrm{d}\bm t_1^q \wedge \textrm{d}\bm t_1^p + \textrm{d}\bm t_1^x \wedge \textrm{d}\bm t_1^y\\
=& \textrm{d}\bm q \wedge \textrm{d}\left[\bm p - \delta \frac{\partial \mathcal{H}_{\theta}(\bm q, \bm y)}{\partial \bm q} \right]+ \textrm{d}\left[\bm x + \delta \frac{\partial \mathcal{H}_{\theta}(\bm q, \bm y)}{\partial \bm p} \right]\wedge \textrm{d}\bm y \\
=& \textrm{d}\bm q \wedge \textrm{d}\bm p+\textrm{d}\bm x \wedge \textrm{d}\bm y + \delta \left[\frac{\partial \mathcal{H}_{\theta}(\bm q, \bm y)}{\partial \bm q \partial \bm y}-\frac{\partial \mathcal{H}_{\theta}(\bm q, \bm y)}{\partial \bm y \partial \bm q}\right]\textrm{d}\bm q \wedge \textrm{d}\bm y\\
=&\textrm{d}\bm q \wedge \textrm{d}\bm p+\textrm{d}\bm x \wedge \textrm{d}\bm y.
\label{eq:dtk}
\end{aligned}
\end{equation}
Similarly, we can prove that $\textrm{d}\bm t_2^q \wedge \textrm{d}\bm t_2^p + \textrm{d}\bm t_2^x \wedge \textrm{d}\bm t_2^y =\textrm{d}\bm q \wedge \textrm{d}\bm p+\textrm{d}\bm x \wedge \textrm{d}\bm y$.  
In addition, from the third equation of (\ref{eq:phi}), we can directly deduce that $\textrm{d}\bm t_3^q \wedge \textrm{d}\bm t_3^p + \textrm{d}\bm t_3^x \wedge \textrm{d}\bm t_3^y =\textrm{d}\bm q \wedge \textrm{d}\bm p+\textrm{d}\bm x \wedge \textrm{d}\bm y$.
\end{proof}

\begin{figure}
	\centering
  \includegraphics[width=0.8\textwidth]{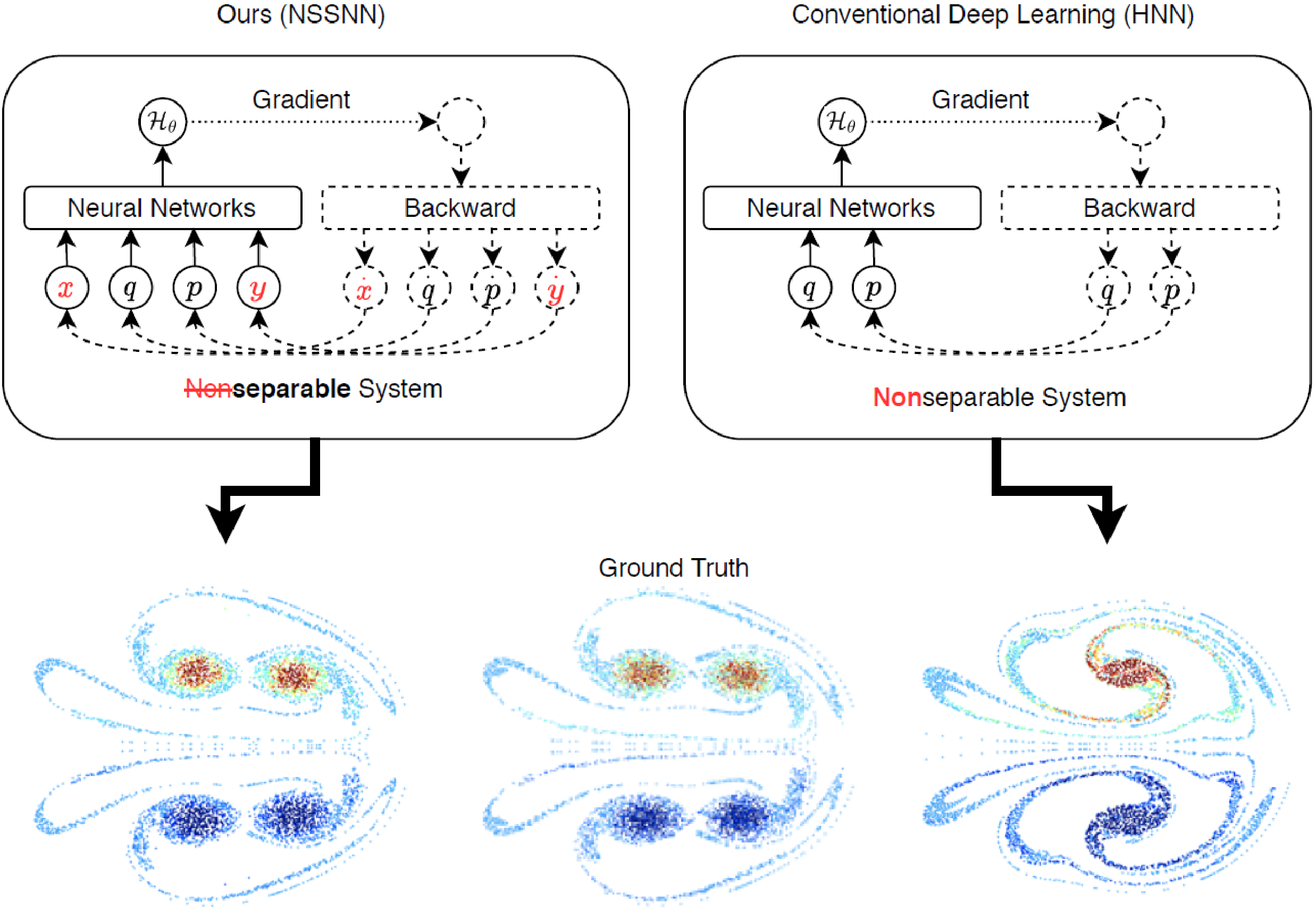}
  \caption{
  Comparison between NSSNN and HNN regarding the network design and prediction results of a vortex flow example. Source: \citep{xiong2020nonseparable}.
  }.
  \label{fig:NSSNNvsHNN}
\end{figure}

We show a motivational example in Figure~\ref{fig:NSSNNvsHNN} by comparing our approach with a traditional HNN method \citep{Greydanus2019} regarding their structural designs and predicting abilities. We refer the readers to Section~\ref{subsec:vortex} for a detailed discussion.
As shown in Figure~\ref{fig:NSSNNvsHNN}, the vortices evolved using NSSNN are separated nicely as the ground truth, while the vortices merge together using HNN due to the failure of conserving the symplectic structure of a nonseparable system.
The conservative capability of NSSNN springs from our design of the auxiliary variables (red $x$ and $y$) which converts the original nonseparable system into a higher dimensional quasi-separable system where we can adopt a symplectic integrator.

\subsection{Roe Neural Networks (RoeNet)}

\begin{figure}[ht]
  \centering
  \includegraphics[width=0.4\textwidth]{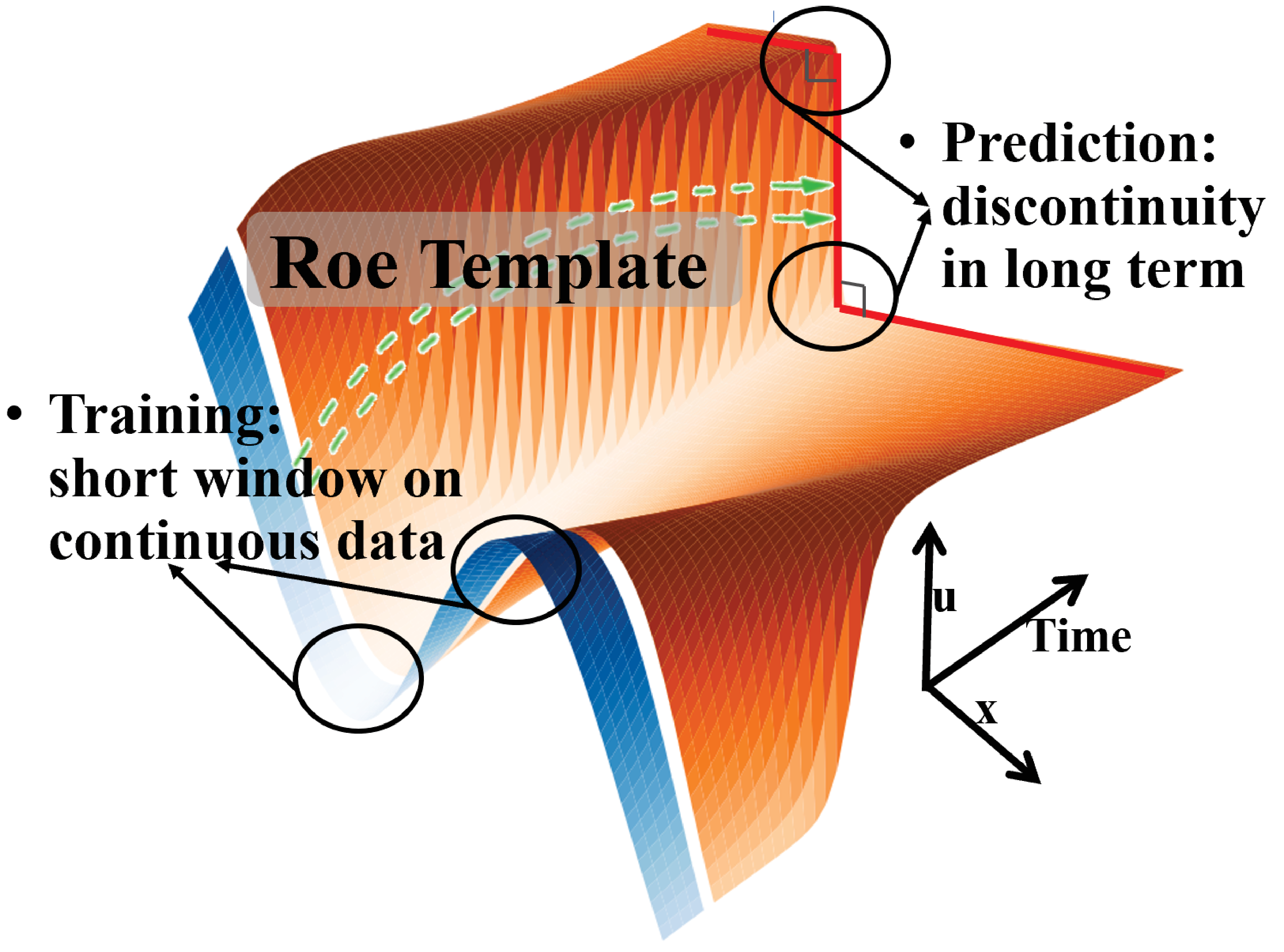}
  \caption{Schematic diagram of RoeNet to predict future discontinuity from smooth observations. The blue band shows the distribution of the training set with respect to time, and the training set does not necessarily contain discontinuous solutions to the equations. Meanwhile, the orange band represents the solutions predicted with RoeNet, which may contain discontinuous solutions. Source: \citep{tong2024roenet}.}
  \label{fig:overview}
\end{figure}

We introduce our design of the Roe template with pseudoinverse embedding, which accommodates the data processing and training over the entire learning pipeline. In particular, we present our basic ideas in Section \ref{sec:template}, a detailed description of our network architecture in Section \ref{sec:nn}.

\subsubsection{Roe template with Pseudoinverse Embedding}\label{sec:template}
Recall the one-dimensional hyperbolic conservation law described in \eqref{eq:conserv_u}, without a given $\bm F$, we learn the weak solution of \eqref{eq:conserv_u} using a neural network that incorporates the framework of a Roe solver.
For time integration of $\bm u$ in \eqref{eq:roeeq}, we need to construct the matrix functions $\bm L$ and $\bm \Lambda$. Since learning a tiny parameter space is impractical, using neural networks to approximate $\bm {L}$ and $\bm {\Lambda}$ directly in \eqref{eq:LLambda} is ineffective given that the number of learnable parameters is limited by the number of components $N_c$ of $\bm u$. To enhance the expressiveness of our model, we use neural network $\bm {L}_{\theta}$ and $\bm{\Lambda}_{\phi}$ to replace
$\bm {L}$ and $\bm {\Lambda}$ in \eqref{eq:LLambda} respectively. Similar to \eqref{eq:LLambda}, the inputs to $\bm {L}_{\theta}$ and $\bm{\Lambda}_{\phi}$ remains the same as $(\bm u_{j}^n,\bm u_{j+1}^n)$. However, the outputs of $\bm {L}_{\theta}$ and $\bm{\Lambda}_{\phi}$ are now a $N_h\times N_c$ matrix and a $N_h\times N_h$ diagonal matrix respectively, where the positive integer $N_h$ is a hidden dimension. Furthermore, we introduce the concept of pseudoinverses by replacing $\bm {L}^{-1}$ with
\begin{equation}
\bm L_{\theta}^{+} = (\bm L_{\theta}^{T} \bm L_{\theta})^{-1}\bm L_{\theta}^T.
\label{eq:pseudoinverse}
\end{equation}
Here, the transpose and inverse operations are applied to the output matrix, that is
\begin{equation}
\bm L_{\theta}^{+}(\bm u_{j}^n,\bm u_{j+1}^n) = [\bm L_{\theta}(\bm u_{j}^n,\bm u_{j+1}^n)^{T} \bm L_{\theta}(\bm u_{j}^n,\bm u_{j+1}^n)]^{-1}\bm L_{\theta}^T(\bm u_{j}^n,\bm u_{j+1}^n).
\end{equation}

Substituting $\bm {L}_{\theta}$, $\bm{\Lambda}_{\phi}$, and (\ref{eq:pseudoinverse}) into (\ref{eq:roeeq}) and \eqref{eq:LLambda} yields
\begin{equation}
\begin{aligned}
    \bm{u}_j^{n+1}=&\bm{u}_j^{n} -\frac12  \lambda_r (\bm L^{n}_{j+\frac{1}{2},\theta})^+(\bm \Lambda_{j+\frac{1}{2},\phi}^n-|\bm \Lambda_{j+\frac{1}{2},\phi}^n|)\bm L_{j+\frac{1}{2},\theta}^n(\bm u_{j+1}^n-\bm u_{j}^n)\\
    & - \frac12 \lambda_r (\bm L^{n}_{j-\frac{1}{2},\theta})^+(\bm \Lambda_{j-\frac{1}{2},\phi}^n+|\bm \Lambda_{j-\frac{1}{2},\phi}^n|)\bm L_{j-\frac{1}{2},\theta}^n(\bm u_{j}^n-\bm u_{j-1}^n),
\end{aligned}
 \label{eq:roenet}
\end{equation}
with
\begin{equation}
    \bm L_{j+\frac{1}{2},\theta}^n = \bm L_{\theta}(\bm u_{j}^n,\bm u_{j+1}^n),~~~~ \bm \Lambda_{j+\frac{1}{2},\phi}^n = \bm \Lambda_{\phi}(\bm u_{j}^n,\bm u_{j+1}^n).
    \label{eq:LL}
\end{equation}
Equation (\ref{eq:roenet})
serves as our template to evolve the system's states from $\bm{u}_j^{n}$ to $\bm{u}_j^{n+1}$ in RoeNet.

Figure \ref{fig:overview} presents a schematic diagram of RoeNet, which predicts future discontinuities from smooth observations. We note that for hyperbolic conservation laws with discontinuous solutions, RoeNet can accurately forecast long-term outcomes that are either fully or partially discontinuous. This is achievable even when the training data provided cover only a short window and contain limited information on discontinuities.

\subsubsection{Neural network architecture}\label{sec:nn}
Figure \ref{fig:RoeNet} shows an overview of our neural network architecture. In summary, RoeNet consists of $\bm L_{\theta}$ and $ \bm \Lambda_{\phi}$, two networks embedded in \eqref{eq:roenet} to serve as our template to evolve the system's states from $\bm{u}_j^{n}$ to $\bm{u}_j^{n+1}$.

\begin{figure}[ht]
  \centering
  \includegraphics[width=1.\linewidth]{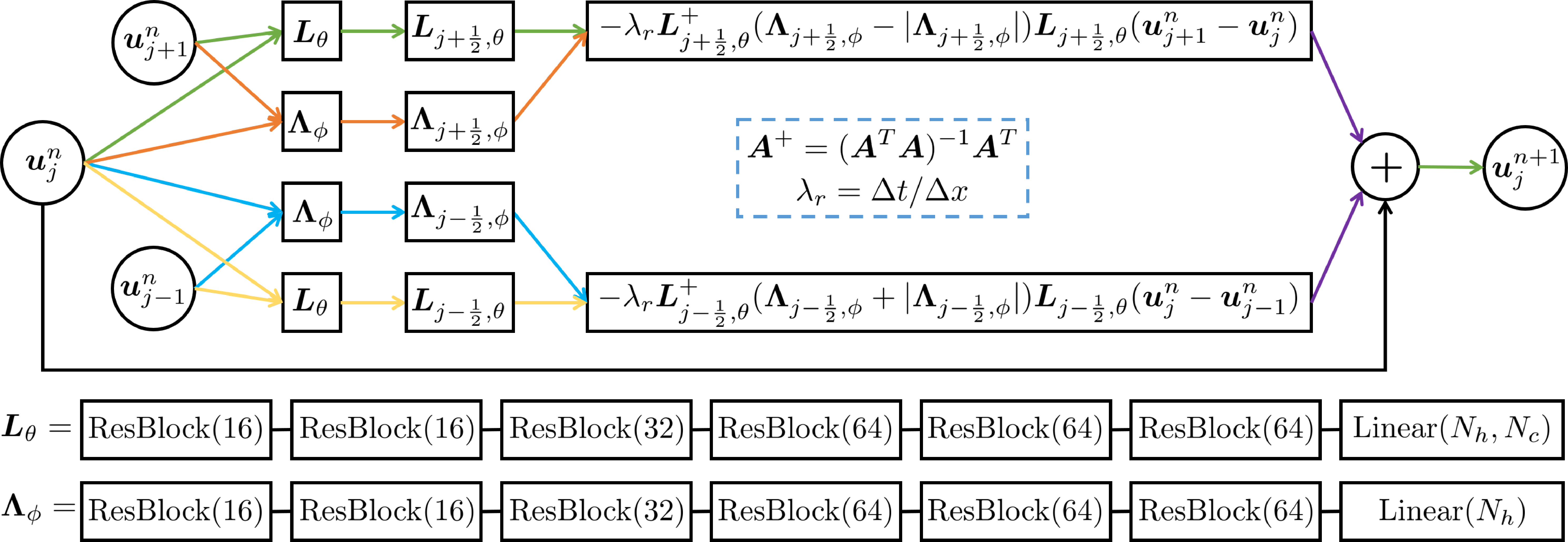}\\[0.5mm]
  \caption{The architecture of the neural network that evolves the system's states from $\bm{u}_j^{n}$ to $\bm{u}_j^{n+1}$ in RoeNet. This network takes the current conserved quantity $\bm u^n_j$ and its direct neighbors, $\bm u^n_{j-1}$ and $\bm u^n_{j+1}$, as the inputs and outputs the conserved quantity $\bm u^{n+1}_{j}$ of the next time step. The ResBlock has the same architecture as in \citep{he2016resnet}, only with the 2D convolution layers replaced by linear layers. The number in the parentheses is the dimension of each Resblock output. Source: \citep{tong2024roenet}.}
  \label{fig:RoeNet}
\end{figure}
Specifically, the network in Figure \ref{fig:RoeNet} contains two parts, each consists of a $\bm L_{\theta}$ and a $ \bm \Lambda_{\phi}$. The first part takes $\bm u_{j-1}^n$ and $\bm u_{j}^n$ as input of both $\bm L_{\theta}$ and $ \bm \Lambda_{\phi}$ and outputs $\bm L_{j-\frac{1}{2},\theta}$ through $\bm L_{\theta}$ and $\bm \Lambda_{j-\frac{1}{2},\phi}$ through $\bm \Lambda_{\phi}$. The input $\bm u_{j-1}^n$ and $\bm u_{j}^n$ is a vector $
[\bm u^{n, (1)}_{j-1}, \cdots, \bm u^{n, (N_c)}_{j-1}; \bm u^{n, (1)}_{j},\cdots, \bm u^{n, (N_c)}_{j}]$ of length $2N_c$ with $N_c$ components. The output matrix $L_{j-\frac{1}{2},\theta}$ is of size $(N_c\times N_h)$, and the other output matrix $\bm \Lambda_{j-\frac{1}{2},\phi}$ is a diagonal matrix of size $(N_h\times N_h)$. The second part takes $\bm u_j^n$ and $\bm u_{j+1}^n$ as the input for both $\bm L_{\theta}$ and $ \bm \Lambda_{\phi}$ and outputs $\bm L_{j+\frac{1}{2},\theta}$ through $\bm L_{\theta}$ and $\bm \Lambda_{j+\frac{1}{2},\phi}$ through $ \bm \Lambda_{\phi}$. The input $\bm u_j^n$ and $\bm u_{j+1}^n$ is a vector $[\bm u^{n, (1)}_{j},\cdots, \bm u^{n, (N_c)}_{j}; \bm u^{n, (1)}_{j+1}, \cdots, \bm u^{n, (N_c)}_{j+1}]$ of length $2N_c$. The output matrices $\bm L_{j+\frac{1}{2},\theta}$ and  $\bm \Lambda_{j+\frac{1}{2},\phi}$ take the same form as the output matrices in the first part. Given the four output matrices $\bm L_{j-\frac{1}{2},\theta}$, $\bm \Lambda_{j-\frac{1}{2},\phi}$, $\bm L_{j+\frac{1}{2},\theta}$, and  $\bm \Lambda_{j+\frac{1}{2},\phi}$, we combine them through (\ref{eq:roenet}) and \eqref{eq:LL} to obtain $\bm u_j^{n+1}$. Networks $\bm L_{\theta}$ and $ \bm \Lambda_{\phi}$ both consist of a chain of ResBlocks \cite{he2016resnet} with a linear layer of size $N_h\times N_c$ and $N_h$ at the end, respectively. 
The ResBlock architecture comprises two convolutional layers and one ReLU layer. 
The $N_h$ learned parameters by $ \bm \Lambda_{\phi}$ is then transferred into a diagonal matrix of $N_h\times N_h$ with the learned parameters as its diagonal. The ResBlock has the same architecture as in \cite{he2016resnet}, only with the 2D convolution layers replaced by linear layers. Note that the number in the parentheses is the dimension of the output of each ResBlock, and the computation procedure for grid cell $j$ is applied to all grid cells. Since the computation of each node is independent of other cells except the adjacent cells, we could train them in parallel to achieve high efficiency.

In addition, we implement two ways of padding to address different boundary conditions. For periodic boundary conditions, we use the periodic padding, e.g., if $j=0$, then $\bm u_{j-1} = \bm u_{N_g}$, where $N_g$ is the number of the grid node. For Neumann boundary conditions, we use the replicate padding, e.g., if $j=0$, then $\bm u_{j-1}=\bm u_0$.

By introducing a hidden dimension $N_h$, we have increased the number of network parameters and enhanced the network's expressive capacity. However, the expansion of the parameter space could lead to multiple numerical optimal solutions during training. To address this, we employ a regularized loss function, which helps ensure that the network parameters converge to a local optimal solution. Importantly, our goal is to use the network to accurately model the evolution of PDEs over time and space; achieving a unique solution for the network parameters is not a requirement.

Algorithm \ref{alg:Roenet} summarizes the recursive relation from the input layer
\begin{equation}
    \bm u(t= 0) = [\bm u_1(t=0),\cdots,\bm u_{N_g}(t=0)]
\end{equation}
to the output layer
\begin{equation}
    \hat{\bm u}(t= T_{span}) = [\hat{\bm u}_1(t=T_{span}),\cdots,\hat{\bm u}_{N_g}(t=T_{span})],
\end{equation}
for each time step in RoeNet. Here $N_g$ is the spatial grid size and $T_{span}$ is the time span $T_{train}$ or $T_{predict}$.
As described in Algorithm \ref{alg:Roenet}, feeding $\bm u(t = 0)$, $T_{span} = T_{train}$, temporal step $\Delta t$, spatial step $\Delta x$, and the constructed networks $\bm L_{\theta}$ and $\bm \Lambda_{\phi}$ into RoeNet, we could get predicted $\hat{\bm u}(t=  T_{train})$.
Then, we choose the MSE as our loss function
\begin{equation}
    \mathcal L_{RoeNet}=\|\bm u(t=  T_{train})-\hat{\bm u}(t=  T_{train})\|_{MSE}.
\label{eq:loss}
\end{equation}

\begin{algorithm}
  \caption{Recursive relation from the input layer to the output layer in RoeNet. Here, $\bm{u}_j,~j=1,2,\cdots,N_g$ represents discretized points $\bm{u}$ in spatial coordinate. Source: \citep{tong2024roenet}.}
  \label{alg:Roenet}
  \begin{algorithmic}[1]
    \State \textbf{Inputs:} $\bm{u}_j(t=0),~j=1,2,\cdots,N_g$, $T_{span}$, $\Delta t$, $\Delta x$, $\bm{L}_{\theta}$, $\bm{\Lambda}_{\phi}$
    \State  \textbf{Outputs:}  $\hat{\bm{u}}_j(t = T_{span})=\bm{u}_j^{N_t}$
    \State  $N_t = \text{floor}(T_{span}/\Delta t)$
    \State  $\lambda_r = \Delta t/\Delta x$
    \State  $\bm{u}_j^0 = \bm{u}_j(t=0),~j=1,2,\cdots,N_g$
    \For{$n = 0$ \text{to} $N_{t-1}$}
      \State  Calculate $\bm{L}_{j\pm\frac{1}{2},\theta}^n$, $\bm{\Lambda}_{j\pm\frac{1}{2},\phi}^n$ by substituting $\bm{u}_j^n,~j=1,2,\cdots,N_g$, $\bm{L}_{\theta}$, and $\bm{\Lambda}_{\phi}$ into \eqref{eq:LL}
      \State  Calculate $\bm{u}_j^{n+1},~j=1,2,\cdots,N_g$ by substituting $\bm{u}_j^n$, $\bm{L}_{j\pm\frac{1}{2},\theta}^n$, $\bm{\Lambda}_{j\pm\frac{1}{2},\phi}^n$, and $\lambda_r$ into \eqref{eq:roenet}
    \EndFor
  \end{algorithmic}
\end{algorithm}
\subsection{Neural Vortex Method (NVM)}

To accurately and efficiently quantify fluid dynamics, we propose the novel NVM framework. This framework utilizes physics-informed neural networks to extract and translate information from the Eulerian specification of the flow field (or images of flow visualizations) into knowledge about the underlying fluid field. As detailed in Figure \ref{fig:overview2}, we integrate these networks with a vorticity-to-velocity Poisson solver to build a fully automated toolchain that extracts high-resolution Eulerian flow fields from Lagrangian inductive priors. This design addresses the challenge of learning directly from high-dimensional observations, such as images, which traditional methods struggle to convert directly into velocity and pressure fields.

\begin{figure}
  \centering
  \includegraphics[width=1.\textwidth]{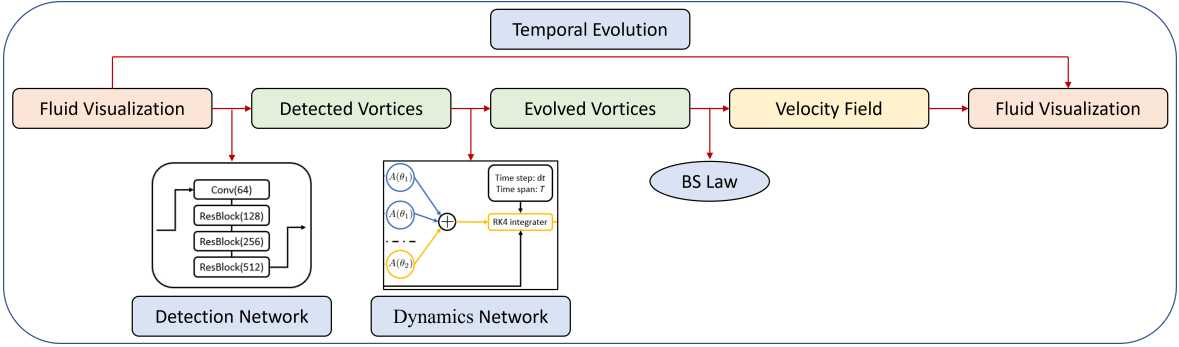}\\
  \caption{Schematic diagram of the NVM. Our system is constituted of two networks, the detection network and the dynamics network, which are embedded with a vorticity-to-velocity Poisson solver. Source: \citep{xiong2023neural}.}
  \label{fig:overview2}
\end{figure}

We construct a vortex detection network in Section \ref{subsec:detnet} to identify the positions and the vorticity of Lagrangian vortices from a grid-based velocity field, which from a mathematical perspective connects \eqref{eq:uNS} with \eqref{eq:vortex}. This approach simplifies the vorticity field to include only the detected vortices. Given the detected vortices, we then use a vortex dynamics network in Section \ref{subsec:dynet} to learn the underlying governing dynamics of these finite structures. Dynamics networks accurately model the r.h.s.of \eqref{eq:vortex} under various conditions, resolving the longstanding problem in LVM.

The training of the NVM involves two primary steps: training the detection and dynamics networks. We employ high-fidelity data from direct numerical simulation (DNS) of interactions among 2 to 6 vortices, although the model can generalize to any vorticity field with an arbitrary number of vortices. We initially train the detection network using data from randomly generated vortices and their vorticity fields, then identify vortices' positions and strengths using this trained network to facilitate the subsequent training of the dynamics network.


\subsubsection{Detection network}
\label{subsec:detnet}

The input of the detection network is a vorticity field of size $200\times200\times1$. As shown in Figure \ref{fig:detection}, we first feed the vorticity field into a small one-stage detection network and get the feature map of size $25\times25\times512$ (we downsampled 3 times). The detection network consists of a Conv2d-BatchNorm-ReLU combo and a 6-layer-structured ResBlock chain whose size can be adjusted dynamically to the complexity of the problem. The primary reason for downsampling is to avoid extremely unbalanced data and multiple predictions for the same vortex. We then forward the feature map to 2 branches. In the first branch, we conduct a $1\times1$ convolution to generate a probability score $\hat{p}$ of the possibility that there exists a vortex. If $\hat{p} > 0.5$, we believe there exists a vortex within the corresponding cells of the original $200\times200\times1$ vorticity field. In the second branch, we predict the relative position to the left-up corner of the cell of the feature map if the cell contains a vortex. Afterward, we set a bounding box of $10\times10$ around these
predicted vortices and use the weighted average of the positions of the cells of the original vorticity field to find the exact position of the vortex. Finally, the vortex particle strength is calculated as the sum of the value of the cells in the bounding box normalized by the cell area.

\begin{figure}
  \centering
  \includegraphics[width=1.\textwidth]{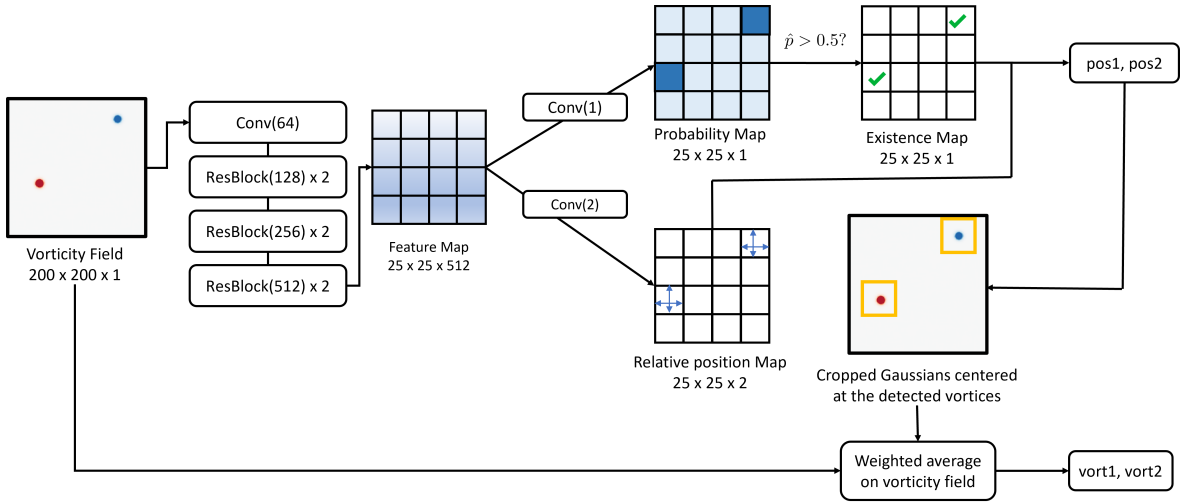}
  \caption{The architecture of the detection network. It takes the vorticity field as input and outputs the position and vortex particle strength for each vortex detected. The \textit{Conv} means the Conv2d-BatchNorm-ReLU combo, and the \textit{ResBlock} is the same as in \citep{he2016resnet}. In each \textit{ResBlock}, we use stride 2 to downsample the feature map. The Resblock chain is six-layer structured. The number in the parenthesis is the output dimension. Source: \citep{xiong2023neural}.}
  \label{fig:detection}
\end{figure}

In the training process, we penalize the wrong position detection only if the cell containing a vortex in the ground truth given by DNS is not detected. This idea is similar to the real-time object detection in \citep{redmon2016yolo}. We do not use the weighted average method to find the position in the training to ensure the detection network can produce detection results as accurately as possible. We use the focal loss \citep{lin2017focal} to further relieve the unbalanced classification problem.

We mainly use the detection network to generate training data for the dynamics network because we want to use the high-resolution data generated by the method mentioned in Section \ref{subsec:data_gen} instead of by the approximate particle method (BS law). Moreover, there are many situations where BS law is inapplicable, as discussed previously in Section \ref{LVM}. The detection network enables us to find the positions of the vortices accurately regardless of the situation.

The detection network is responsible for providing necessary information to the dynamics network. After the training, we use the well-trained detection network to detect the vortices in the initial vorticity fields and the evolved vorticity field, both generated by the method in Section \ref{subsec:data_gen}. We then apply the nearest-neighbor method to pair the vortices detected in these two fields. Figure \ref{fig:detection123} shows the case of two fields at $t=0$ and $t=0.2$. The idea of nearest-neighbor pairing can be perceived from Figure \ref{fig:detection123} (c). The sample, or these two fields, is dropped if different numbers of vortices are detected in the initial and evolved fields or if a large difference exists in the vorticity of paired vortices. The successfully detected vortices in the initial and evolved vorticity fields are passed together into the dynamics network for its training.

\begin{figure}
  \centering
  \includegraphics[width=1.\textwidth]{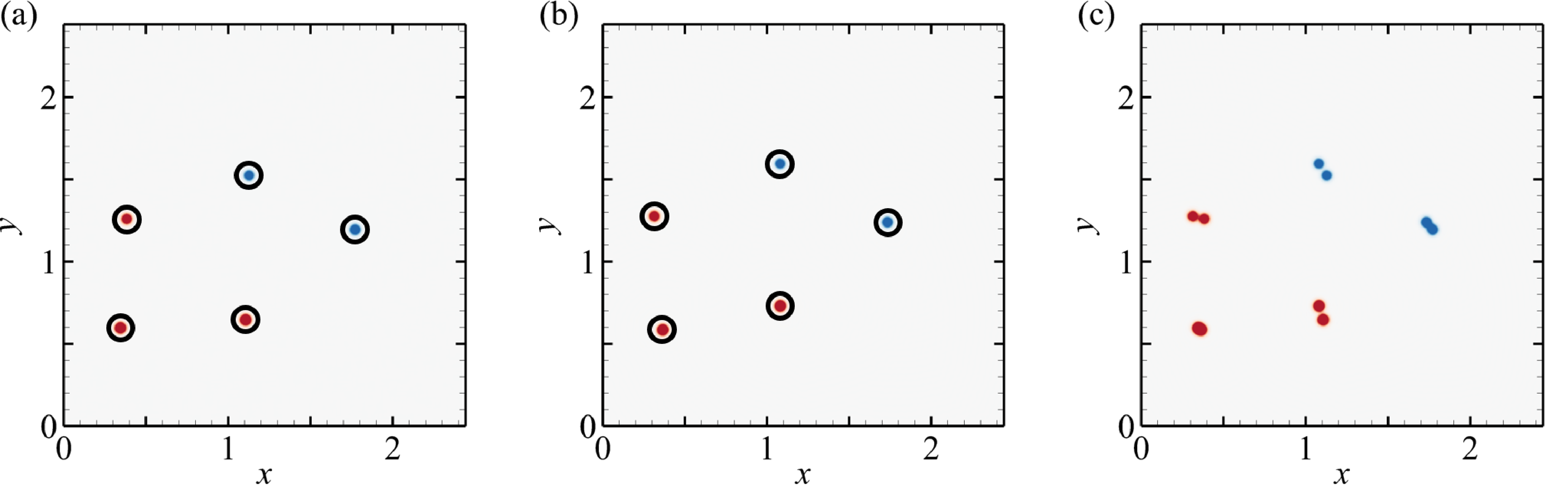}\\
  \caption{A example of vorticity contour at (a) $t=0$, (b) $t=0.2$, and (c) superposition of $t=0$ and $t=0.2$. The black circles indicate the location recognized by the detection network. The evolution from (a) to (b) is calculated by DNS. Source: \citep{xiong2023neural}.}
  \label{fig:detection123}
\end{figure}

\subsubsection{Dynamics network}
\label{subsec:dynet}

To learn the underlying dynamics of the vortices, we build a graph neural network similar to \citep{battaglia2016interaction}. We predict the velocity of one vortex due to influences exerted by the other vortices and the external force. Then we use the fourth-order Runge--Kutta integrator to calculate the position in the next timestamp. As shown in Figure \ref{fig:dynamics}, for each vortex, we use a neural network $A(\theta_1)$ to predict the influences exerted by the other vortices and add them up. Specifically, for each $i$th vortex, we consider the vortex $j (j\neq i)$. The difference of their positions can be calculated by $\textrm{diff}_{ij}=\textrm{pos}_i-\textrm{pos}_j$, and their L2 distance is $\textrm{dist}_{ij}=\|\textrm{diff}_{ij}\|_2$. The input of the $A(\theta_1)$ is the vector $(\textrm{diff}_{ij}, \textrm{dist}_{ij}, \textrm{vort}_j)$ of length 4. Here, $\textrm{pos}$ and $\textrm{vort}$ are detected by the detection network.
The output of $A(\theta_1)$ is a vector with the same dimension of the flow field, characterizing the induced velocity of the $j$th vortex to the $i$th vortex. In this way, we can calculate the induced velocity of each vortex $j$ ($j\neq i$) on the vortex $i$. We sum up all the induced velocities on the vortex $i$ and treat the result as the induced velocity exerted by the other vortices.

\begin{figure}
  \centering
  \includegraphics[width=1.\textwidth]{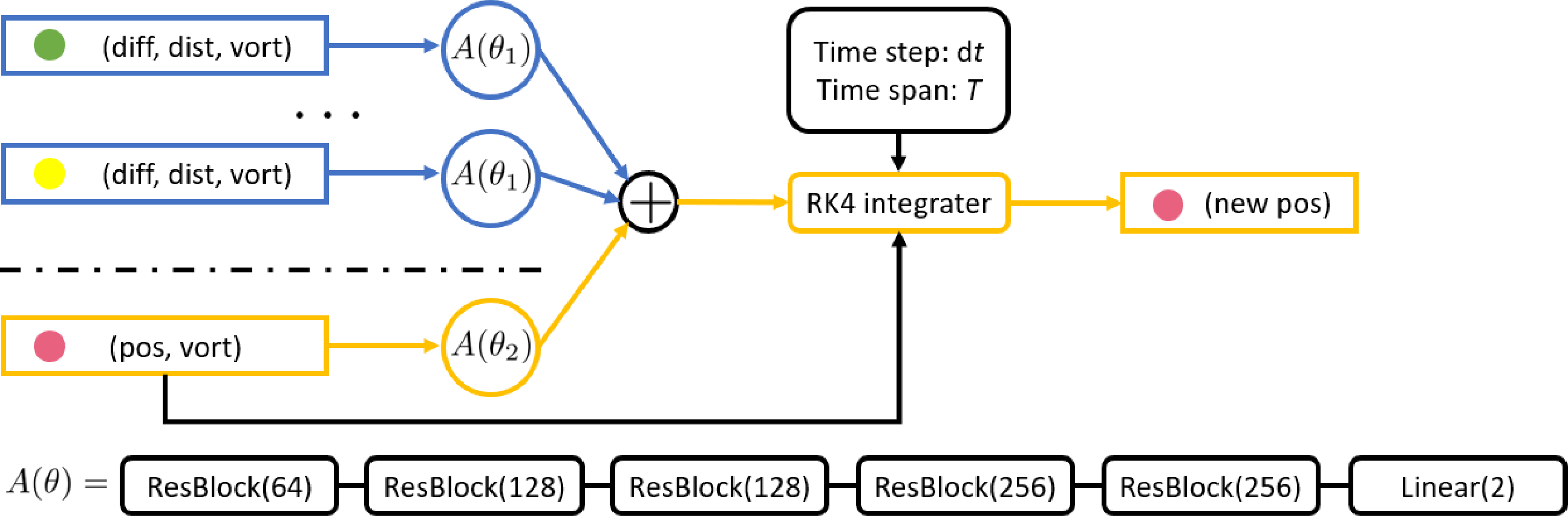}
  \caption{The architecture of the dynamics network. It takes the particle's attribution as input and outputs each vortex's position. The \textit{ResBlock} has the same architecture as in \citep{he2016resnet} with the convolution layers replaced by linear layers. The number in the parenthesis is the output dimension. Source: \citep{xiong2023neural}.}
  \label{fig:dynamics}
\end{figure}

In addition, we use another neural network $A(\theta_2)$, to predict the influence caused by the external force, which is determined by the local vorticity and the position of the vortex. The input of $A(\theta_2)$ is a vector of length 3. The output is the influence exerted by the environment on the vortex $i$, i.e., the induced velocity of the external force to $i$th vortex.

The reason we separate the induced velocity into two parts, i.e., $A(\theta_1)$ and $A(\theta_2)$, is as follows. On the one hand, the induced velocities between vortex particles are global, and exhibit a certain symmetry, i.e., the vortex particles interact with each other following the same law. In contrast, the influence of external forces on vortex particles is usually local and direct; thus, we do not need to consider the interaction between particles. The effect of the vortex stretching term in three-dimensional vortex flows or diffusion term in viscous flows is also local and should be included in network $A(\theta_2)$. Note that both the outputs of $A(\theta_1)$ and $A(\theta_2)$ are a vector with the same dimension of the flow field. Thus, we can add the two kinds of influence together, whose result is defined as the velocity of the vortex $i$. We feed the velocity into the fourth-order Runge--Kutta integrator to obtain the predicted position of vortex $i$.

In addition, in predicting the evolution of the flow field, NVM replaces the discrete BS method with a dynamics network composed of ResBlocks. We chose a 5-layer ResBloks to improve the expressiveness of the dynamics network so that we can learn dynamics of different complexity on the same network. Since the dynamics network with 5-layer ResBloks is more complex than the discrete BS method, the computational cost of NVM is higher than that of the Lagrangian vortex method. We remark that although the computational cost of ResBlocks itself is relatively large in NVM, the number of vortex particles needed to predict the evolution of the flow field using NVM is much smaller. Therefore, the overall computational cost of NVM can be greatly reduced.

\section{Results}

We present several experiments here to highlight the key advantages of our methodologies. For additional examples and ablation tests, please refer to \citep{tong2021symplectic,xiong2020nonseparable,xiong2023neural,tong2024roenet}.

\subsection{Taylor-nets}
\subsubsection{Dataset generation and training settings}\label{subsec:data}
To make a fair comparison with the ground truth, we generate our training and testing datasets by using the same numerical integrator based on a given analytical Hamiltonian. In the learning process, we generate $N_{train}$ training samples, and for each training sample, we first pick a random initial point $(\bm{q}_0,\bm{p}_0)$ (input), then use the symplectic integrator discussed in Section \ref{syminte} to calculate the value $(\bm{q}_n,\bm{p}_n)$ (target) of the trajectory at the end of the training period $T_{train}$. We do the same to generate a validation dataset with $N_{validation}=100$ samples and the same time span as $T_{train}$ and calculate the validation loss $L_{validation}$ along the training loss $L_{train}$ to evaluate the training process. In addition, we generate a set of testing data with $N_{test}=100$ samples and predicting time span $T_{predict}$ that is around 6000 times larger and calculate the prediction error $\epsilon_p$ to evaluate the predictive ability of the model.
For simplicity, we use $(\bm{\hat{p}}_n,\bm{\hat{q}}_n)$ to represent the predicted values using our trained model.

We remark that our training dataset is relatively smaller than that used by the other methods. Most of the methods, e.g. ODE-net \cite{chen2018neural} and HNN \cite{Greydanus2019}, have to rely on intermediate data in their training data to train the model. That is the dataset 
is 
\[[(\bm{q}_{0}^{(s)}, \bm{p}_{0}^{(s)}),(\bm{q}_{1}^{(s)}, \bm{p}_{1}^{(s)}),\dots,(\bm{q}_{n-1}^{(s)},\bm{p}_{n-1}^{(s)}), (\bm{q}_n^{(s)}, \bm{p}_n^{(s)})]_{s=1}^{N_{train}},\] where $(\bm{q}_{1}^{(s)},\bm{p}_{1}^{(s)})\dots,(\bm{q}_{n-1}^{(s)},\bm{p}_{n-1}^{(s)})$ are $n-1$ intermediate points collected within $T_{train}$ in between $(\bm{q}_{0}^{(s)}, \bm{p}_{0}^{(s)})$ and $(\bm{q}_n^{(s)},  \bm{p}_n^{(s)})$. On the other hand, we only use two data points per sample, the initial data point and the end point, and our dataset looks like 
\[\left[(\bm{q}_{0}^{(s)},\bm{p}_{0}^{(s)}), (\bm{q}_n^{(s)},\bm{p}_n^{(s)})\right]_{s=1}^{N_{train}},\]
which is $n-1$ times smaller the dataset of the other methods, if we do not count $(\bm{q}_{0}^{(s)},\bm{p}_{0}^{(s)})$.
Our predicting time span $T_{predict}$ is around 6000 times the training period used in the training dataset $T_{train}$ (as compared to 10 times in HNN). This leads to a 600 times compression of the training data, in the dimension of temporal evolution. Note that we fix $T_{train}$ and $T_{predict}$ in practice so that we can train our network more efficiently on GPU. One can also choose to generate training data with different $T_{train}$ for each sample to obtain more robust performance.

We use the Adam optimizer \citep{kingma2014adam}. We choose the automatic differentiation method as our backward propagation method. We have tried both the adjoint sensitivity method, which is used in ODE-net \cite{chen2018neural} and the automatic differentiation method. Both methods can be used to train the model well. However, we found that using the adjoint sensitivity method is much slower than using the automatic differentiation method considering the large parameter size of neural networks.

All $A_i$ and $B_i$ in \eqref{eq:Tp_Taylor} are initialized as $A_i, B_i \sim \mathcal{N}(0,\sqrt{2/[N*N_h*(i+1)]})$, where $N$ is the dimension of the system and $N_h$ is the size of the hidden layers. The loss function is

\begin{equation}
L_{train}=\frac{1}{N_{train}}\sum_{s=1}^{N_{train}}\|\bm{\hat{p}}_n^{(s)}-\bm{p}_n^{(s)}\|_1+\|\bm{\hat{q}}_n^{(s)}-\bm{q}_n^{(s)}\|_1.
\label{eq:loss}
\end{equation}
The validation loss $L_{validation}$ is the same as \eqref{eq:loss} but with dataset different from the training dataset. We choose $L1$ loss, instead of Mean Square Error (MSE) loss because of its better performance.

We will introduce the experimental result for an ideal pendulum system, which is defined 
\begin{equation}
   \mathcal {H}(q, p) =\frac{1}{2}p^2-\cos{(q)}.
   \label{eq:pendu}
 \end{equation}
We pick a random initial point for training $(\bm{q}_0,\bm{p}_0)\in \left[-2,2\right]\times\left[-2,2\right]$.

To show the predictive ability of our model, we pick $T_{predict}=20\pi$. We pick 15 as the sample size since we find that small $N_{train}$'s are sufficient to generate excellent results. We use 100 epochs for training, and 10 as the $step\_size$ (the period of learning rate decay), and 0.8 as $\gamma$ (the multiplicative factor of learning rate decay). The learning rate of each parameter group is decayed by $\gamma$ every $step\_size$ epochs, which prevents the model from overshooting the local minimum. The dynamic learning rate can also make our model converge faster. $M$ indicates the number of terms of the Taylor polynomial introduced in the construction of the neural networks \eqref{eq:Tp_Taylor}. Through experimentation, we find that 8 terms can represent most functions well. We choose 16 as $N_h$, the dimension of hidden layers.

\subsubsection{Predictive ability and robustness}\label{sub:compare}

\begin{figure}
  \centering
  \includegraphics[width=1\linewidth]{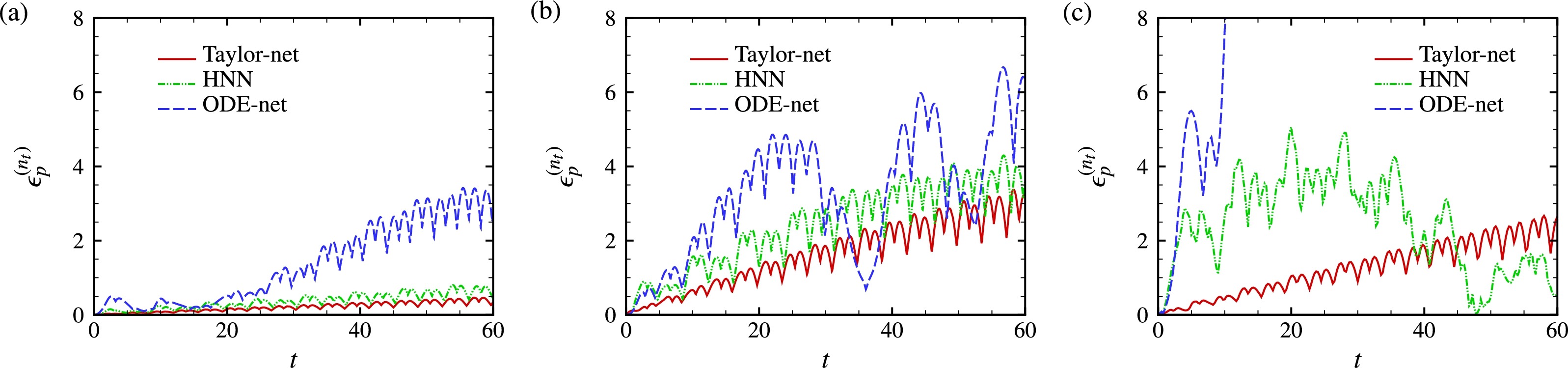}
  \caption{Prediction error $\epsilon_p^{(n_t)}$ at different $t$ from $t=0$ to $t=20\pi$ for the pendulum problem (a) without noise, (b) with noise $\sigma_1, \sigma_2 \sim \mathcal{N}(0,0.1)$, and (c) with noise $\sigma_1, \sigma_2 \sim \mathcal{N}(0,0.5)$. In the figure, $t=n_t \Delta t$, where $\Delta t=0.01$. $\epsilon_p^{(n_t)}$ is the prediction error at the $n_t^{\textrm{th}}$ predicted point among the total $N_T=T_{predict}/\Delta t$ predicted points. We use $T_{train}=0.01$, $T_{train}=0.5$ and $T_{train}=1$ to train the model in (a), (b), and (c), respectively. Source: \citep{tong2021symplectic}.}
  \label{fig:Error}
\end{figure}
\begin{table}
  \caption{Comparison of $\epsilon_p$ for the pendulum problem without noise, with noise $\sigma_1, \sigma_2 \sim \mathcal{N}(0,0.1)$, and with noise $\sigma_1, \sigma_2 \sim \mathcal{N}(0,0.5)$. Source: \citep{tong2021symplectic}.}
  \centering
  \setlength{\tabcolsep}{5mm}{
  \begin{tabular}{lccc}
  \hline
  Methods & Taylor-net & HNN & ODE-net\\
  \hline
  $\epsilon_p$, without noise & 0.213 &0.377&1.416\\
  $\epsilon_p$, with noise $\sigma_1, \sigma_2 \sim \mathcal{N}(0,0.1)$ & 1.667 &2.433&3.301\\
  $\epsilon_p$, with noise $\sigma_1, \sigma_2 \sim \mathcal{N}(0,0.5)$&1.293  &2.416 & 27.114\\
  \hline
  \end{tabular}}
  \label{tab:compa_err}
\end{table}

\begin{figure}
  \centering
  \includegraphics[width=0.9\linewidth]{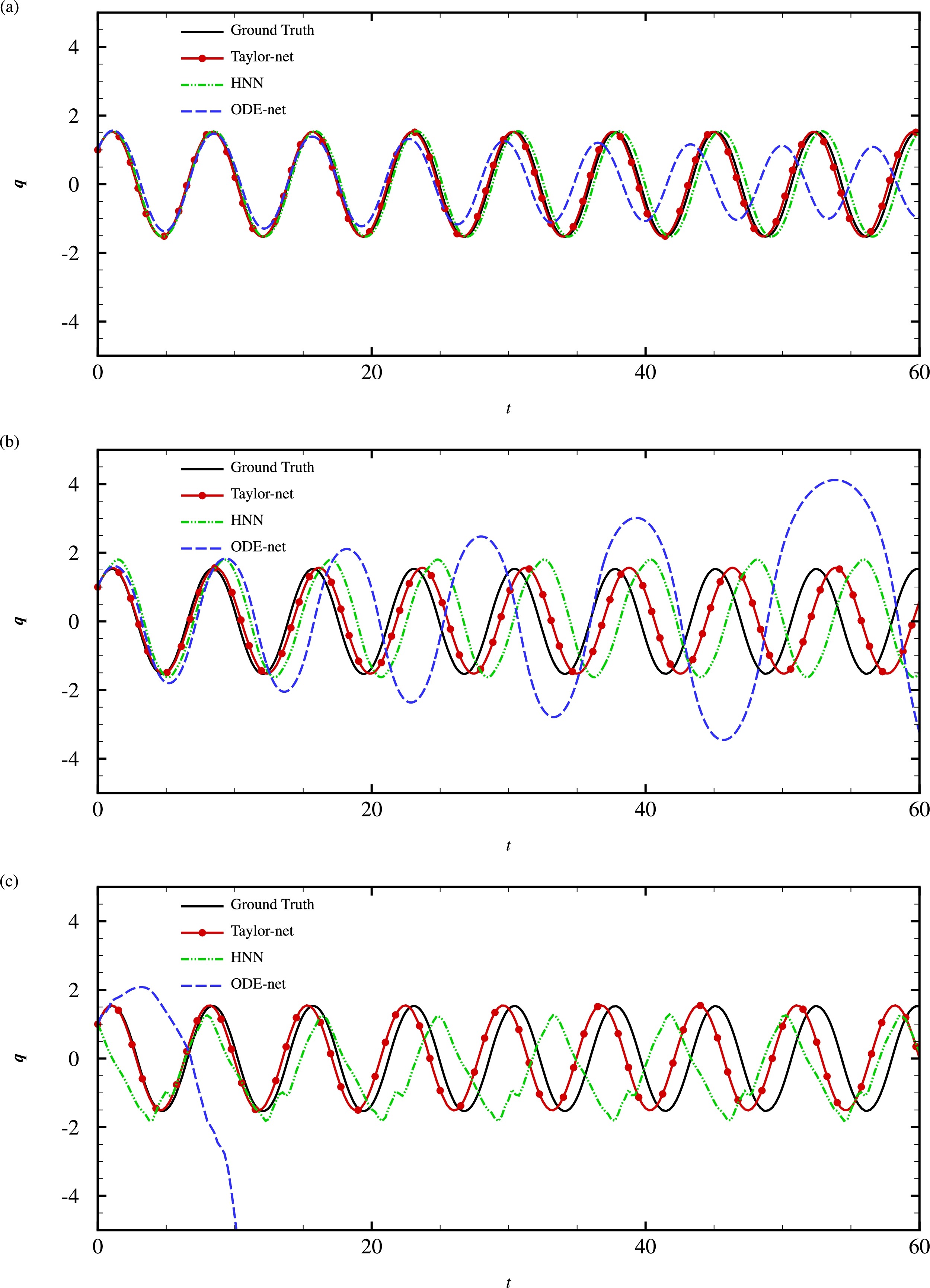}\\

  \caption{Prediction results of position $\bm q$ from $t=0$ to $t=20\pi$ for the pendulum problem using Taylor-net, HNN, and ODE-net  (a) without noise, (b) with noise $\sigma_1, \sigma_2 \sim \mathcal{N}(0,0.1)$, and (c) with noise $\sigma_1, \sigma_2 \sim \mathcal{N}(0,0.5)$. For all the models, we set the initial point as $(\bm{q}_0,\bm{p}_0)=(1,1)$. We use $T_{train}=0.01$, $T_{train}=0.5$ and $T_{train}=1$ to train the model in (a), (b), and (c), respectively. All the methods are trained until the $L_{validation}$ converges. Source: \citep{tong2021symplectic}.}
  \label{fig:prediction_q}
\end{figure}

Now, to assess how well our method can predict the future flow, we compare the predictive ability of Taylor-net with ODE-net and HNN. We apply all three methods on the pendulum problem, and let $T_{train}=0.01$ and $T_{predict}=20\pi$. We evaluate the performance of the models by calculating the average prediction error at each predicted points, defined by
\begin{equation}
  \epsilon_p^{(n_t)}=\frac{1}{N_{test}}\sum_{s=1}^{N_{test}}\|\bm{\hat{p}}^{(s, n_t)}_n-\bm{p}_n^{(s, n_t)}\|_1+\|\bm{\hat{q}}_n^{(s,  n_t)}-\bm{q}_n^{(s, n_t)}\|_1,
\end{equation}
and the average $\epsilon_p^{(n_t)}$ over $T_{predict}$ is
\begin{equation}
  \epsilon_p=\frac{1}{N_T}\sum_{n_t=1}^{N_T}\epsilon_p^{(n_t)},
\end{equation}
where $N_{test}$ represents the testing sample size specified in Section \ref{subsec:data} and $N_T=T_{predict}/\Delta t$ with $\Delta t = 0.01$. After experimentation, we find that Taylor-net has stronger predictive ability than the other two methods. The first row of Table \ref{tab:compa_err} shows the average prediction error of 100 testing samples using the three methods over $T_{predict}$ when no noise is added. The prediction error of HNN is almost double that of Taylor-net, while the prediction error of ODE-net is about 7 times that of Taylor-net. To analyze the difference more quantitatively, we made several plots to help us better compare the prediction results. Figure \ref{fig:Error} shows the plots of prediction error $\epsilon_p^{(n_t)}$ against $t=n_t \Delta t$ over $T_{predict}$ for all three methods. In Figure \ref{fig:prediction_q}, we plot the prediction of position $q$ against time period for all three methods as well as the ground truth in order to see how well the prediction results match the ground truth. From Figure \ref{fig:prediction_q} (a), we can already see that the prediction result of ODE-net gradually deviates from the ground truth as time progresses, while the prediction of Taylor-net and HNN stays mostly consistent with the ground truth, with the former being slightly closer to the ground truth. The difference between Taylor-net and HNN can be seen more clearly in Figure \ref{fig:Error} (a). Observe that the prediction error of Taylor-net is obviously smaller than that of the other two methods, and the difference becomes more and more apparent as time increases. The prediction error of ODE-net is larger than HNN and Taylor-net at the beginning of $T_{predict}$ and increases at a much faster rate than the other two methods. Although the prediction error of HNN has no obvious difference from that of Taylor-net at the beginning, it gradually diverges from the prediction error of Taylor-net.

\subsection{NSSNNs}

\subsubsection{Dataset generation and training settings} \label{sec:Training}

We use 6 linear layers with hidden size 64 to model $\mathcal H_\theta$, all of which are followed by a Sigmoid activation function except the last one. The derivatives $\partial \mathcal H_\theta / \partial \bm p$, $ \partial \mathcal H_\theta/\partial \bm q$, $ \partial \mathcal H_\theta/\partial \bm x$, $ \partial \mathcal H_\theta / \partial \bm y$ are all obtained by automatic differentiation in Pytorch \citep{paszke2019pytorch}. The weights of the linear layers are initialized by Xavier initializaiton \citep{glorot2010understanding}.

We generate the dataset for training and validation using high-precision numerical solver \citep{Tao2016}, where the ratio of training and validation datasets is $9:1$.
We set the dataset $(\bm q_0^{j},\bm p_0^{j})$ as the start input and $(\bm q^{j}, \bm p^{j})$ as the target with $j= 1,2,\cdots,N_s$, and the time span between $(\bm q_0^{j},\bm p_0^{j})$ and $(\bm q^{j}, \bm p^{j})$ is $T_{train}$. Feeding $(\bm q_0,\bm p_0) = (\bm q_0^j, \bm p_0^{j}),~t_0=0,~t = T_{train}$, and time step $\textrm{d}t$ in Algorithm \ref{alg:int_net} to get the predicted variables  $(\hat { \bm q}^j,\hat { \bm p}^j,\hat { \bm x}^j,\hat { \bm y}^j)$. Accordingly, the loss function is defined as
\begin{equation}
\mathcal L_{NSSNN}=\frac{1}{N_{b}}\sum_{j=1}^{N_{b}} \|\bm q^{(j)}-\hat{\bm q}^{(j)}\|_1+\|\bm p^{(j)}-\hat{\bm p}^{(j)}\|_1+\|\bm q^{(j)}-\hat{\bm x}^{(j)}\|_1+\|\bm p^{(j)}-\hat{\bm y}^{(j)}\|_1,
\label{eq:loss}
\end{equation}
where $N_{b}=512$ is the batch size of the training samples. We use the Adam optimizer \citep{kingma2014adam} with learning rate 0.05. The learning rate is multiplied by 0.8 for every 10 epoches.

Taking system $\mathcal H(q,p) = 0.5(q^2+1)(p^2+1)$ as an example, we carry out a series of ablation tests based on our constructed networks to find the proper parameters. Normally, we set the time span, time step and dateset size as $T= 0.01$, $\textrm{d}t = 0.01$ and $N_s=1280$. The choice of $\omega$ in (\ref{eq:overlineHamilton}) is largely flexible since NSSNN is not sensitive to the parameter $\omega$ when it is larger than a certain threshold. 
We pick the $L1$ loss function to train our network due to its better performance. In addition, we already introduced a regularization term in the symplectic integrator embedded in the network; thus, there is no need to add the regularization term in the loss function. The integral time step in the sympletic integrator is a vital parameter, and the choice of $\textrm{d}t$ largely depends on the time span $T_{train}$. In general, we should take relatively small $\textrm{d}t$ for the dataset with larger time span $T_{train}$.

\subsubsection{Spring system}
We compare five implementations that learn and predict Hamiltonian systems. The first one is NeuralODE \citep{chen2018neural}, which trains the system by embedding the network $\bm f_{\theta}\to (\textrm{d} \bm q/\textrm{d} t, \textrm{d} \bm p/\textrm{d} t)$ into the Runge-Kutta (RK) integrator. The other four, however, achieve the goal by fitting the Hamiltonian $\mathcal H_{\theta}\to \mathcal H$ based on (\ref{eq:Hamilton}). Specifically, HNN trains the network with the constraints of the Hamiltonian symplectic gradient along with the time derivative of system variables and then embeds the well-trained $\mathcal H_{\theta}$ into the RK integrator for predicting the system \citep{Greydanus2019}. The third and fourth implementations are ablation tests. One of them is improved HNN (IHNN), which embeds the well-trained $\mathcal H_{\theta}$ into the nonseparable symplectic integrator (Tao's integrator) for predicting. The other is to directly embed $\mathcal H_{\theta}$ into the RK integrator for training, which we call HRK. The fifth method is NSSNN, which embeds $\mathcal H_{\theta}$ into the nonseparable symplectic integrator for training.

\begin{figure}
  \centering
  \includegraphics[width=1.0\linewidth]{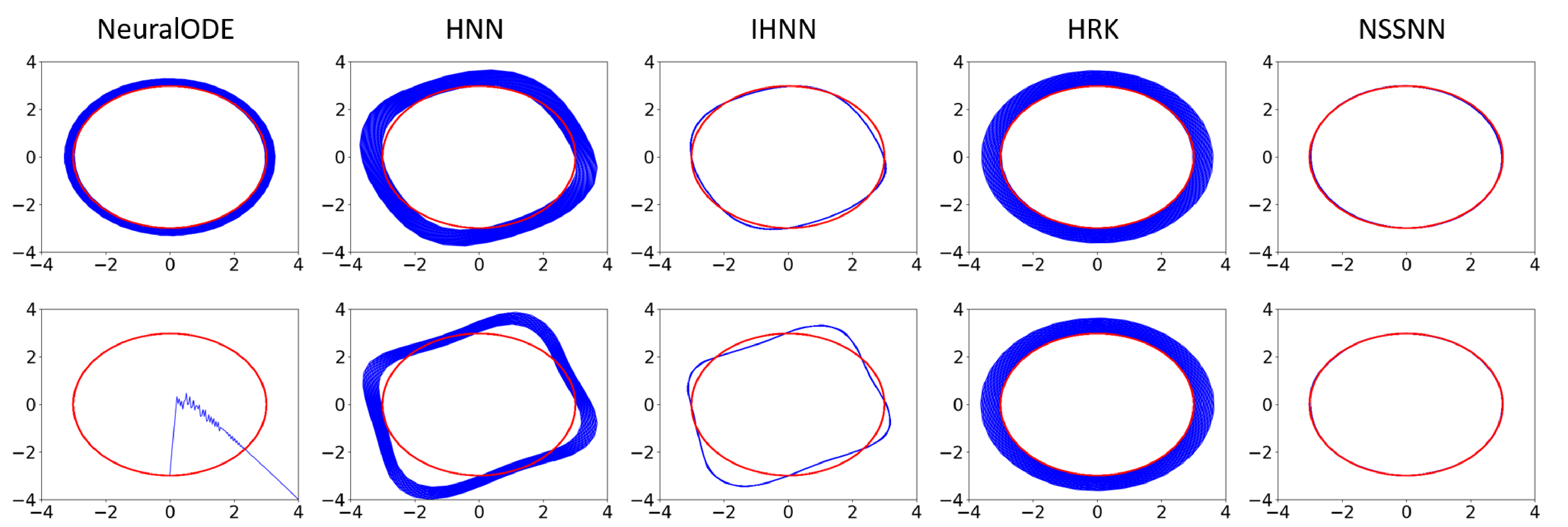}
  \caption{Comparison of prediction results of $(q,p)$ for the spring system $\mathcal H= 0.5(q^2+ p^2)$ from $t= 0$ to $t= 200$ with $(q_0,p_0) = (0,-3)$. The time span of the datasets are $T_{train} = 0.4$ (first row) and $T_{train} = 1$ (second row). The five columns are five different methods NeuralODE, HNN, IHNN, HRK, and NSSNN, respectively. The red line denotes the ground truth; the blue line denotes the prediction, which are perfectly overlapping in NSSNN. The prediction ability of HNN and IHNN improves significantly with the decreasing of $T_{train}$ of the dataset which however may be hard to obtain in the actual experimental measurements. Source: \citep{xiong2020nonseparable}.}
  \label{fig:Spring}
\end{figure}

For fair comparison, we adopt the same network structure (except that the dimension of output layer in NeuralODE is two times larger than that in the other four), the same $L1$ loss function and same size of the dataset, and the precision of all integral schemes is second order, and the other parameters keep consistent with the one in Section \ref{sec:Training}. The time derivative in the dataset for training HNN and IHNN is obtained by the first difference method
\begin{equation}
    \frac{\textrm{d} \bm q}{\textrm{d} t }\approx \frac{\bm q(T_{train})-\bm q(0)}{T_{train}}~~~~\textrm{and}~~~~\frac{\textrm{d} \bm p}{\textrm{d} t }\approx \frac{\bm p(T_{train})-\bm q(0)}{T_{train}}.
    \label{eq:dpq}
\end{equation}

Figure \ref{fig:Spring} demonstrates the differences between the five methods using a spring system $\mathcal H= 0.5(q^2+ p^2)$ with different time span $T_{train} = 0.4,~1$ and same time step $\textrm{d} t = 0.2$.
We can see that by introducing the nonseparable symplectic integrator into the prediction of the Hamiltonian system, NSSNN has a stronger long-term predicting ability than all the other methods. In addition, the prediction of HNN and IHNN lies in the dataset with time derivative; consequently, it will lead to a larger error when the given time span $T_{train}$ is large.

\subsubsection{Modeling vortex dynamics of multi-particle system}\label{subsec:vortex}
For two-dimensional vortex particle systems, the dynamical equations of particle positions $(x_j,y_j),~j = 1,2,\cdots,N_v$ with particle strengths $\Gamma_j$ can be written in the generalized Hamiltonian form as
\begin{equation}
   \Gamma_j \frac{\textrm {d} x_j}{\textrm d t}  = -\frac{\partial \mathcal{H}^p}{\partial y_j},~~~~
    \Gamma_j \frac{\textrm d y_j}{\textrm d t}  = \frac{\partial \mathcal{H}^p}{\partial x_j}
,~~~~\textrm{with}~~~~
   \mathcal H^p =\frac{1}{4\pi} \sum_{j,k=1}^{N_v} \Gamma_j\Gamma_k \log (|x_j-x_k|).
    \label{eq:Hp}
\end{equation}

By including the given particle strengths $\Gamma_j$ in Algorithm \ref{alg:int_net},
we can still adopt the method mentioned above to learn the Hamiltonian in (\ref{eq:Hp}) when there are fewer particles.
However, considering a system with $N_v\gg 2$ particles, the cost to collect training data from all $N_v$ particles might be high, and the training process can be time-consuming. Thus, instead of collecting information from all $N_v$ particles to train our model, we only use data collected from two bodies as training data to make predictions of the dynamics of $N_v$ particles.

Specifically, we assume the interactive models between particle pairs with unit particle strengths $\Gamma_j=1$ are the same, and their corresponding Hamiltonian can be represented as network $\hat{\mathcal H}_{\theta}(\bm x_j,\bm x_k)$, based on which the corresponding Hamiltonian of $N_v$ particles can be written as \citep{battaglia2016interaction,sanchez2019hamiltonian}  
\begin{equation}
    \mathcal{H}_{\theta}^{p} = \sum_{i, j = 1}^{N_v} \Gamma_j \Gamma_k \hat{\mathcal H}_{\theta}(\bm x_j,\bm x_k).
    \label{eq:Hptheta}
\end{equation}
We embed (\ref{eq:Hptheta}) into the symplectic integrator that includes $\Gamma_j$ to obtain the final network architecture.
\begin{figure}
  \centering
  \includegraphics[width=1.0\linewidth]{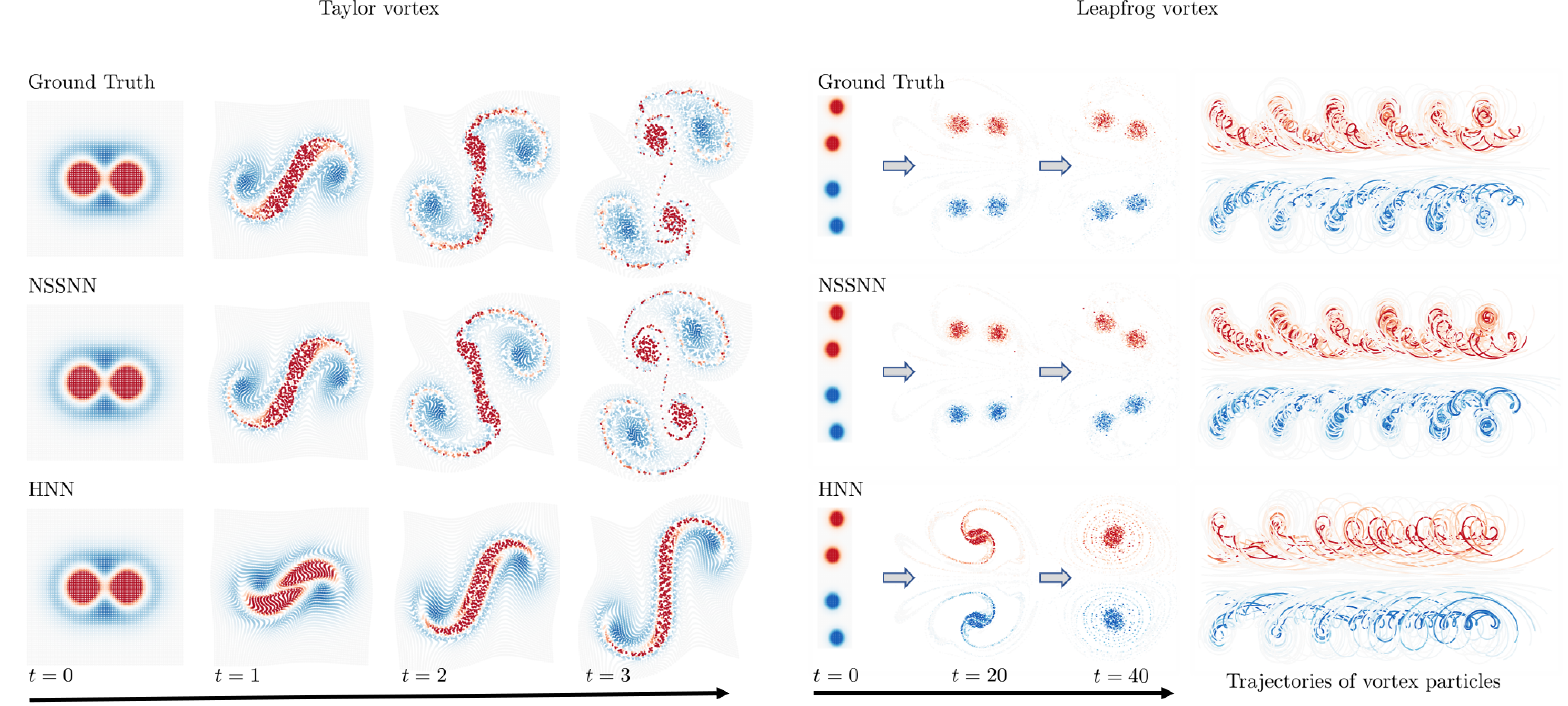}\hfill
  \caption{Taylor and Leapfrog vortex. We generate results of Taylor vortex and Leapfrop vortex using NSSNN and HNN, and compare them with the ground truth. 6000 vortex elements are used with corresponding initial vorticity conditions of Taylor vortex and Leapfrop vortex. Source: \citep{xiong2020nonseparable}.}
  \label{fig:TaylorVortex}
\end{figure}

The setup of the multi-particle problem is similar to the previous problems. The training time span is $T_{train} = 0.01$ while the prediction period can be up to $T_{predict}=40$. We use 2048 clean data samples to train our model. The training process takes about 100 epochs for the loss to converge. In Figure \ref{fig:TaylorVortex}, we use our trained model to predict the dynamics of 6000-particle systems, including Taylor and Leapfrog vortices. We generate results of Taylor vortex and Leapfrop vortex using NSSNN and HNN and compare them with the ground truth. Vortex elements are used with corresponding initial vorticity conditions of Taylor vortex and Leapfrop vortex \citep{Qu2019}. The difficulty of the numerical modeling of these two systems lies in the separation of different dynamical vortices instead of having them merging into a bigger structure. In both cases, the vortices evolved using NSSNN are separated nicely as the ground truth shows, while the vortices merge together using HNN.

\subsection{RoeNet}
\subsubsection{Dataset generation and training settings}

For our experiments, we construct datasets using either analytical solutions or numerical solutions calculated with a high-resolution finite difference method. These datasets are then divided into training and validation sets in a $9:1$ ratio. The physical quantities solved in our experiments are of order $O(1)$ and, consequently, do not require normalization.

We train the network over a time span defined as $T_{train}$ and use it to predict target values over a time span of $T_{predict}$, where $T_{predict} > T_{train}$ and $T_{predict}$ starts no earlier than $T_{train}$.

In all experiments, the Adam optimizer \citep{kingma2014adam} is employed, with a learning rate of $\gamma$ as listed in Table \ref{tab:problems}. The learning rate decays by a multiplicative factor of 0.9 every 5 to 20 epochs. This optimizer is chosen for its ability to adapt learning rates based on the gradient history of each parameter, which facilitates faster and more precise convergence compared to methods with fixed learning rates. Training is conducted with batch sizes ranging from 8 to 32, and all models undergo 100 epochs to ensure convergence. Notably, extending the number of training epochs can enhance training accuracy, reflecting a trade-off between training time and accuracy.


\begin{table}
  \caption{Experimental set-up for RoeNet. Source: \citep{tong2024roenet}.}
  \centering
  \begin{tabular}{lcc}
  \hline
   & 1C Linear & Sod Tube \\
  \hline
  Boundary condition & Periodic & Neumann \\
  Time step $\Delta t$ & 0.02 & 0.001 \\
  Space step $\Delta x$ & 0.01 & 0.005 \\
  Training time span & 0.04 & 0.06 \\
  Predicting time span $>$ & 2 & 0.1 \\
  Data set samples & 500 & 2000 \\
  Data set generation & Analytical & Analytical \\
  Components number $N_c$ & 1 & 3 \\
  Hidden dimension $N_h$ & 1 & 64 \\
  \hline
  \end{tabular}
\label{tab:problems}
\end{table}

\subsubsection{A simple example}\label{sec:simple}

\begin{figure}[ht]
  \centering
  \includegraphics[width=.5\linewidth]{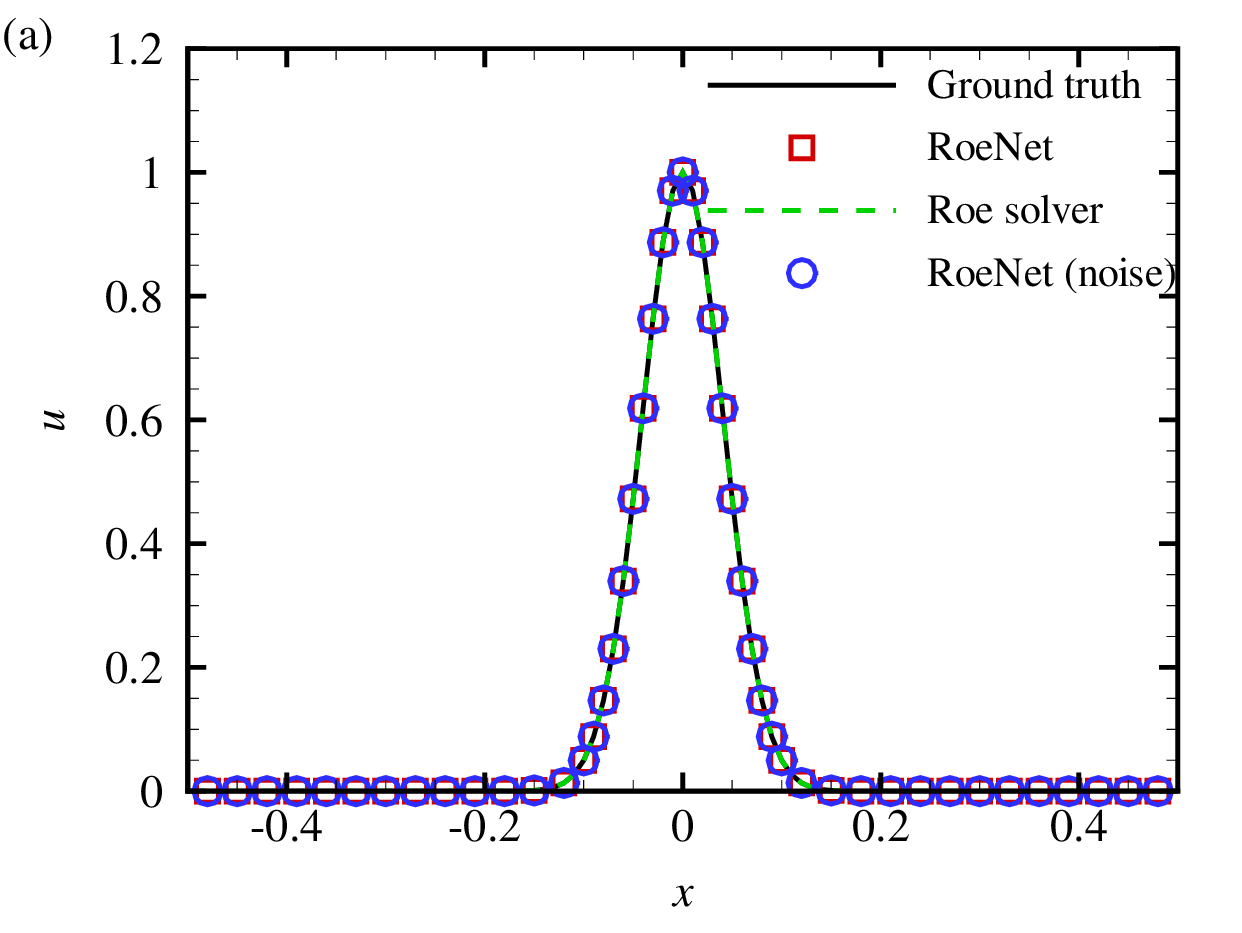}
  \includegraphics[width=.5\linewidth]{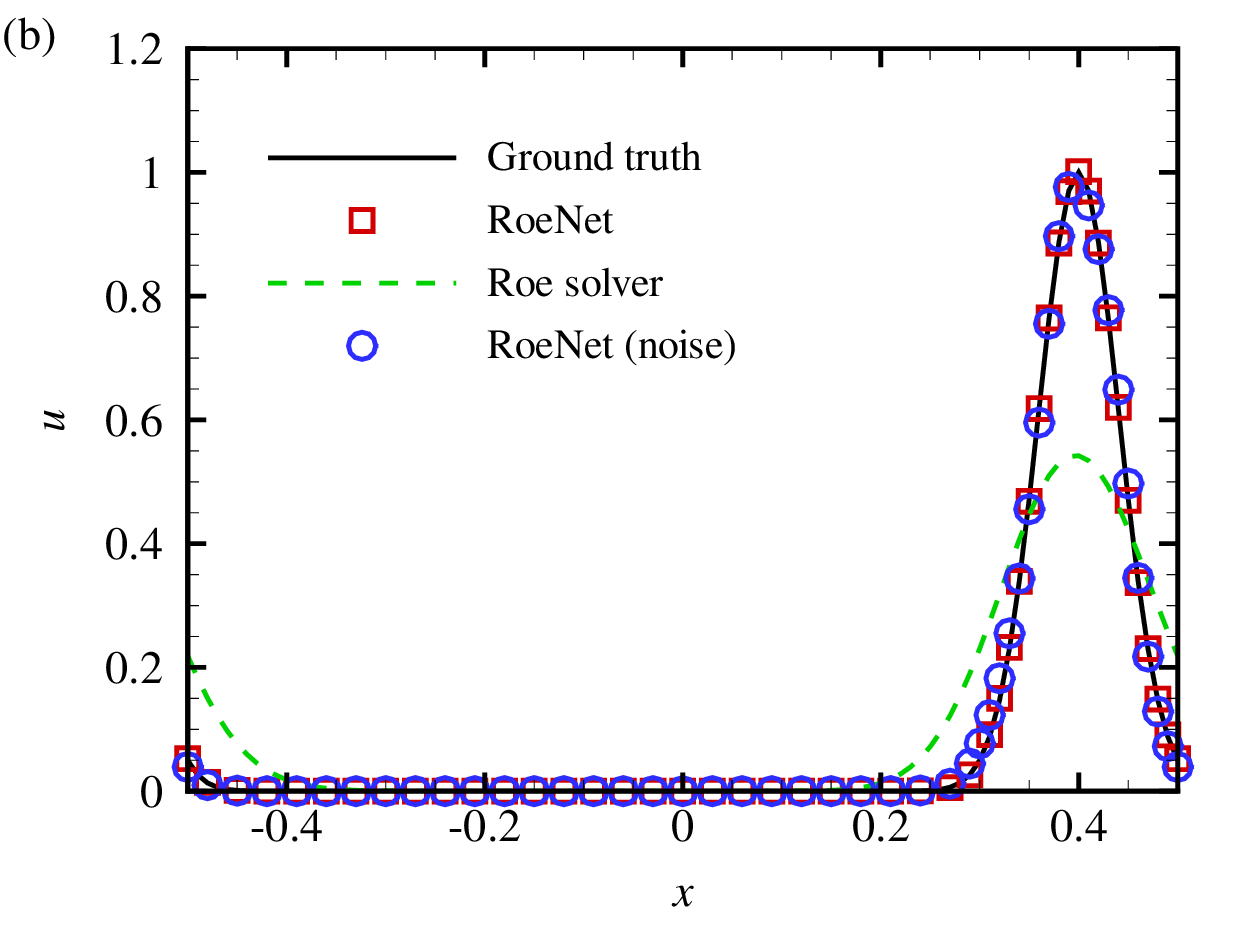}\\
  \includegraphics[width=.5\linewidth]{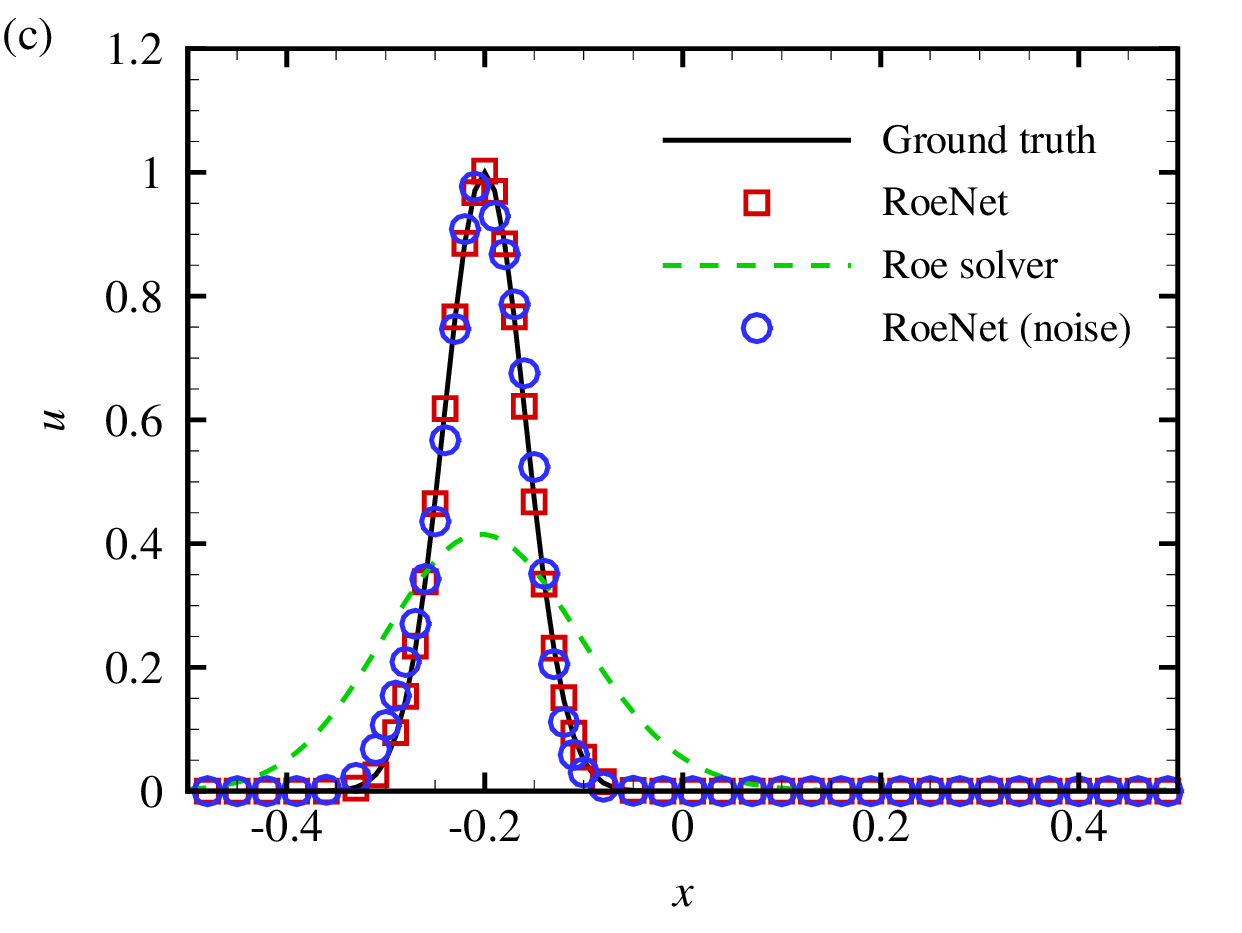}
  \includegraphics[width=.5\linewidth]{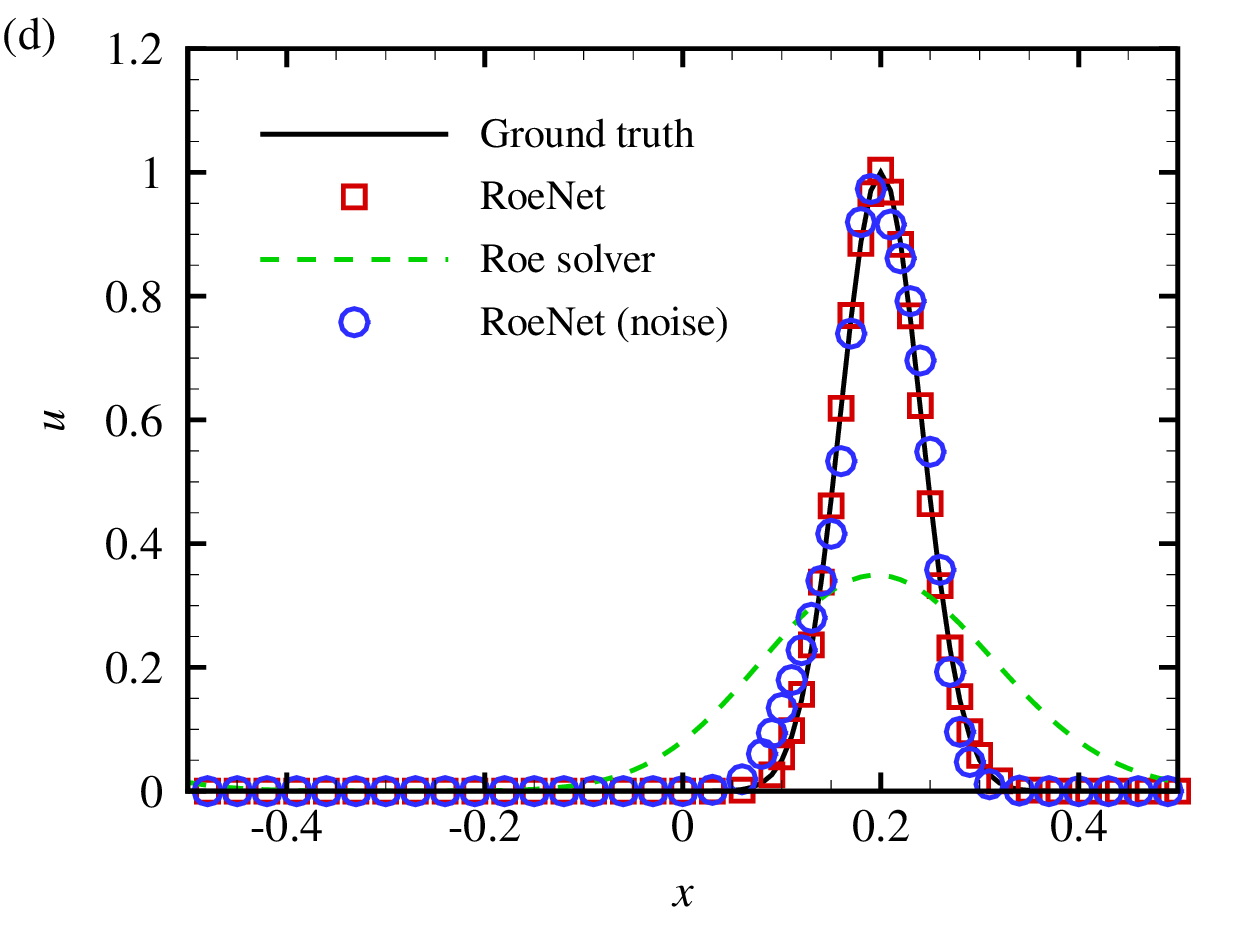}\\
  \caption{Comparison of RoeNet and Roe solver for solving a one component linear hyperbolic PDE (1C Linear in Table \ref{tab:problems}). (a) $t = 0$, (b) $t = 0.4$, (c) $t = 0.8$, (d) $t = 1.2$. The legend ``RoeNet'' and ``RoeNet (noise)'' denote the networks are trained by the clean dataset and the dataset with noise $\epsilon \sim \mathcal{N}(0,0.1)$, respectively. Source: \citep{tong2024roenet}.}
  \label{fig:trivial1C}
\end{figure}

Taking a linear hyperbolic PDE with one component (1C Linear in Table \ref{tab:problems})
\begin{equation}
\begin{dcases}
\bm  F =  u,\\
u(t=0,x) = e^{-300x^2}
\end{dcases}
\label{eq:case1}
\end{equation}
in \eqref{eq:conserv_u}
as an example, we evaluate the performance of RoeNet. 
This hyperbolic PDE models a Gaussian wave traveling along a line at constant speed. Figure \ref{fig:trivial1C} illustrates the propagation of this Gaussian wave over time, simulated using RoeNet with both clean and noisy training data sets, alongside results from the Roe solver and the analytical solution. RoeNet's predictions, regardless of noise in the training data, align closely with the analytical results throughout the entire computational time domain. In contrast, simulations using the Roe solver show rapid flattening and dissipation of the wave over time. Although the prediction error of RoeNet does accumulate gradually, this increase in numerical error is significantly slower than that observed with traditional numerical methods. As a result, RoeNet demonstrates superior performance with its more accurate predictions.

\subsubsection{Sod shock tube}\label{sec:Multi nonlinear}
We take the one-dimensional diatomic ideal gas problem to assess the performance of our model on solving multi-component Riemann problems with nonlinear flux functions (Sode Tube in Table \ref{tab:problems}). Specifically, the system is modeled by \eqref{eq:conserv_u} with
\begin{equation}
\begin{dcases}
\bm u = (\rho,\rho v, e)^T,\\
\bm  F =  [\rho v , \rho v^2+p,v(e+p)]^T,
\end{dcases}
\label{eq:idea}
\end{equation}
where $\rho$ is the density, $p$ is the pressure, $e$ is the energy, $v$ is the velocity, and the pressure $p$ is related to the conserved quantities through the equation of state $p = (\gamma -1)\left(e-0.5 \rho v^2\right)$
with a heat capacity ratio $\gamma\approx 1.4$. We apply our model to the Sod shock tube problem \cite{sod1978survey}, a one-dimensional Riemann problem in the form of \eqref{eq:conserv_u} with \eqref{eq:idea}.
The time evolution of this problem can be described by solving the mass, momentum, and energy conservation of ideal gas inside a slender tube, which leads to three characteristics, describing the propagation speed of various regions in the system \cite{sod1978survey}.
In Figure \ref{fig:nontrivial3c}, we plot the three components of the problem, at $t=0.1$. Note that due to the dissipation effects incorporated in our model, there is no sign of sonic glitch. The result shows that RoeNet exhibits higher accuracy in predicting the discontinuities of the nonlinear Riemann problem.

\begin{figure}[ht]
  \centering
  \includegraphics[width=.33\linewidth]{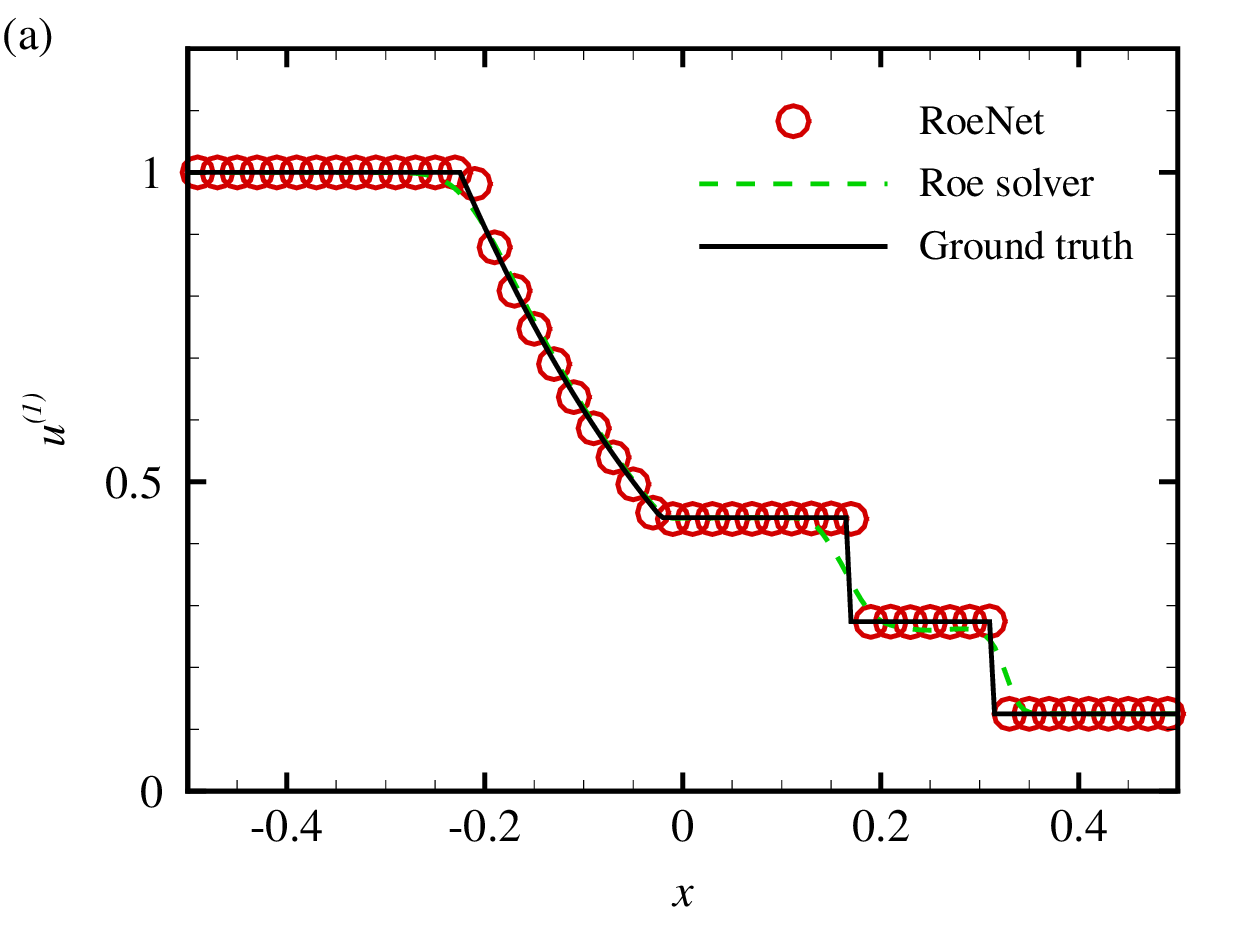}\hfill
  \includegraphics[width=.33\linewidth]{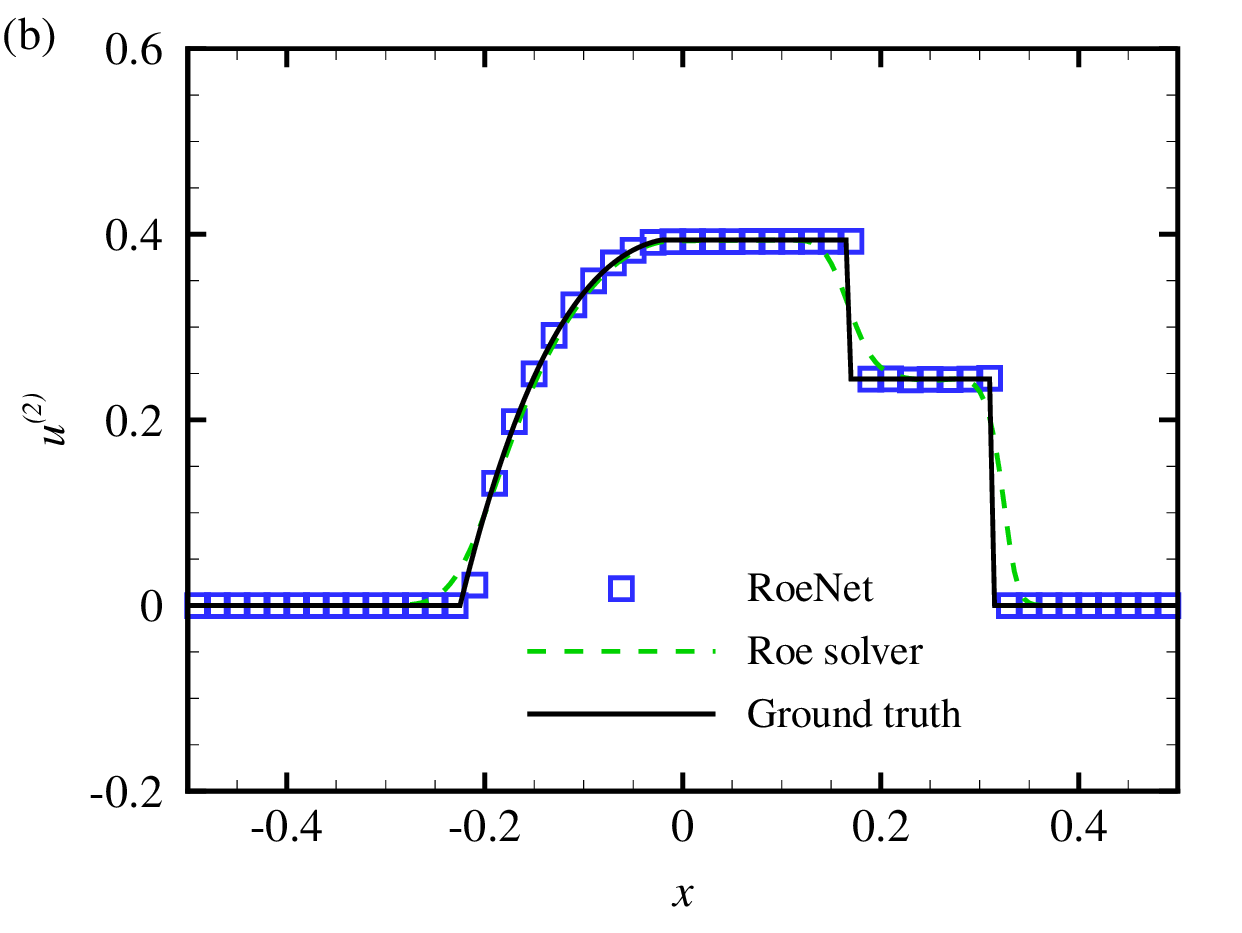}\hfill
  \includegraphics[width=.33\linewidth]{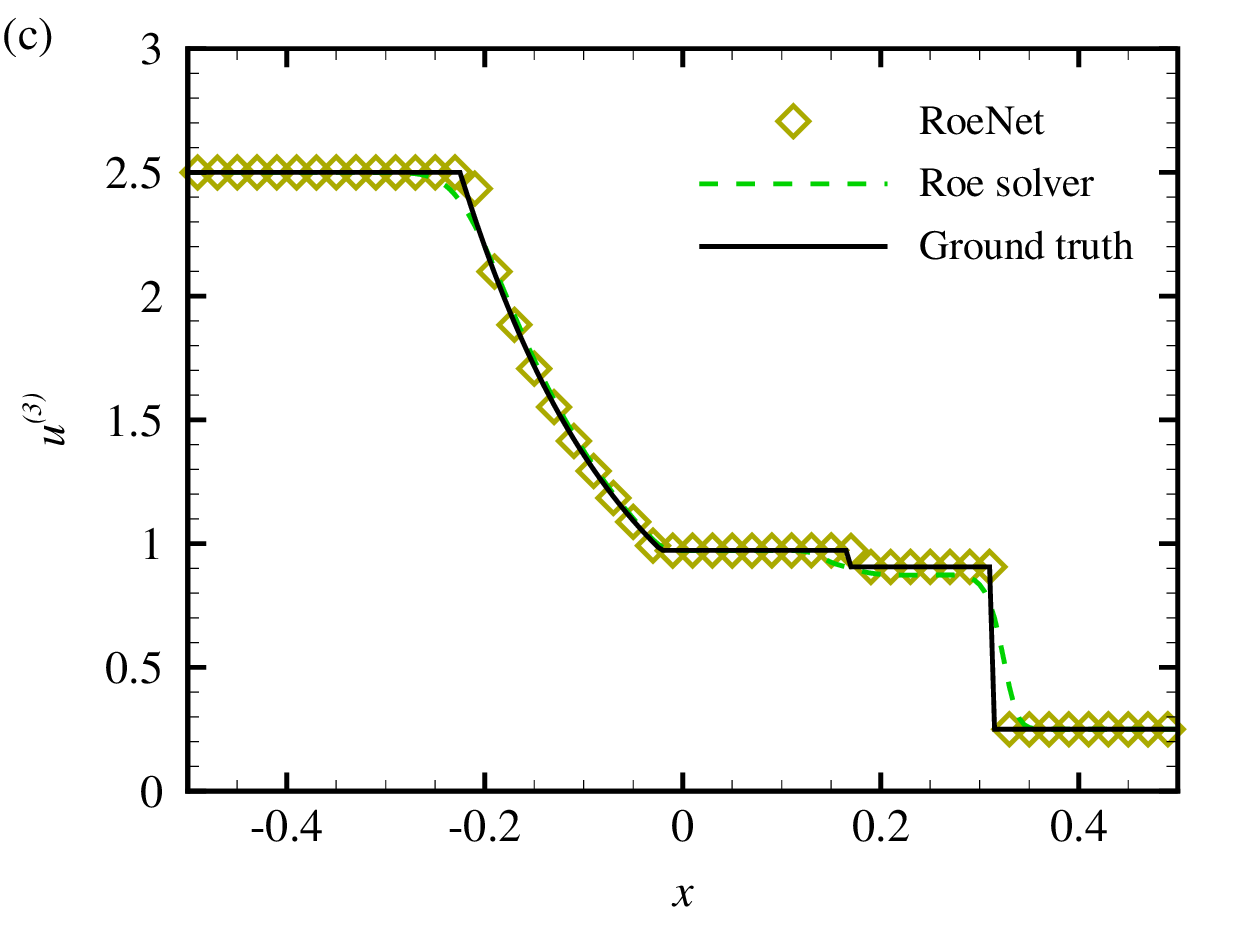}\\
  \caption{Comparison of RoeNet and Roe solver for solving a Riemann problem with three components and a nonlinear flux function (Sod Tube in Table \ref{tab:problems}). (a), (b), and (c) plot the comparison of the prediction results using RoeNet, numerical results solved with Roe solver, and the analytical solutions at $t=0.1$ of the three components $u^{(1)}$, $u^{(2)}$, and $u^{(3)}$, respectively. Source: \citep{tong2024roenet}.}
  \label{fig:nontrivial3c}
\end{figure}

\subsubsection{Comparison with other methods}\label{sec:compare}

Current neural network methods, such as Physics-Informed Neural Networks (PINNs) \cite{Raissi2017}, typically require a pre-established PDE model and continuous interaction with this model during training to adjust the loss, using complex Hessian-based optimizers like L-BFGS that often result in extended training durations. In contrast, RoeNet operates independently of any explicit equation knowledge, utilizing only the training datasets and relying on more efficient gradient-based optimizers such as SGD.

Conventional neural networks struggle to predict the emergence and evolution of discontinuous solutions without a governing equation. Our model, RoeNet, showcases a unique capability to handle tasks that traditional machine learning approaches cannot, particularly in predicting dynamics for future times not included in the training data. This is demonstrated in Figure \ref{fig:DeepXDE}, where RoeNet outperforms PINNs \cite{lu2019DeepXDE} in the simulation of the 1C Linear problem described in Section \ref{sec:simple}, providing accurate predictions for future states beyond the training scope.

\begin{figure}[ht]
  \centering
  \includegraphics[width=1.0\linewidth]{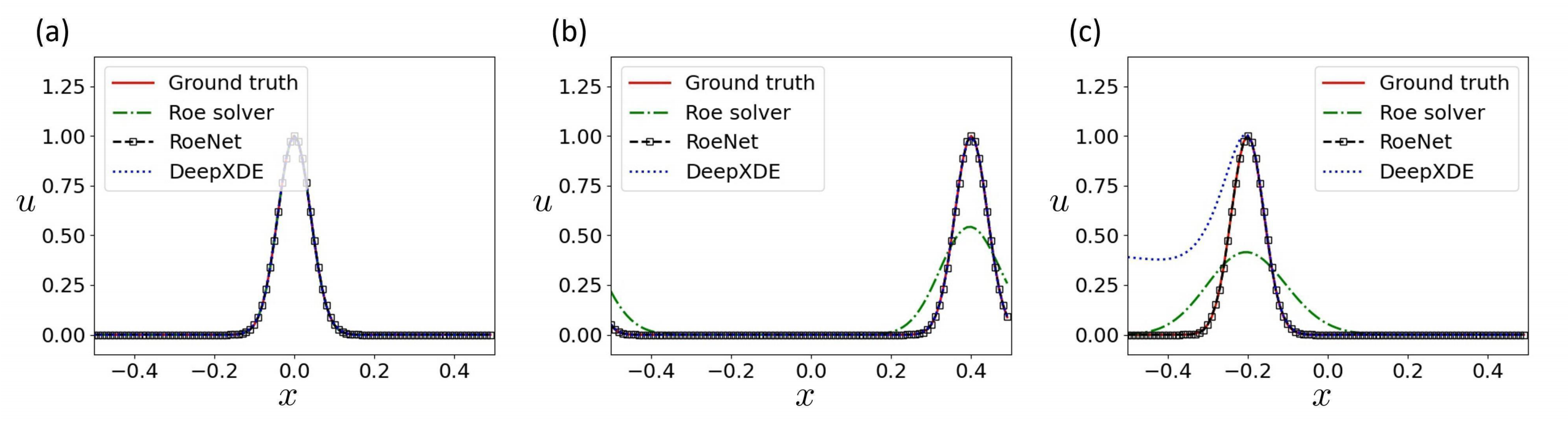}\hfill
  \caption{Comparion of the numerical results solved with Roe solver, prediction results using RoeNet and PINNs \cite{lu2019DeepXDE} at (a) $t = 0$, (b) $t = 0.4$, (c) $t = 0.8$ of the problem 1C Linear in Table \ref{tab:problems}. Source: \citep{tong2024roenet}.}
  \label{fig:DeepXDE}
\end{figure}

RoeNet, as a data-driven solver, does not require prior knowledge of the system's evolution equations, setting it apart from traditional numerical methods. It employs an optimization-based approach to construct its numerical scheme, with an optimization space that fully encompasses that of the Roe solver. This enables RoeNet to deliver more precise simulations of PDE evolution compared to conventional numerical approaches.

\subsection{NVM}
\subsubsection{Dataset generation  and training settings}\label{subsec:data_gen}

We randomly sample 2 to 6 vortices and create the initial vorticity field through convolution with a Gaussian kernel $\sim\mathcal{N}(0,0.01)$. This process is repeated 2000 times to generate $N_s = 2000$ samples. DNS is performed to solve \eqref{eq:uNS} in the periodic box using a standard pseudo-spectral method \citep{Rogallo1981}.
Aliasing errors are removed using the two-thirds truncation method with the maximum wavenumber $k_{\max}\approx N/3$.
The Fourier coefficients of the velocity are advanced in time using a second-order Adams--Bashforth method. The time step is chosen to ensure that the Courant--Friedrichs--Lewy number is less than $0.5$ for numerical stability and accuracy. To obtain accurate DNS data samples, we set the grid size as $N=1024$. Regarding the kinematic viscosity, we set $\nu=0$ and $\nu=0.001$ for different cases.
The pseudo-spectral method used in this DNS is similar to that described in
\citep{Xiong2017,Xiong2019,Xiong2020}.

We use $N_{train}= 0.9 N_s = 1600$ samples with the time span $T_{train}$ for the training of the dynamics network. The DNS dataset is generated with random initial conditions independent of the predicted vortex evolution. The time step of vortex evolution is set as $\textrm{d}t$. For the leapfrog example, we set the parameters as $T_{train}=1$ and $\textrm{d}t=0.001$.
For the turbulent flow example, we set the parameters as $T_{train}=0.001$ and $\textrm{d}t=0.001$.
For other examples, the parameters are set as $T_{train}=0.2$ and $\textrm{d}t=0.1$.
In general, the parameters are chosen within a wide range, indicating the robustness of the network.
We use the trained network to predict the vortex dynamics at time $T_{predict}$. We show that the prediction time span $T_{prediction}$ is larger than the training time span $T_{train}$ in the results section, in some cases up to tens of times of $T_{train}$.

For both the detection network and the dynamics network, we use Adam optimizer \citep{kingma2014adam} with a learning rate of 1e-3. The learning rate decays every 20 epochs by a multiplicative factor of 0.8.
For the detection network, we use a batch size of 32 and train it for 350 epochs. We use the cross entropy as the classification loss and L1 loss for position prediction. To relieve the unbalanced data problem in the detection network, we implement Focal loss \citep{lin2017focal} with $\alpha=0.4$ and $\gamma=2$. It takes 15 minutes to converge on a single Nvidia RTX 2080Ti GPU.
For the dynamics network, we use a batch size of 64 and train it for 500 epochs. We use L1 loss for position prediction. It takes 25 minutes to converge on a single Nvidia RTX 2080Ti GPU.

\subsubsection{Comparison between NVM and LVM}
\begin{figure}
  \centering
  \includegraphics[width=1.0\textwidth]{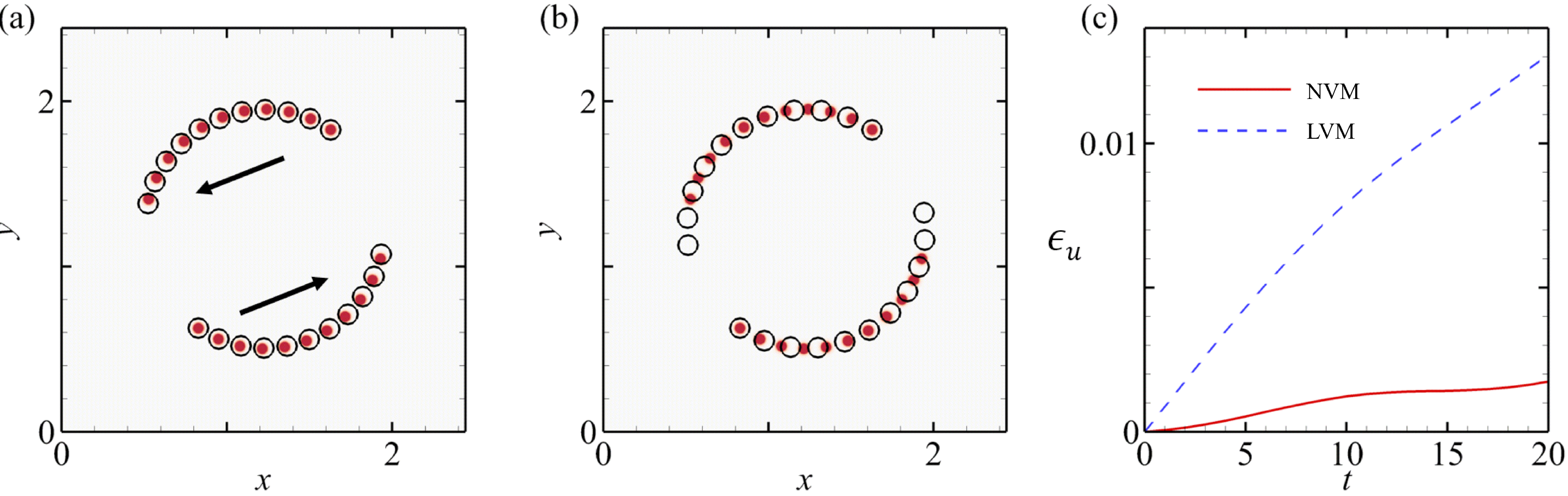}\\
  \caption{Comparison of NVM and LVM for solving NS equations in the periodic box. (a)  NVM, (b) LVM, and (c) The relative error of velocity in flow simulation. The red dots indicate the positions of 2 vortices at different time steps generated by DNS. The black circles in (a) and (b) are the prediction and simulation results of the NVM and LVMs, respectively. The black arrows indicate the directions of the motions of the 2 vortices. Source: \citep{xiong2023neural}. }
  \label{fig:uError}
\end{figure}

To demonstrate that NVM is a better approach to capturing fluid dynamics than the traditional LVM, we compare the prediction results of the NVM and the LVM for solving NS equations in the periodic box.
In the prediction, we initialize two vortex particles at $\bm X_1 = (\pi - 0.4,\pi -0.6)$ and $\bm X_2 = (\pi + 0.4,\pi +0.6)$, where the corresponding particle strength are $\Gamma_1=0.75$ and  $\Gamma_2 =0.75$.
We plot the results using the NVM and LVM and the relative error of velocity in the simulation in Figure \ref{fig:uError} (a), (b), and (c), respectively. Here, the relative error of velocity is defined as
\begin{equation}
\epsilon_u = \frac{\|\bm u_{predict}-\bm  u_{true}\|_{L^2}}{\|\bm  u_{true}\|_{L^2}},
\end{equation}
where $\bm u_{predict}$ denotes the predicted or simulated solution and $\bm  u_{true}$  denote the ground truth solution.

It is quite obvious that in Figure \ref{fig:uError} (a), the predictions made by NVM match the positions of vortices generated by DNS almost perfectly, while the predictions made by BS law in Figures \ref{fig:uError} (b) contain a large error. The divergence of the relative error of velocity is shown in Figure \ref{fig:uError} (c), which shows that the NVM outperforms traditional methods by increasing amounts as the predicting period becomes longer.

\subsubsection{Turbulent flows}

\begin{figure}
  \centering
  \includegraphics[width=0.8\textwidth]{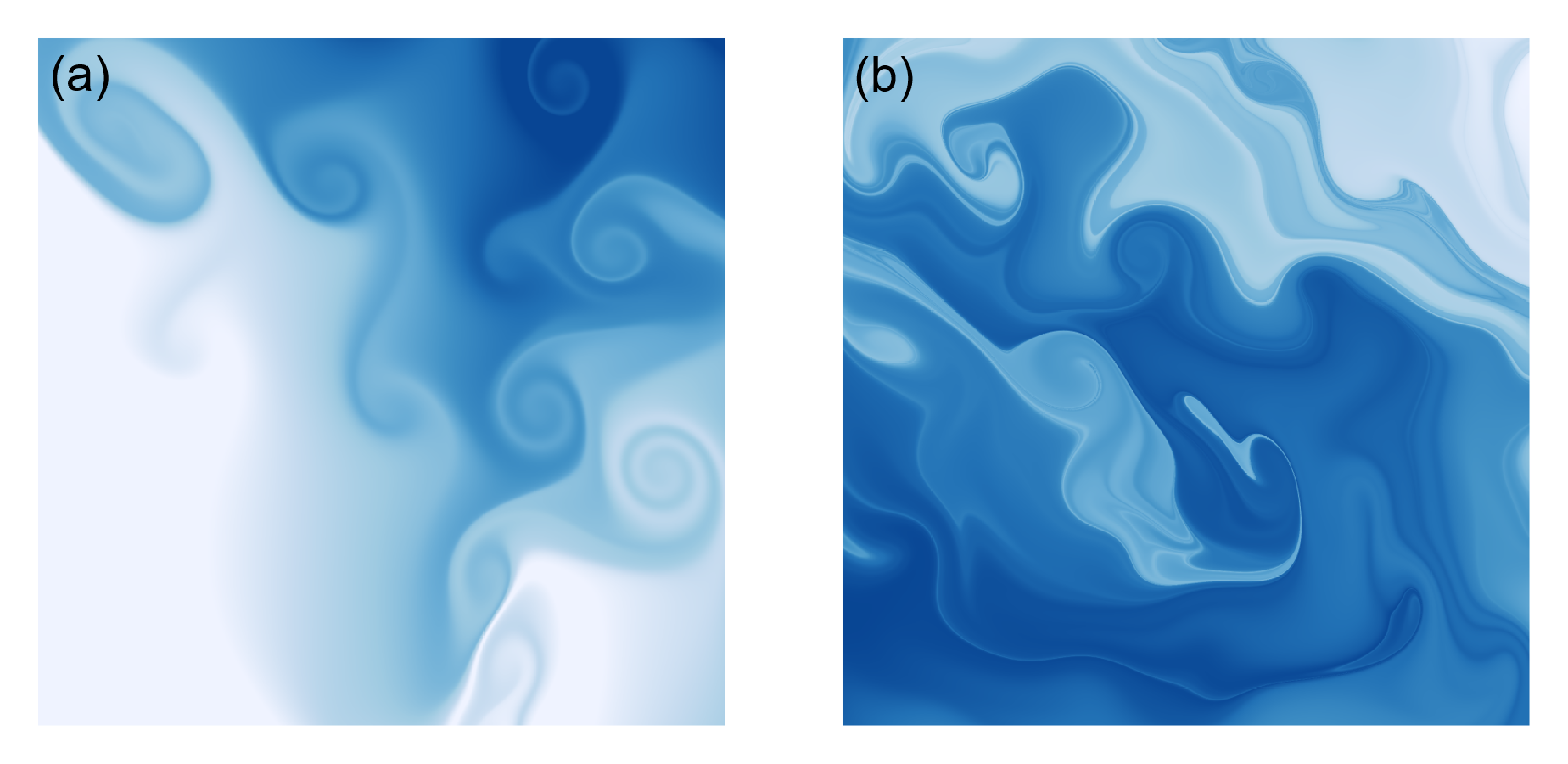}\\
  \caption{Two-dimensional Lagrangian scalar fields at $t=1$ with the initial condition $\phi = x$ and resolution $2000^2$. The evolution of the Lagrangian scalar fields is induced by (a) $O(10)$ and (b) $O(100)$ random NVM vortex particles. Source: \citep{xiong2023neural}.}
  \label{fig:turbulence}
\end{figure}
Besides simple systems, NVM is capable of predicting complicated turbulence systems.
This example's primary purpose is to illustrate our network's ability to handle more complex problems.

Figure \ref{fig:turbulence} depicts the two-dimensional Lagrangian scalar fields at $t=1$ with the initial condition $\phi = x$ and resolution $2000^2$.
The governing equation of the Lagrangian scalar fields is
\begin{equation}
    \frac{\partial \phi}{\partial t}+\bm u \cdot \bm \nabla \phi = 0.
    \label{eq:phi_L}
\end{equation}
The evolution of the Lagrangian scalar fields is induced by $O(10)$ and $O(100)$ NVM vortex particles at random positions $\backsim U(0,4)$ with random strengths $\backsim U(0,2)$.
We remark that the same trained model is used for both cases. There is no correlation between the positions and vortex particle strengths of the two sets of vortex particles.

Based on the particle velocity field from the NVM, a backward-particle-tracking method is applied to
solve \eqref{eq:phi_L}. Then the iso-contour of the Lagrangian field can be extracted as material structures in the evolution \citep{Yang2010a,Yang2011a,ZhaoYangChen2016a,ZhengYangChen2016,Zheng2019}. In Figure \ref{fig:turbulence} (a), the spiral structure \citep{Lundgren1982, Lundgren1993} of individual NVM vortex particles can be observed clearly due to the small number of NVM vortex particles. In Figure \ref{fig:turbulence} (b), the underlying field exhibits turbulent behaviors since it is generated with a large number of NVM vortex particles.

Generally, the high-resolution results shown in Figure \ref{fig:turbulence} can only be achieved by supercomputation using grid-based methods \citep{Yang2010a}, while NVM allows these to be generated on any laptop with GPU. 
We demonstrate that NVM is capable of generating an accurate depiction of complex turbulence systems with low computational costs.

\section{Conclusion}

\subsection{Summary}

This thesis introduces a novel data-driven framework, which demonstrates a significant advancement in predictive modeling for long-term forecasts by integrating physics-based priors into learning algorithms. This integration ensures intrinsic preservation of the physical structures of the systems analyzed, thereby maintaining mathematical symmetries and physical conservation laws. As a result, the models demonstrate superior performance in terms of prediction accuracy, robustness, and predictive capability, particularly in identifying patterns not present within the training dataset, despite the use of small datasets, short training periods, and small sample sizes.

In particular, we have developed four distinct algorithms, each designed to incorporate specific physics-based priors relevant to different types of nonlinear systems. These include the symplectic structure for both separable and nonseparable Hamiltonian systems, Hyperbolic Conservation Law for hyperbolic partial differential equations, and Helmholtz’s Theorem for incompressible fluid dynamics. The integration of physics-based priors not only narrows the solution space, thereby streamlining computational demands, but also enhances the reliability and validity of the predictions. Moreover, embedding these structures within neural networks significantly expands their capacity to capture and reproduce complex patterns inherent in physical phenomena, which conventional networks often fail to recognize. This expanded capability allows for a more comprehensive representation of potential physical behaviors, substantially improving the models' applicability and predictive accuracy.

\subsection{Limitations and Future Work}
We also recognize our models have several limitations. Firstly, neural networks that include an embedded integrator often require a longer training period compared to those trained on datasets with explicit time derivatives. Secondly, our method employs an explicit scheme for time evolution, which necessitates a small time step to ensure accuracy. Although a smaller time step can lead to higher discretization accuracy, this advantage must be weighed against increased training costs and the risk of gradient explosion. In our future work, we are considering the adoption of implicit formats, such as leveraging RNN structures, which may offer more stability and efficiency. In addition, our current model is designed as an end-to-end system that does not account for environmental variability. To address this issue, we will explore online learning techniques to enhance the model's adaptability in changing conditions. Lastly, To enhance the applicability of our model, a significant focus of our future research will be dedicated to developing scalable methods that can be generalized to various PDEs, aiming to achieve a versatile and universally applicable framework for various systems.

\bibliographystyle{unsrtnat}
\bibliography{refs}

\end{document}